\theoremstyle{plain}
\newtheorem{theorem}{Theorem}
\newtheorem{lemma}{Lemma}
\newtheorem{proposition}{Proposition}
\newtheorem{corollary}{Corollary}
\theoremstyle{definition}
\newtheorem{remark}{Remark}
\newtheorem*{remark*}{Remark}
\newcommand{\reals}{{\mathbb{R}}}
\newcommand{\naturals}{{\mathbb{N}}}
\newcommand{\identity}{\mathbf I}
\newcommand{\Expect}{\mathbb{E}}
\newcommand{\expect}[1]{\mathbb{E}\left[ #1 \right]}
\newcommand{\Prob}{\mathbb{P}}
\newcommand{\ie}{i.e.\xspace}
\newcommand{\iid}{i.i.d.\xspace}
\newcommand{\pth}[1]{\left( #1 \right)}
\newcommand{\qth}[1]{\left[ #1 \right]}
\newcommand{\sth}[1]{\left\{ #1 \right\}}
\newcommand{\linf}[1]{\left\|{#1} \right\|_{\infty}}
\newcommand{\lnorm}[2]{\left\|{#1} \right\|_{{#2}}}
\newcommand{\Fnorm}[1]{\lnorm{#1}{\rm F}}
\newcommand{\fnorm}[1]{\|#1\|_{\rm F}}
\newcommand{\opnorm}[1]{\left\| #1 \right\|_2}
\newcommand{\iprod}[2]{\left \langle #1, #2 \right\rangle}
\newcommand{\indc}[1]{{\mathbf{1}_{\left\{{#1}\right\}}}}
\newcommand{\sfD}{{\mathsf{D}}}
\newcommand{\sfH}{{\mathsf{H}}}
\newcommand{\sfI}{{\mathsf{I}}}
\newcommand{\sfK}{{\mathsf{K}}}
\newcommand{\sfQ}{{\mathsf{Q}}}
\newcommand{\calB}{{\mathcal{B}}}
\newcommand{\calC}{{\mathcal{C}}}
\newcommand{\calF}{{\mathcal{F}}}
\newcommand{\calG}{{\mathcal{G}}}
\newcommand{\calH}{{\mathcal{H}}}
\newcommand{\calK}{{\mathcal{K}}}
\newcommand{\calL}{{\mathcal{L}}}
\newcommand{\calM}{{\mathcal{M}}}
\newcommand{\calN}{{\mathcal{N}}}
\newcommand{\calP}{{\mathcal{P}}}
\newcommand{\calR}{{\mathcal{R}}}
\newcommand{\calS}{{\mathcal{S}}}
\newcommand{\calX}{{\mathcal{X}}}
\renewcommand{\tilde}{\widetilde}
\begin{document}
\title{One-pass Stochastic Gradient Descent \\ in Overparametrized Two-layer Neural Networks}
\author{Jiaming Xu and Hanjing Zhu \thanks{
J.\ Xu and H.\ Zhu are with The Fuqua School of Business, Duke University, Durham NC, USA, 
            \texttt{\{jx77,hanjing.zhu\}@duke.edu}.
             This research is in part supported by the NSF Grants IIS-1838124, CCF-1850743, and CCF-1856424.
            A preliminary conference version will appear in the proceedings of The 24th International Conference on
Artificial Intelligence and Statistics (AISTATS 2021).}
            }
 \date{\today}

\maketitle

\begin{abstract}
There has been a recent surge of interest in understanding the convergence of gradient descent (GD)  and stochastic gradient descent (SGD)  in overparameterized neural networks.
Most previous works assume that the training data is provided a priori in a batch, while less attention has been paid to the important setting where the training data arrives in a stream. 
In this paper, we study the streaming data setup and show that with overparamterization and random initialization, the prediction error of two-layer neural networks under one-pass SGD converges in expectation. The convergence rate depends on the eigen-decomposition of the integral operator associated with the so-called neural tangent kernel (NTK). A key step of our analysis is to show a random kernel function converges to the NTK with high probability using the VC dimension and McDiarmid's inequality.
\end{abstract}

\section{Introduction}
Deep Learning is proven to be successful in many real-life applications, while the underpinning of its success remains elusive.
Recently, researchers are interested in understanding the success of neural networks from the optimization perspective. A neural network with Rectified Linear Units (ReLU) activation leads to a non-convex and non-smooth objective function, which is usually hard to optimize by gradient descent methods. However, surprisingly, in many cases, gradient descent (GD) or stochastic gradient descent (SGD) on neural networks with ReLU activation is observed to perform well not only in training but also in generalization \cite{krizhevsky2012}. 
To demystify this surprising phenomenon,
an extensive amount of research has been done recently. 
For instance,  the mean-field theory is used in \cite{chen2020mean,mei2018,mei2019} to analyze the SGD of infinite-width two-layer neural networks.
Optimal transport theory is employed in \cite{chizat2018global} to study the gradient flow of neural networks and show that the training error converges to the global optimum under some mild conditions. In addition, \cite{hu2019diffusion} connects the SGD of neural networks in training to the diffusion process.

A different line of works focuses on understanding 
the gradient descent of neural networks through kernels, in particular the neural tangent kernel (NTK). It is first introduced by \cite{jacot2018neural}, which shows that gradient descent on infinite width neural networks can be viewed as learning through the NTK. 
Subsequent works
\cite{allen2019convergence,du2018gradient,su2019learning,arora2019fine,du2019gradient,zou2020gradient} connect GD and SGD with the NTK, and show that with overparameterization and random initialization, the training error 
converges to $0$. Similar convergence results are also established in 
other types of neural networks beyond
the feed-forward neural networks
\cite{allen2020backward,allen2019can,allen2019canrecurrent,allen2019recurrent,du2018many,li2019enhanced},
such as convolutional neural networks (CNN) and residual neural networks (ResNet).

Despite the remarkable progress, most previous works focus on the batch setting where the training data is provided a priori in a batch. Less attention has been paid to the 
important streaming setting, where the
data arrives continuously in a stream.
The streaming data arises in a variety of fields such as finance, news organization, and information technology~\cite{o2002streaming,allen2019can,ik2007survey}. 
Such streaming data is usually inspected once and archived afterwards immediately without being examined again.
Apart from vast sources of naturally generated streaming data, there are ubiquitous situations where the streaming data is preferred even though batches of samples can be obtained. For instance, \cite{o2002streaming} points out that in medical or marketing data mining,
the volume of data is so large that only one pass over data is allowed due to computation constraints.
 Moreover, \cite{fe2001secure,mu2005data} argues that the streaming data is useful in privacy-preserving data mining, 
 where the data is kept confidentially by users and analyzed via a single pass.

In this paper, we study the streaming data setup
where $\iid$ data points $(X_t, y_t)$ ($X_t \in \reals^d $ is feature, and $y_t \in \reals$ is the corresponding label) arrive in a stream. We consider the two-layer neural network 
with ReLU activation and run the 
stochastic gradient descent on the streaming data in a single pass to train the neural network under the quadratic loss.  Our goal is to study the convergence of the average prediction error. We do not consider the use of sliding window \cite{tashman2000out} which views a trunk of consecutive data points as a single input to the neural network \footnote{In the streaming data setting where the data is not allowed to be stored, the sliding window is not applicable. Even when it is allowed, due to the $\iid$ data assumption, it can be equivalently viewed as one-pass SGD with mini-batches to which our analysis still applies \cite{dehghani2019quantitative}.}.
The contributions of this paper are summarized as follows:
\begin{itemize}

\item We show that with random initialization and an appropriate step size $\eta_t\leq \frac{\theta}{t+1}$ for $\theta<\frac{1}{4}$, 
if the number of neurons $m \ge \text{poly}(T, d, 1/\delta)$,
then with probability at least $1-\delta-2\exp(-2m^{1/3})$, 
the average prediction error at iteration $T$ is upper bounded by $\prod^T_{t=1}(1-\eta_t\lambda_\ell)\opnorm{\Delta_0}+\calR(\Delta_0,\ell)+O(\theta \sigma_0)$ for every $\ell \ge 1$,
where $\lambda_\ell$ is the $\ell$-th eigenvalue of the integral operator $\sf\Phi$ associated with the NTK $\Phi$,
$\Delta_0$ is the prediction error at initialization, 
$\calR(\Delta_0,\ell)$ is the $L_2$ norm of the projection
of $\Delta_0$
onto the space spanned by the eigenfunctions
corresponding to the eigenvalues $\{\lambda_i\}_{\ell+1}^\infty$,
and $\sigma^2_0$
is the average squared prediction error at initialization. In particular, for an arbitrarily small but fixed constant $\epsilon>0$, by choosing $\theta$ and $\delta$ to be sufficiently small, while
$T$ and $m$ to be sufficiently large, 
the average prediction error is at most $\epsilon.$


\item On a technical front, our analysis departs significantly from the existing literature. Specifically, in the batch setting, the existing literature such as \cite{du2018gradient} and \cite{su2019learning} only need to deal with the kernel matrices 
and thus simple point-wise concentration plus union bound is enough to obtain the convergence of random kernel matrices with
high probability.
However, in the streaming data setup,
such techniques are not directly applicable to prove the convergence of kernel functions. As such, 
we employ
the VC dimension technique and McDiarmid's inequality to show that a random kernel function converges to the NTK 
with high probability. 
\end{itemize}


\paragraph{Notation} 
Let $(\calX, \mu)$ denote a measurable space with
measure $\mu$. 
Let $L^2(\calX,\mu)$ 
denote the space of  functions 
$f: \calX \to \reals$ that are integrable, \ie,
$\opnorm{f}\triangleq \sqrt{\int f^2(x) d\mu(x)}<\infty.$
When $\calX$ is the unit sphere $\mathbb{S}^{d-1}$ in $\reals^d$, we abbreviate 
$L^2(\mathbb{S}^{d-1},\mu)$ as $L^2(\mu)$
for simplicity. 
Define the $L$-infinite norm $\|f\|_\infty \triangleq
\sup_{x\in \calX} |f(x)|$. 
Given $f, g \in L^2(\calX, \mu)$, 
define their inner products 
as $\langle f,g\rangle \triangleq \int f(x)g(x)d\mu(x)$ with
$\langle f, f\rangle=\opnorm{f}^2.$
Given a kernel function $K \in L^2(\calX \times \calX, \mu \otimes \mu)$, define the associated integral operator
$\mathsf{K}: L^2(\calX, \mu) \to L^2(\calX, \mu)$
as $\mathsf{K} f(x)=\int K(x,y) f(y) d \mu(y)$. 
The operator norm of $\mathsf{K}$ is defined as
$\opnorm{\sfK}\triangleq
\sup_{\opnorm{f}\leq 1} \opnorm{\sfK f}$.
Denote the composition of operators 
$\sfK_1,\sfK_2,\cdots,\sfK_m$
as $\prod^m_{i=1} \sfK_i$
with $\prod^n_{i=n+1}\sfK_i$
treated as the identity operator. 


\section{Related Work} \label{related_work}


To facilitate the discussion and better differentiate the algorithms, we use batch-SGD to denote the gradient descent algorithm where a sub-sample is drawn without replacement from the given batch to compute the gradient at each iteration, \ie, for the given batch $\sth{(x_i,y_i)}^n_{i=1}$ and a loss function $l(\cdot,\cdot)$, $W(t+1)=W(t)-\frac{\eta_t}{|\calB_t|}\sum_{i \in \calB_t }\nabla_W l(f(x_i;W(t)),y_i)$ where $W(t)$ is the weight matrix at iteration $t$, $f(x;W(t))$ is the neural network with parameter $W(t)$ and $\calB_t$ is a random subset of the batch $\sth{(x_i,y_i)}^n_{i=1}$. The data in $\calB_t$ may be reused in later iterations. In the special case
where the entire batch is used to compute the gradient at each iteration, \ie,  $\calB_t=\sth{(x_i,y_i)}^n_{i=1}$ for any $t$, we refer the batch-SGD to GD.
In contrast, our study focuses on the one-pass SGD, abbreviated as SGD, which draws a \emph{single} fresh sample from the true data distribution to compute the gradient at each iteration. In particular, $W(t+1)=W(t)-\eta_t \nabla_W l(f(x_t;W(t)),y_t)$ where $(x_t,y_t)$ is a freshly drawn sample at the $t$-th iteration from some unknown distribution $\mu$. The drawn sample $(x_t,y_t)$ is then archived and not used any more. Most existing literature focuses on the batch setting and uses GD/batch-SGD to train neural networks.


\paragraph{Training error with batch learning.}
In \cite{du2018gradient}, the training error of overparametrized neural networks is shown to converge at linear rate $\qth{1-\frac{\eta}{2}\lambda_{\text{min}}(H)}^t$, where $t$ is the number of iterations, $\eta$ is the step size, and $H \in \reals^{n \times n}$ is the Gram matrix of the neural network with $H_{ij}=\Phi(x_i, x_j) =x^\top_ix_j\Expect_{w\sim N(0,I)}\qth{\indc{\langle w, x_i\rangle \geq 0}\indc{\langle w, x_j \rangle \geq 0}}$. Furthermore, \cite{du2019gradient} extends the result to multi-layer neural networks with analytic activation functions, 
by utilizing the Gram matrix $H$ of the last hidden layer. Despite these positive results, \cite{su2019learning} proves that
as the sample size $n$ grows, 
$\lambda_{\text{min}}(H)$ decreases to $0$
and hence 
the convergence rate can be very close to $0$. 
Furthermore, \cite{su2019learning} proves that the training error
is upper bounded by $\qth{1-\frac{3\eta}{4} \lambda_r}^t + 2\sqrt{2}\calR\pth{f^*,r}+\Theta\pth{\frac{1}{\sqrt{n}}}$, where $\lambda_r$ is the $r$-th largest eigenvalue of the integral operator
$\sf\Phi$ associated with the NTK $\Phi$, $\calR(f^*,r)$ is the $L_2$ norm of the projection of  $f^{*}$ onto the eigenspaces of kernel $\sf\Phi$ associated with $\{\lambda_i\}_{i=r+1}^\infty$. 
Despite that the result in \cite{su2019learning} and our result share some similarity in terms of the eigen-decomposition of the NTK, our study differs from \cite{su2019learning} in two important aspects. First, the algorithm used in \cite{su2019learning} is GD while ours is one-pass SGD. A significant challenge for us is to control the accumulation of the noise due to the stochasticity of the gradients.  
Moreover, the focus of \cite{su2019learning} is on training error, while we focus on the average prediction error and has to deal with the
convergence of random kernel functions.
As for the batch-SGD, both \cite{allen2019convergence} and \cite{zou2019improved} show the training error of over-parametrized deep neural networks converges to 0. However, in both works, after proper scaling of the number of neurons $m$, the step size needed is still of order $O\pth{\frac{1}{\log m}}$. 
For over-parametrized neural networks, this leads to an extremely small step size that is not commonly used in practice \cite{bengio2012practical}. In contrast, 
in our study, the step size does not decay 
with the number of neurons $m$. 

\paragraph{Generalization error with batch learning.} Following \cite{du2018gradient}, \cite{arora2019fine} derives an upper bound of the generalization error of over-parametrized two-layer neural networks under GD as $\sqrt{\frac{2y^\top H^{-1}y}{n}}+O\pth{\sqrt{\frac{\log \pth{n/\lambda_{\text{min}}(H)}}{n} }}$, where $y \in \reals^n$ is the label of the sample.
As mentioned above, $\lambda_{\text{min}}(H)$ decreases to $0$
and hence the generalization error blows up to infinity 
as $n$ grows.
In \cite{ma2019generalization}, the authors consider the minimum-norm estimator $\pth{\widehat{a},\widehat{W}}=\text{argmin}\sth{\frac{1}{m}\sum^m_{i=1}|a_i| \cdot \|w_i\|_1: R_n(a,W)=0}$ for two-layer ReLU activated neural networks $f(x;a,W)=\frac{1}{m}a^\top \sigma(W x)$, where $R_n=\frac{1}{2n}\sum^n_{i=1}\pth{f(x_i;a,W)-y_i}^2$ is the empirical loss over the batch $\sth{(x_i,y_i)}^n_{i=1}$. They show that a generalization error of order $O\pth{\sqrt{\frac{\log(2d)}{n}}}$ can be achieved, provided that the number of neurons $m\geq \frac{2n^2\log(4n^2)}{\lambda^2_{\text{min}}(H)}$. However, how to efficiently compute such minimum-norm estimator is unknown, while the estimator from one-pass SGD is easy-to-compute and also widely used in practice.

\paragraph{Generalization error with streaming data}
Similar to our work, \cite{cao2019generalization} also considers the one-pass SGD in the streaming setting. The authors apply the online-to-batch conversion proposed in \cite{cesa2004generalization} to bound the generalization error $\frac{1}{T}\sum^T_{s=1}\Expect_{(X,y)}\qth{\indc{y f(X;W(s))<0}}$ from above by the empirical loss $\frac{1}{T}\sum^T_{s=1}\calL\pth{y_s f(x_s;W(s)}$ with the hinge loss function $\calL(z)=\log(1+\exp(-z))$. Note that the online-to-batch conversion follows from an application of martingale concentration inequalities. It does not fully resolve the problem of bounding the generalization error as one still needs to bound the cumulative loss. Indeed the authors bound the cumulative loss following a similar analysis of \cite{du2018gradient} and obtain an upper bound of the generalization error as $O\pth{\sqrt{\frac{y^\top H^{-1}y}{T}}}+O\pth{\sqrt{1/T}}$. However, as $T$ increases, $\lambda_{\text{min}}(H)$ decreases to $0$ and hence the upper bound which depends on $H^{-1}$ may blow up. On the contrary, our study proves that the average prediction error can indeed be very small. 


\section{Main Result}

\subsection{Problem Setup}

Given $f^{*} \in L^2(\mu)$, we assume the data $(X,y)$ is given by $y=f^{*}(X)+e$, where $X \in \reals^d$ is generated according to some distribution $\mu$ on the unit sphere $\mathbb{S}^{d-1}$, and  $e$ is the noise independent of $X$ with mean $0$ and variance $\tau^2$. 
We consider the following two-layer neural network:
$$
f(x;W)=\frac{1}{\sqrt{m}}\sum^m_{i=1}a_i \sigma(\langle W_i,x \rangle ),$$
where $a_i\in \{\pm 1\}$, $\sigma(x)=\max\{0,x\}$ is the ReLU activation function, and  $W \in \reals^{m \times d}$ is the weight matrix with the $i$-th row denoted as $W_i$.


The neural network is trained by running
the stochastic gradient descent (SGD) on the streaming data in one pass. In particular,
we assume the outer weights $a_i$'s
are $\iid$ Rademacher random variables (\ie, $a_i$ is $+1$ or $-1$ with equal probability)
and fixed throughout the training process.
The weight matrix $W(0) \in \reals^{m \times d}$ is initialized as the Gaussian random matrix with $\iid$ standard normal  entries $N(0,1)$.  
Then we update the weight matrix  at the $t$-th iteration  as
\begin{equation}
    W(t+1)=W(t)-\eta_{t+1}\nabla_{W} l(W(t),X_t,y_t),
    \label{sgd_update}
\end{equation}
where 
$\eta_{t+1}$ is the step size, $l(W,x,y)=\frac{1}{2}\pth{y-f\pth{x;W}}^2$ is the quadratic loss function, 
and $(X_t,y_t)$ is the fresh data independently and identically distributed as $(X,y)$.


\subsection{Main Theorem}


Denote the prediction error $\Delta_t(x)=f^{*}(x)-f(x;W(t))$. Our main result characterizes the convergence of the average prediction error
in terms of the spectrum of certain integral operator. 
Define the (neural tangent) kernel function 
$$
\Phi(x,x')=x^Tx'\Expect_{w\sim N(0,I_d)}\qth{\indc{w^Tx\geq 0}\indc{w^Tx'\geq 0}}
$$
and the integral operator 
$\mathsf{\Phi}$ associated with $\Phi$ as
$$
\mathsf{\Phi}g (x) =\int \Phi(x,x') g(x')
\mu(dx'), \quad \forall g \in L^2(\mu).
$$
Denote the eigenvalues of $\mathsf{\Phi}$ 
as $\{\lambda_i\}^{\infty}_{i=1}$ with $\lambda_1\geq \lambda_2 \cdots $ and the corresponding eigenfunctions $\phi_i$. For any function  $g \in L^2(\mu)$,
denote $\calR(g,\ell)$ as the $L_2$ norm of the projection of function $g$ onto the space spanned by the eigenfunctions $\{\phi_i\}_{i=\ell+1}^\infty$, \ie,
$$
\calR(g,\ell) = \sum_{i=\ell+1}^\infty 
\langle g, \phi_i \rangle^2.
$$

\begin{theorem}\label{Thm_statement}
Suppose the step size $\eta_t\leq \frac{\theta}{t+1}$ with $\theta< \frac{1}{4}$.  For any $T<\infty$ and $0<\delta<1$, if
\begin{equation}
    m\geq c\pth{d^2+\pth{\frac{(T+1)^{2\theta}}{\theta}}^9+\pth{\frac{\log(T)}{\delta}}^9}
    \label{thm_condition_m}
\end{equation}
for some constant $c$ depending on $\opnorm{f^*}$ and $\tau$, then with probability at least $1-2\exp(-2m^{1/3}))-\delta$, $\forall \,0 \le t \leq T$,
\begin{align} 
&\Expect\qth{\opnorm{\Delta_t}|W(0)}\leq \inf_{\ell} \sth{ \prod^{t-1}_{k=0}(1-\eta_{k}\lambda_{\ell} )\opnorm{\Delta_0}+\calR(\Delta_0,\ell)}+2 c_1,   \label{induction}
\end{align}
where $c_1=
\theta e^{2\theta}\sqrt{\frac{2-4\theta}{1-4\theta}\pth{\frac{\opnorm{f^*}^2+1}{\delta^2}+\tau^2 }}$.
\end{theorem}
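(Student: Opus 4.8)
The plan is to prove the error bound \eqref{induction} by induction on $t$, tracking the evolution of the prediction error $\Delta_t$ under the SGD update \eqref{sgd_update}. First I would decompose the update. Using the first-order Taylor expansion of $f(\cdot;W)$ around $W(t)$, the map $W(t)\mapsto W(t+1)$ induces a change in the prediction function of the form $\Delta_{t+1}(x) \approx \Delta_t(x) - \eta_{t+1}\,\widehat{\Phi}_t(x,X_t)\,\Delta_t(X_t) + \text{(higher-order terms)}$, where $\widehat{\Phi}_t(x,x') = \frac{1}{m}\sum_{i=1}^m \langle x,x'\rangle \indc{\langle W_i(t),x\rangle\ge 0}\indc{\langle W_i(t),x'\rangle\ge 0}$ is the \emph{empirical} (random) neural tangent kernel at time $t$. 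Taking conditional expectation over the fresh sample $(X_t,y_t)$ and over the noise $e_t$, the stochastic gradient term contributes, in expectation, the operator $\widehat{\mathsf{\Phi}}_t$ applied to $\Delta_t$, i.e. $\Expect[\Delta_{t+1}\mid \calF_t] \approx (\mathsf{I} - \eta_{t+1}\widehat{\mathsf{\Phi}}_t)\Delta_t$, plus a bias from the label noise and a remainder from the Taylor expansion.

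Next I would replace the random operator $\widehat{\mathsf{\Phi}}_t$ by the deterministic NTK operator $\mathsf{\Phi}$. This is where the VC-dimension/McDiarmid argument advertised in the introduction enters: on a high-probability event (of probability at least $1-2\exp(-2m^{1/3})-\delta$ under the condition \eqref{thm_condition_m} on $m$), the random kernel function $\widehat{\Phi}_0$ is uniformly close to $\Phi$, and moreover the weights $W_i(t)$ stay close to $W_i(0)$ throughout the first $T$ iterations (a standard overparametrization/lazy-training fact, using that the total displacement is controlled by $\sum_t \eta_t \lesssim \theta \log T$ times a gradient bound), so $\widehat{\mathsf{\Phi}}_t$ is also uniformly close to $\mathsf{\Phi}$. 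On this event, I would expand $\Delta_t$ in the eigenbasis $\{\phi_i\}$ of $\mathsf{\Phi}$: writing $c_{t,i} = \langle \Delta_t,\phi_i\rangle$, the eigen-directions with large $\lambda_\ell$ contract by a factor $\prod_{k}(1-\eta_k\lambda_\ell)$, the tail directions (indices $>\ell$) contribute at most $\calR(\Delta_0,\ell)$ essentially unchanged, and both the approximation error $\|\widehat{\mathsf{\Phi}}_t-\mathsf{\Phi}\|$ and the Taylor remainder feed into an accumulated error term. Summing the per-step errors against the step sizes $\eta_t\le \theta/(t+1)$ and using $\sum_{k=1}^t \eta_k \le \theta\log(t+1)$, together with the variance bound $\sigma_0^2 \lesssim (\|f^*\|^2+1)/\delta^2 + \tau^2$ coming from the initialization and the noise, should produce the constant $c_1 = \theta e^{2\theta}\sqrt{\tfrac{2-4\theta}{1-4\theta}\bigl(\tfrac{\|f^*\|^2+1}{\delta^2}+\tau^2\bigr)}$; the geometric-type sum $\sum_t \eta_t^2 \prod_{k>t}(1-\eta_k\lambda_\ell)$ is what yields the $\tfrac{2-4\theta}{1-4\theta}$ factor and explains the constraint $\theta<\tfrac14$. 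Taking the infimum over $\ell$ at the end gives \eqref{induction}.

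The main obstacle I expect is \textbf{controlling the accumulation of the stochastic-gradient noise}: unlike in the batch/GD analyses of \cite{du2018gradient} and \cite{su2019learning}, each step injects a fresh mean-zero fluctuation (from the random sample $X_t$ and the label noise $e_t$) into $\Delta_t$, and these fluctuations are propagated forward by the (random, time-varying) operators $\mathsf{I}-\eta_k\widehat{\mathsf{\Phi}}_k$. Bounding $\Expect[\|\Delta_t\|\mid W(0)]$ therefore requires a careful second-moment recursion: I would bound $\Expect[\|\Delta_{t+1}\|^2\mid \calF_t]$ by $(1-\eta_{t+1}\lambda_\ell)^2$ times the "signal" part plus $\eta_{t+1}^2$ times a uniformly bounded variance proxy (using $\|\widehat{\Phi}_t\|_\infty = O(1)$ and boundedness of $|f(X_t;W(t))|$, $|y_t|$ on the good event), unroll the recursion, and bound the resulting weighted sum $\sum_{t}\eta_t^2\prod_{k>t}(1-\eta_k\lambda_\ell)^2$ by comparison with an integral. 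A secondary technical point is handling the higher-order Taylor remainder in the function-space update — this needs the lazy-training bound $\sup_i\|W_i(t)-W_i(0)\| = O(\text{poly}(T)/\sqrt m)$ and the smoothness/Lipschitz properties of ReLU networks away from the kink, which is exactly what forces $m$ to be polynomially large in $T$, $d$, $1/\delta$ as in \eqref{thm_condition_m}. Assembling these pieces — the eigen-decomposition contraction, the kernel-approximation event, and the noise recursion — into a single clean induction hypothesis of the form \eqref{induction} is the crux of the argument.
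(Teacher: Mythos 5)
Your overall strategy — induction on $t$, decompose $\Delta_{t+1}$ into a contraction driven by an empirical kernel, pass to the deterministic NTK on a high-probability event established by VC/McDiarmid, expand in the eigenbasis of $\mathsf\Phi$, and control the accumulated stochastic fluctuations — is indeed the route the paper takes. But two of the technical mechanisms you invoke are not the ones that make the argument work, and both would lead you astray if pursued literally.

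First, the \emph{Taylor expansion plus higher-order remainder} framing does not apply to ReLU networks: $\sigma$ is piecewise linear, so there is no higher-order Taylor term and no smoothness ``away from the kink'' to exploit directly. What the paper does instead is use the exact two-sided identity $\indc{v\ge 0}(u-v) \le \sigma(u)-\sigma(v) \le \indc{u\ge 0}(u-v)$ to sandwich $f(x;W(t+1))-f(x;W(t))$ between two kernel expressions, producing an \emph{exact} recursion $\Delta_{t+1} = (\sfI-\eta_t\sfH_t)\Delta_t - v_t + \epsilon_t$ in which $\epsilon_t$ is bounded (through $L_t,M_t$) by the number of \emph{activation-pattern flips} $S_t(x)=|\{i:\mathrm{sgn}\langle W_i(t),x\rangle \ne \mathrm{sgn}\langle W_i(0),x\rangle\}|$, and $\|S_t\|_\infty/m$ is then controlled via $\|W(t)-W(0)\|_F$ on the event $\Omega_1$. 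The ``remainder'' is a counting argument over sign flips, not an analytic remainder; this distinction is precisely why Lemma~\ref{bound_St} (which picks an optimal threshold $R$ and trades off the measure of the band $\{|\langle W_i(0),x\rangle|\le R\}$ against displaced coordinates) is needed.

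Second, your account of where $c_1$ and the constraint $\theta<\tfrac14$ come from is wrong. You propose a weighted geometric sum $\sum_t\eta_t^2\prod_{k>t}(1-\eta_k\lambda_\ell)^2$ with a ``uniformly bounded variance proxy.'' Since each factor $1-\eta_k\lambda_\ell\le 1$ and $\sum_t\eta_t^2\le\theta^2\pi^2/6$, that sum converges for \emph{every} $\theta>0$ — it cannot explain $\theta<\tfrac14$. Moreover, it injects an $\ell$-dependence into the noise term, which would contaminate the $\inf_\ell$: the theorem's noise bound $2c_1$ must be uniform in $\ell$. The paper instead (i) bounds the martingale accumulation $\Expect\|\sum_r\prod_{s>r}\sfQ_s v_r\|$ by $\sqrt{\sum_r\eta_r^2\sigma_r^2}$ using only martingale orthogonality $\Expect\langle v_t,h_t\rangle=0$ and $\|\sfQ_s\|\le 1$ — no eigenvalue appears — and (ii) crucially shows in Lemma~\ref{sigma_t_recursive} that the variance proxy is \emph{not} uniformly bounded but grows as $\sigma_t^2\le\prod_{s<t}(1+2\eta_s)^2\sigma_0^2\sim(t+1)^{4\theta}\sigma_0^2$. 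Combining with $\eta_t^2\sim\theta^2/(t+1)^2$ gives summands of order $(t+1)^{4\theta-2}$, whose sum converges exactly when $4\theta-2<-1$, i.e.\ $\theta<\tfrac14$, producing $\tfrac{2-4\theta}{1-4\theta}$. Without tracking the growth of $\sigma_t$ your induction would silently assume a bound you haven't proven, and without the $\ell$-free martingale estimate your $\inf_\ell$ would not commute with the error term.
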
 

\begin{remark}
We introduce $\delta$ to ensure $\Prob\qth{\opnorm{\Delta_0}\leq \sqrt{\frac{\opnorm{f^*}^2+1}{\delta^2}}}\geq 1-\delta$, which follows from Markov's inequality and the fact that $\Expect\qth{\opnorm{\Delta_0}^2}\leq \opnorm{f^*}^2+1$. This condition shows that with high probability, $\opnorm{\Delta_0}$ is upper bounded by some constant independent of $m$ and allows us to derive the lower bound (\ref{thm_condition_m}) on the overparametrization.
\label{rmk_delta}
\end{remark}

\begin{remark} 
Under a symmetric initialization motivated by \cite{su2019learning} and also used in \cite{chizat2019lazy}, $\Delta_0=f^*$
and hence $\opnorm{\Delta_0}= \opnorm{f^*}$. 
Specifically,  first let $W(0)=\begin{pmatrix}W \\ W
\end{pmatrix}$, 
where 
$W \in \reals^{\frac{m}{2}\times d}$ is random matrix with $\iid$ standard normal $N(0,1)$ entries.
Then let the outer weights $a=(b,-b)^T \in \reals^m$, where $b\in \{\pm 1\}^{m/2}$ has  $\iid$ Rademacher entries. 

Since $\opnorm{\Delta_0}=\opnorm{f^*}$, there is no need to introduce $\delta$ to upper bound $\opnorm{\Delta_0}$ in view of Remark \ref{rmk_delta}. In this case, $c_1=\theta e^{2\theta}\sqrt{\frac{2-4\theta}{1-4\theta}\pth{\opnorm{f^*}^2+\tau^2}}$. Then we can also show that (\ref{induction}) holds with probability at least $1-2\exp(-m^{1/3})$, using the same analysis except for minor changes (See Appendix \ref{just_symmetric}). Furthermore, following \cite{su2019learning}, if $f^{*}$ is a degree $\ell^{*}$ polynomial for $\ell^*\geq 0$ and $\mu$ is
the uniform distribution on $\mathbb{S}^{d-1}$, we know $\calR(f^{*},\ell^{*})=0$. Thus, with probability at least $1-2\exp(-m^{1/3})$, 
$$
\Expect\qth{\opnorm{\Delta_t}}\leq \prod^{t-1}_{k=0}\left(1-\eta_k\lambda_{\ell^{*}} \right)\opnorm{f^*}+2c_1, 
\, \forall 0 \le t \le T.
$$

Consider the noiseless setting $\tau=0$. Then for any $0<\varepsilon<\frac{3\opnorm{f^*}}{4}$, if $\theta\leq \frac{\varepsilon}{6\opnorm{f^*}}<\frac{1}{8}$, $T\geq \pth{\frac{\varepsilon}{6\opnorm{f^*}}}^{-1/(\theta\lambda_{\ell^*})}$,
we have
$\Expect\qth{\opnorm{\Delta_T}}\leq \varepsilon$ with probability $1-2\exp(-m^{1/3})$. \footnote{To see this, note that $\prod^{t-1}_{k=0}\left(1-\eta_k\lambda_{\ell^{*}} \right)\opnorm{f^*}\leq e^{-\theta\lambda_{\ell^*}\log T} \leq \frac{\epsilon}{6}$ and
$c_1=\theta e^{2\theta}\opnorm{f^*}\sqrt{\frac{2-4\theta}{1-4\theta}}\leq\frac{\epsilon}{6}e^{1/4}\sqrt{3}<\frac{2.5\epsilon}{6}$ in view of $\frac{2-4\theta}{1-4\theta}<3$ and $\exp(2\theta)<e^{1/4}$.}

For more general $f^*$, there is no guarantee that $\calR(\Delta_0,\ell)=0$ for some $\ell<\infty$. We provide a way to compute the eigenvalues $\lambda_{\ell}$ and the projection $\calR(f^{*},\ell)$ in Corollary \ref{corollary_beta} and \ref{corollary_projectionRemainder} from Appendix \ref{appendix_eigen} when the data distribution $\mu$ is uniform on the sphere $\mathbb{S}^{d-1}$ and $f^*$ can be written as $f^*(x)=h(\langle w,x\rangle)$ for some function $h: \reals \to \reals$ with parameter $w\in \reals^d$. 
\label{rmk_symmetric_initial}
\end{remark}


\begin{remark} Note that the lower bound of $m$ grows in $T$. In order to control $m$, we adopt the early stopping assumption $T<\infty$ which is commonly used in practice as shown in \cite{su2019learning}.
\end{remark}

\begin{remark}
 In terms of generalization error, following a similar analysis of \cite[Section D.3]{arora2019fine}, we know that if the loss function $l: \reals \times \reals \to [0,1]$ is $1$-Lipschitz in the first argument with $l(z,z)=0$ and $\tau=0$, then $\Expect\qth{ \calL \pth{W(t)} | W(0)} \leq \Expect\qth{\opnorm{\Delta_t} | W(0)}$ where $\calL(W(t))=\Expect_X\qth{l\pth{f\pth{X;W(t)},y}}$. In other words, our result in averaged prediction error can be viewed as an upper bound of the expectation of generalization error.
\end{remark}

Our result sheds light on the trade-off between the convergence rate and the accumulation of approximation errors. The trade-off is two-fold. One is between $\prod^{t}_{k=0}(1-\frac{\theta\lambda_{\ell}}{k} )$ and $\calR(f^{*},\ell)$ through $\ell$. Denote the principle space $\calP_{\ell}$ as the space spanned by the first $\ell$ eigen-functions of $\mathsf{\Phi}$ and the space spanned by $\ell+1,\ell+2,\cdots$ eigen-functions of $\mathsf{\Phi}$ as the remainder space $\calP^{\perp}_{\ell}$. Intuitively, on one hand, larger $\ell$ implies larger principle spaces which yields smaller $\calR(f^{*},\ell)$. On the other hand, larger $\ell$ also implies smaller $\lambda_{\ell}$. Thus the contraction factor $\prod^{t}_{k=0}(1-\frac{\theta\lambda_{\ell}}{k})$ is smaller, indicating slower convergence. The other trade-off is between the contraction factor $\prod^{t}_{k=0}(1-\frac{\theta\lambda_{\ell}}{k} )$ and the accumulation of approximation errors and noise $c_1$ through $\theta$. To make sure $c_1$ is small, we need small $\theta$, thus yielding a small contraction factor. In return, we need more iterations to converge. 

\section{Proof of Theorem \ref{Thm_statement}} \label{sec_proof_sketch}

Throughout Section \ref{sec_proof_overview} and \ref{sec:proof_propsition},
we condition on the initialization $W(0)$ and the outer weights $a$.
The expectation $\Expect\qth{\cdot}$ is taken over the randomness of the samples drawn at iterations, unless specified otherwise. In Section \ref{analysis_omega}, we prove that the event we condition on in Section \ref{sec_proof_overview} and \ref{sec:proof_propsition} happens with high probability.

\subsection{Proof Overview} \label{sec_proof_overview}
We prove  (\ref{induction}) via induction over iteration $t$.
The base case $t=0$ trivially holds as  $\opnorm{\Delta_0}\leq \opnorm{\Delta_0}+2c_1$.
Assume (\ref{induction}) holds for any $s\leq t \le T$, we first show $\Expect\qth{\fnorm{W(s+1)-W(0)}}$ is small for any $s\leq t$.

\begin{lemma} \label{bound_W_exp}
For any $t\geq 0$,
$$
\Expect\qth{\fnorm{W(t+1)-W(0)}}\leq \sum^t_{s=0}\eta_s\pth{\Expect\qth{\opnorm{\Delta_s}}+\tau}.
$$
\end{lemma}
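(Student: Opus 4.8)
The plan is to unroll the SGD recursion, bound each per-step increment in Frobenius norm by the absolute residual $|y_s-f(X_s;W(s))|$ at that step, and then control this residual in expectation.

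\textbf{Step 1 (unrolling).} From the update rule \eqref{sgd_update}, writing $\eta_s$ for the step size used at iteration $s$, we have $W(t+1)-W(0)=-\sum_{s=0}^{t}\eta_s\,\nabla_W l(W(s),X_s,y_s)$, so the triangle inequality for $\fnorm{\cdot}$ gives $\fnorm{W(t+1)-W(0)}\le\sum_{s=0}^{t}\eta_s\,\fnorm{\nabla_W l(W(s),X_s,y_s)}$. By linearity of expectation it then suffices to show $\Expect\qth{\fnorm{\nabla_W l(W(s),X_s,y_s)}}\le\Expect\qth{\opnorm{\Delta_s}}+\tau$ for each $s$.

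\textbf{Step 2 (per-step gradient bound).} Since $l(W,x,y)=\frac12(y-f(x;W))^2$ and $f(x;W)=\frac{1}{\sqrt m}\sum_{i=1}^m a_i\sigma(\langle W_i,x\rangle)$, the $i$-th row of the gradient is $\nabla_{W_i}l(W,x,y)=-(y-f(x;W))\,\frac{a_i}{\sqrt m}\,\sigma'(\langle W_i,x\rangle)\,x$. Using $|a_i|=1$, $0\le\sigma'\le 1$, and $\|x\|=1$ for $x\in\mathbb{S}^{d-1}$, each row has Euclidean norm at most $\frac{1}{\sqrt m}|y-f(x;W)|$, so $\fnorm{\nabla_W l(W,x,y)}\le|y-f(x;W)|$. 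Evaluated at the iterate and using $y_s=f^*(X_s)+e_s$, this reads $\fnorm{\nabla_W l(W(s),X_s,y_s)}\le|y_s-f(X_s;W(s))|=|\Delta_s(X_s)+e_s|$.

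\textbf{Step 3 (taking expectations).} The key structural fact is that $W(s)$, hence the function $\Delta_s$, depends only on the samples and noises drawn at iterations $0,\dots,s-1$; thus, conditionally on $W(s)$, the pair $(X_s,e_s)$ is an independent fresh draw with $X_s\sim\mu$, $\Expect[e_s]=0$, $\Expect[e_s^2]=\tau^2$, and $e_s$ independent of $X_s$. Hence $\Expect\qth{\Delta_s(X_s)\,e_s\mid W(s)}=0$, and by Jensen's inequality $\Expect\qth{\,|\Delta_s(X_s)+e_s|\mid W(s)}\le\sqrt{\Expect\qth{(\Delta_s(X_s)+e_s)^2\mid W(s)}}=\sqrt{\opnorm{\Delta_s}^2+\tau^2}\le\opnorm{\Delta_s}+\tau$. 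Taking the outer expectation over $W(s)$ and substituting into Step 1 yields the claim. This argument is essentially bookkeeping; the only point requiring care is the independence/measurability structure in Step 3 — making precise that $\Delta_s$ is independent of the fresh sample $(X_s,y_s)$, which is exactly what lets the conditional expectation collapse the $L^1$ average of $\Delta_s$ over $\mu$ into its $L^2$ norm $\opnorm{\Delta_s}$ — together with keeping the step-size indices consistent with \eqref{sgd_update}.
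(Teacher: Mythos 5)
Your proof is correct and follows essentially the same route as the paper: unroll the SGD recursion via the triangle inequality, bound the per-step Frobenius increment by $\eta_s|\Delta_s(X_s)+e_s|$ (you do this row-by-row where the paper observes $D_s a X_s^\top$ is rank one with norm $\le\sqrt m$, but these are the same calculation), and then apply Jensen/Cauchy-Schwarz together with the independence of the fresh draw $(X_s,e_s)$ from $W(s)$ to collapse $\Expect|\Delta_s(X_s)+e_s|$ to $\sqrt{\opnorm{\Delta_s}^2+\tau^2}\le\opnorm{\Delta_s}+\tau$. Your Step 3 makes the conditional-independence structure slightly more explicit than the paper's notation $\Expect_{X_s,e_s}$, which is a welcome clarification but not a different argument.
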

\begin{proof}
By the SGD update, 
\begin{align}
W_{j}(t+1)- W_{j}(t) =\frac{\eta_{t} a_j}{\sqrt{m}}\qth{f^{*}(X_t)+e_t-f\pth{X_t;W(t)}} \indc{\langle W_{j}(t),X_t\rangle \geq 0}X^\top_t, \label{eq:W_update}
\end{align}
where $W_j(t)$ is the $j$-th row of $W$(t), $X_t \in \reals^d$ is the fresh sample drawn at iteration $t$, and $e_t$ is the random noise.

In view of \prettyref{eq:W_update}, for any $s$,
\begin{equation}
 \fnorm{W(s+1)-W(s)}=\frac{\eta_{s}}{\sqrt{m}}
 \left| \Delta_s(X_s)+e_s \right| \Fnorm{ D_s a X_s^\top},
 \label{Wnext_Wnow}
\end{equation}
where $D_s \in \reals^{m \times m}$ is a diagonal matrix with diagonal entries given by \\
$\{\indc{\langle W_1(s),X_s\rangle  \geq 0},\cdots, \{\indc{\langle W_m(s),X_s\rangle \geq 0}\}$, $a\in \reals^m$ is the outer weights, and $\Delta_s(X_s) \in \reals$ is the prediction error at iteration $s$ given input $X_s$.

Note that $D_s a X_s^\top$ is a rank-one matrix and thus
$
 \fnorm{ D_s a X_s^\top}=\opnorm{D_s a}\opnorm{X_s}
 \le \sqrt{m},
$
where the last inequality holds since $\opnorm{D_s}\leq 1$,
$\opnorm{a}=\sqrt{m}$, and $\opnorm{X_s}=1$.
Thus, by the triangle inequality,
\begin{align*}
\fnorm{W(t+1)-W(0)}
\leq \sum^t_{s=0}\fnorm{W(s+1)-W(s)}
\leq \sum^t_{s=0}\eta_{s}\left |\Delta_s(X_s)+e_s \right|.
\end{align*}
Taking expectation on both hand sides, we have
\begin{align} \notag
\Expect\qth{ \fnorm{W(t+1)-W(0)}}
&\leq \sum^{t}_{s=0}\eta_s \Expect\qth{\left|\Delta_s(X_s)+e_s \right|}\\ \notag
&\overset{(a)}{\leq} \sum^{t}_{s=0}\eta_s \Expect\qth{ \sqrt{\Expect_{X_s,e_s}\qth{\pth{\Delta_s(X_s)+e_s}^2}} }\\
&\overset{(b)}{\leq} \sum^{t}_{s=0}\eta_{s}\pth{\Expect\qth{\opnorm{\Delta_s}}+\tau},
    \label{bound_Wt_W0}
\end{align}
where (a) holds by Cauchy-Schwartz inequality; (b) holds by the independence of $X_s$ and $e_s$. 

\end{proof}

We now claim that for any $s\leq t$,
\begin{equation}
    \Expect\qth{\opnorm{\Delta_s}}\leq \opnorm{\Delta_0}+2c_1.
    \label{induction_loose}
\end{equation}

To see this, note for any $\varepsilon>0$, $\calR(\Delta_0,\ell)<\varepsilon$ for sufficiently large $\ell$. Thus, 
$$\Expect\qth{\opnorm{\Delta_s}}\leq \prod^{s-1}_{k=0}\pth{1-\eta_k\lambda_{\ell}}\opnorm{\Delta_0}+\epsilon+2c_1 \le \opnorm{\Delta_0}+\epsilon+2c_1.$$ 
Since $\epsilon$ can be arbitrarily small, (\ref{induction_loose}) holds. 

\vspace{\medskipamount}

Plugging (\ref{induction_loose}) into (\ref{bound_Wt_W0}), when $\eta_s\leq \frac{\theta}{s+1}$, we get
\begin{equation}
\Expect\qth{\fnorm{W(s+1)-W(0)}}\leq \theta\qth{\pth{\log(T)+1}}\pth{\opnorm{\Delta_0}+\tau+2c_1}.
\label{bound_W_in_T}    
\end{equation}

The induction is then completed by the following proposition.
\begin{proposition}\label{prop_induction}
Suppose the conditions in Theorem \ref{Thm_statement} hold. Define event $\Omega_1$, $\Omega_2$ and $\Omega_3$ in terms of the initialization $W(0)$ and outer weights $a$ in (\ref{def_Omega_1}), (\ref{def_Omega_2}) and (\ref{def_Omega_3}). On event $\cap^3_{i=1}\Omega_i$, if
(\ref{bound_W_in_T}) holds
for any $s\leq t\leq T-1$, then  (\ref{induction}) holds for $t+1$.
\end{proposition}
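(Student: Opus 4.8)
The plan is to track the evolution of the prediction error $\Delta_{t+1}$ under one SGD step and decompose it into a deterministic ``kernel contraction'' part plus stochastic noise terms that must be controlled on the good event $\cap_{i=1}^3 \Omega_i$. First I would write, using \prettyref{eq:W_update}, a first-order expansion of $f(x;W(t+1))$ around $W(t)$: since ReLU is piecewise linear, $f(x;W(t+1)) - f(x;W(t))$ equals the gradient inner product $\langle \nabla_W f(x;W(t)),\, W(t+1)-W(t)\rangle$ plus an error term coming only from those neurons whose activation pattern $\indc{\langle W_j(t),x\rangle\ge 0}$ flips between $W(t)$ and $W(t+1)$. The gradient inner product, after substituting the SGD update, produces a term of the form $\eta_{t+1}\big(\Delta_t(X_t)+e_t\big)\widehat\Phi_t(x,X_t)$ where $\widehat\Phi_t$ is the \emph{random} kernel function $\widehat\Phi_t(x,x') = \tfrac{1}{m}\sum_j \langle x,x'\rangle \indc{\langle W_j(t),x\rangle\ge0}\indc{\langle W_j(t),x'\rangle\ge0}$ built from the current weights. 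So in function space, $\Delta_{t+1} = \Delta_t - \eta_{t+1}(\Delta_t(X_t)+e_t)\widehat\Phi_t(\cdot,X_t) + (\text{activation-flip error})$.

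Next I would take conditional expectation over the fresh sample $(X_t,e_t)$ given $W(t)$. Because $e_t$ has mean zero and is independent of $X_t$, the noise term $-\eta_{t+1} e_t \widehat\Phi_t(\cdot,X_t)$ vanishes in expectation, and $\mathbb{E}_{X_t}[\Delta_t(X_t)\widehat\Phi_t(\cdot,X_t)] = \widehat{\mathsf\Phi}_t \Delta_t$, so in expectation $\Delta_t$ is hit by the operator $(\mathsf{Id} - \eta_{t+1}\widehat{\mathsf\Phi}_t)$. I would then replace $\widehat{\mathsf\Phi}_t$ by the true NTK operator $\mathsf\Phi$ at a cost controlled by $\opnorm{\widehat{\mathsf\Phi}_t - \mathsf\Phi}$; here is where the kernel-convergence event (one of the $\Omega_i$, established via VC dimension and McDiarmid in Section~\ref{analysis_omega}) enters, giving $\opnorm{\widehat{\mathsf\Phi}_t - \mathsf\Phi} \le \varepsilon_m$ small once $m$ satisfies \prettyref{thm_condition_m}. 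Projecting onto the principal space $\calP_\ell$ and its complement $\calP_\ell^\perp$: on $\calP_\ell$ the operator $\mathsf{Id}-\eta_{t+1}\mathsf\Phi$ contracts by at least $(1-\eta_{t+1}\lambda_\ell)$, and the $\calP_\ell^\perp$ component only needs to be bounded by $\calR(\Delta_0,\ell)$ plus accumulated slack. Combining with the inductive hypothesis \prettyref{induction} for $t$, taking norms and iterating the recursion $\mathbb{E}\opnorm{\Delta_{t+1}} \le (1-\eta_{t+1}\lambda_\ell)\mathbb{E}\opnorm{\Delta_t} + (\text{per-step error})$, the product $\prod_{k=0}^{t}(1-\eta_k\lambda_\ell)$ emerges, and the geometric-type sum $\sum_s \eta_s \prod_{k>s}(1-\eta_k\lambda_\ell)$ of the per-step errors must be shown to stay below $2c_1$ given $\eta_t\le\theta/(t+1)$ and $\theta<1/4$ — this is where the explicit constant $c_1 = \theta e^{2\theta}\sqrt{\tfrac{2-4\theta}{1-4\theta}(\cdots)}$ is calibrated.

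The per-step error has three sources that I would bound separately, each swallowed into the $2c_1$ budget using \prettyref{bound_W_in_T}: (i) the activation-flip error, which is nonzero only for neurons $j$ with $|\langle W_j(0),x\rangle|$ small — specifically at most $\fnorm{W(t)-W(0)}$-order many, by anti-concentration of the Gaussian initialization (another $\Omega_i$) — times the step size, contributing $O(\eta_{t+1}\cdot m^{-1/2}\cdot\text{poly})$; (ii) the kernel-perturbation error $\eta_{t+1}\varepsilon_m\opnorm{\Delta_t}$, absorbed since $\varepsilon_m\to0$; (iii) a second-moment/variance term from the stochastic gradient, of order $\eta_{t+1}^2(\mathbb{E}\opnorm{\Delta_t}^2 + \tau^2)$ whose accumulated sum $\sum_t \eta_t^2(\cdots)$ converges because $\sum_t 1/(t+1)^2 < \infty$ — this is precisely the place where one-pass SGD noise is controlled and where the $\tau^2$ and $\opnorm{f^*}^2$ dependence of $c_1$ comes from. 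The main obstacle I anticipate is item (i) together with ensuring the random kernel $\widehat{\mathsf\Phi}_t$ (not just at initialization, but along the whole trajectory $W(t)$) stays uniformly close to $\mathsf\Phi$: one must argue that as long as $\fnorm{W(t)-W(0)}$ stays $O(\theta\log T)$-bounded — which is exactly \prettyref{bound_W_in_T}, the hypothesis of the proposition — the activation patterns, hence $\widehat{\mathsf\Phi}_t$, have not drifted far from those at initialization, so the single high-probability event $\cap_{i=1}^3\Omega_i$ defined purely in terms of $W(0)$ and $a$ suffices for all $t\le T$. Making that trajectory-wide control quantitative, and checking the resulting error sums close under the inductive constant, is the crux.
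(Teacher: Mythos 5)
Your high-level decomposition is on the right track: you correctly identify the three sources of deviation from pure kernel contraction (activation-flip error, kernel perturbation, stochastic-gradient noise), and you correctly identify that the $\Omega_i$'s control the trajectory-wide drift of the random kernel $\widehat{\sf\Phi}_t=H_t$ from $\sf\Phi$ via the Frobenius bound on $W(t)-W(0)$. The treatment of the flip error via anti-concentration of $\langle W_j(0),x\rangle$ and the kernel-perturbation error via $\linf{H_t-\Phi}$ both match the paper's Lemmas \ref{bound_Ht_Phi}, \ref{bound_St}, \ref{bound_LM}.

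The gap is in item (iii), the stochastic-gradient noise. You propose to iterate a \emph{first-moment, step-by-step} recursion $\Expect\opnorm{\Delta_{t+1}}\le(1-\eta_{t+1}\lambda_\ell)\Expect\opnorm{\Delta_t}+\text{(per-step error)}$ and claim the accumulated error is of order $\sum_t\eta_t^2(\cdot)$. These two statements are inconsistent. In a first-moment telescope, the per-step noise contribution is $\Expect\opnorm{v_t}\sim\eta_t\sigma_t$, not $\eta_t^2\sigma_t^2$, so the accumulated sum is $\sum_r\eta_r\sigma_r\prod_{s>r}\opnorm{\sfQ_s}$. You cannot contract this by $\prod(1-\eta_s\lambda_\ell)$ because the noise $v_r$ is not confined to the principal eigenspaces and $\opnorm{\sfQ_s}\le1$ is the best uniform bound; so the sum accumulates to at least $\sum_r\eta_r\sigma_r=O(\theta\sigma_0\log T)$, which blows the $c_1=O(\theta\sigma_0)$ budget and prevents the induction from closing. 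What the paper actually does is unroll the recursion \emph{at the level of functions}, obtaining
$\Delta_{t+1}=\prod_{s}\sfQ_s\circ\Delta_0-\sum_r\prod_{s>r}\sfQ_s\circ v_r+\sum_r\prod_{s>r}\sfQ_s\circ\epsilon_r$,
and then bounds the entire noise sum $\opnorm{\sum_r\prod_{s>r}\sfQ_s\circ v_r}$ at once using the fact that $\{v_r\}$ are martingale differences: the cross terms vanish in expectation, giving $\Expect\opnorm{\sum_r\ldots}^2\le\sum_r\Expect\opnorm{v_r}^2\le\sum_r\eta_r^2\sigma_r^2$ (Lemma \ref{bound_v_term}). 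That $\ell^2$-type bound, not a step-by-step $\ell^1$ bound, is what produces the $O(\theta\sigma_0)$ constant. Relatedly, you also need a separate lemma (the paper's Lemma \ref{sigma_t_recursive}) bounding the growth of $\sigma_t=\sqrt{\Expect\opnorm{\Delta_t}^2+\tau^2}$, since $\Expect\opnorm{\Delta_t}^2$ is not controlled by the inductive hypothesis on $\Expect\opnorm{\Delta_t}$; your proposal omits this. To repair the argument you would need to replace the step-by-step first-moment recursion with the full functional unrolling and the martingale orthogonality bound, and add control of the second moment $\sigma_t$.
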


The proof of Theorem \ref{Thm_statement} readily follows.

\begin{proof}[Proof of Theorem \ref{Thm_statement}]
We first show that conditioning on $W(0)$ and $a$ such that event $\cap^3_{i=1}\Omega_i$ holds, (\ref{induction}) is true for all $t\leq T$ through induction. 

To see this, note that the base case $t=0$ holds clearly as $\opnorm{\Delta_0}\leq \opnorm{\Delta_0}+2c_1$. 

Assume (\ref{induction}) holds for any $t\leq T-1$. Then, (\ref{bound_W_in_T}) holds for any $s\leq t$. Thus, by Proposition \ref{prop_induction}, we get (\ref{induction}) holds for $t+1$.

In Section \ref{analysis_omega}, we show event $\cap^3_{i=1}\Omega_i$ occurs with probability at least $1-\delta-2\exp(-2m^{1/3})$.  This completes the proof of the theorem.
\end{proof}

\subsection{Proof of Proposition \ref{prop_induction}}
\label{sec:proof_propsition}
Following \cite{su2019learning}, we first analyze how the prediction values evolve over iterations. Denote $A=\{j: a_j=1\}$ and $B=\{j:a_j=-1 \}$. By definition,
\begin{align}
f(x;W(t+1))-f(x;W(t))
&=\frac{1}{\sqrt{m}}\sum_{j\in A}\qth{\sigma\pth{\langle W_{j}(t+1),x\rangle }-\sigma\pth{\langle W_{j}(t),x\rangle }} \nonumber \\
& -\frac{1}{\sqrt{m}}\sum_{j\in B}\qth{\sigma\pth{\langle W_{j}(t+1),x\rangle }-\sigma\pth{\langle W_{j}(t),x\rangle }}.
\label{f_next_minus_f_current}
\end{align}

We now bound (\ref{f_next_minus_f_current}) from both above and below. By the SGD update, 
\begin{align}
W_{j}(t+1)- W_{j}(t) =\frac{\eta_{t} a_j}{\sqrt{m}}\qth{f^{*}(X_t)+e_t-f\pth{X_t;W(t)}} \indc{\langle W_{j}(t),X_t\rangle \geq 0}X_t^\top, 
\end{align}
where $X_t \in \reals^d$ is the fresh sample drawn at iteration $t$ and $e_t$ is the random noise.
Since $\indc{v\ge 0}\big(u-v \big) \leq \sigma(u)-\sigma(v) \leq \indc{u\ge 0}\big(u-v \big)$
for $u,v \in \reals$, it follows that
\begin{align*}
& \sigma\pth{\langle W_{j}(t+1),x\rangle }-\sigma \pth{\langle W_{j}(t),x\rangle} \leq \frac{\eta_{t}a_j}{\sqrt{m}}\qth{f^{*}(X_t)+e_t-f\pth{X_t;W(t)}}\langle X_t,x\rangle \indc{\langle W_{j}(0),X_t \rangle \geq 0}\indc{\langle W_{j}(t+1),x\rangle \geq 0},\\
& \sigma(\langle W_{j}(t+1),x\rangle )-\sigma(\langle W_{j}(t),x\rangle ) \geq \frac{\eta_{t}a_j}{\sqrt{m}}\qth{f^{*}(X_t)+e_t-f\pth{X_t;W(t)}}\langle X_t,x\rangle \indc{\langle W_{j}(t),X_t \rangle \geq 0}\indc{\langle W_{j}(t), x\rangle \geq 0}.
\end{align*}

For notation simplicity, define the following functions:
\begin{align*}
& \Phi^{+}_{t}(x,\tilde{x})=\frac{1}{m}\sum_{j\in A}\langle x,\tilde{x}\rangle \indc{\langle W_{j}(t),\tilde{x}\rangle \geq 0}\indc{\langle W_{j}(t),x \rangle \geq 0}, \\
& \Psi^{+}_{t}(x,\tilde{x})=\frac{1}{m}\sum_{j\in A}\langle x,\tilde{x}\rangle \indc{\langle W_{j}(t),\tilde{x}\rangle \geq 0}\indc{\langle W_{j}(t+1),x \rangle \geq 0}.
\end{align*}
Similarly we define $\Phi^{-}_{t}$ and $\Psi^{-}_{t}$ in terms of the summation over $B$.
Define $H_{t}=\Phi^{+}_{t}+\Phi^{-}_{t}$, $M_{t}=\Psi^{-}_{t}-\Phi^{-}_{t}$ and $L_{t}=\Psi^{+}_{t}-\Phi^{+}_{t}$. In particular,
\begin{align*}
H_t(x,\tilde{x})&=\frac{1}{m}\langle x,\tilde{x}\rangle \sum^m_{i=1}\indc{\langle W_i(t),x\rangle \geq 0}\indc{\langle W_i(t),\tilde{x}\rangle \geq 0}, \\
L_t(x,\tilde{x})&=\frac{1}{m}\langle x,\tilde{x}\rangle \sum_{i\in A}\indc{\langle W_i(t),\tilde{x}\rangle \geq 0}\pth{ \indc{\langle W_i(t+1),x\rangle \geq 0}-\indc{\langle W_i(t),x\rangle \geq 0}  },\\
M_t(x,\tilde{x})&=\frac{1}{m}\langle x,\tilde{x}\rangle \sum_{i\in B}\indc{\langle W_i(t),\tilde{x}\rangle \geq 0}\pth{ \indc{\langle W_i(t+1),x\rangle \geq 0}-\indc{\langle W_i(t),x\rangle \geq 0}  }.
\end{align*}

\begin{remark}
Note that both $L_t$ and $M_t$ measure the number of sign changes between $t$ and $t+1$. Intuitively, if $W(t)$ is close to $W(0)$ for all $t$, we expect a small sign change $\sum^m_{i=1}\left|\indc{\langle W_i(t),x\rangle \geq 0}-\indc{\langle W_i(0),x\rangle \geq 0}\right|$ thus yielding small $L_t$ and $M_t$. To capture this idea, we define 
$$
O_t(x)=\left\{i: \text{sgn}\pth{\langle W_{i}(t),x\rangle }\neq \text{sgn}\pth{ \langle W_{i}(0),x\rangle }   \right\}$$ 
as the set of neurons that have sign flips at iteration $t$ when the input data is $x$ and $S_t(x)$ as the cardinality of $O_t(x)$. In Lemma \ref{bound_St}, we will show that if $W(t)$ is close to $W(0)$, then $S_t$ will be small. Further, in Lemma \ref{bound_LM}, we provide upper bounds for $\linf{L_t}$ and $\linf{M_t}$ through $\linf{S_t}$.
\end{remark}

With the above notation, we obtain the following upper bound: 
\begin{align}
&    f(x;W(t+1))-f(x;W(t)) \nonumber \\ 
    &\leq \eta_{t} \Psi^{+}_{t}(x,X_t)\pth{ f^{*}\pth{X_t}+e_t-f\pth{X_t;W(t)}}+\eta_{t} \Phi^{-}_{t}(x,X_t)\qth{ f^{*}(X_t)+e_t-f\pth{X_t;W(t)}} \nonumber \\ 
    &=\eta_{t} \pth{\Psi^{+}_{t}(x,X_t)+\Phi^{-}_{t}(x,X_t)}\qth{ f^{*}(X_t)+e_t-f\pth{X_t;W(t)} } \nonumber \\ 
    &=\eta_{t}\qth{H_{t}(x,X_t)+L_{t}(x,X_t)}\qth{f^{*}(X_t)+e_t-f\pth{X_t;W(t)}}.
    \label{eq:2_layer_eps_lower}
\end{align}

Similarly, we can obtain a lower bound as
\begin{align}
    f(x;W(t+1))-f(x;W(t))&\geq \eta_{t} \pth{\Psi^{-}_{t}(x,X_t)+\Phi^{+}_{t}(x,X_t)} \qth{  f^{*}(X_t)+e_t-f\pth{X_t;W(t)} } \nonumber \\
    &=\eta_{t}\qth{H_{t}(x,X_t)+M_{t}(x,X_t)}\qth{f^{*}(X_t)+e_t-f\pth{X_t;W(t)}  }.
    \label{eq:2_layer_eps_upper}
\end{align}

In view of (\ref{eq:2_layer_eps_lower}) and (\ref{eq:2_layer_eps_upper}), if $M_{t}$ and $L_{t}$ are small, then the evolution of the prediction values is mainly driven by the kernel function $H_{t}$. To capture this idea, define 
\begin{equation}
  \epsilon_{t}(x,x';W(t)) \triangleq f\pth{x;W(t)}-f\pth{x;W\pth{t+1}}+ \eta_{t}H_{t}(x,x') \qth{f^{*}(x')+e_t-f(x';W(t))}.
  \label{def:epsilon}
\end{equation}
For simplicity, we use $\epsilon_{t}(x,x')$ to denote $\epsilon_{t}(x,x';W(t))$. 
Then from the definition of $\epsilon_{t}$, 
we have that
\begin{equation}
 f^{*}(x)-f(x;W(t+1))=f^{*}(x)-f(x;W(t))-\eta_{t} H_{t}(x,X_t)\qth{f^{*}(X_t)+e_t-f(X_t;W(t))}+\epsilon_{t}(x,X_t).
\label{eq:recursiveformula0_two_layer}
\end{equation}
Moreover, by (\ref{eq:2_layer_eps_lower}) and (\ref{eq:2_layer_eps_upper}),
\begin{equation}
  -\eta_{t}L_{t}(x,X_t)\qth{f^{*}(X_t)+e_t-f\pth{X_t;W(t)}} \leq \epsilon_{t}(x,X_t) \leq
-\eta_{t} M_{t}(x,X_t)\qth{f^{*}(X_t)+e_t-f\pth{X_t;W(t)}}.
\label{bone}  
\end{equation}

Recall $\Delta_t(x)=f^*(x)-f(x;W(t))$ and $\sfH_t$ is the integral operator associated with the kernel function $H_t$, we get 
\begin{equation}
\Delta_{t+1}(x)=\pth{\sfI-\eta_t\sfH_t}\circ \Delta_t(X_t)-v_t(x,X_t)+\epsilon_t(x,X_t),
\label{dynamic}
\end{equation}
where 
\begin{align*}
v_{t}(x,X_t) 
&\equiv v_{t}\pth{x,X_t;W(t)} \\
& \triangleq \eta_t H_{t}(x,X_t)\qth{f^{*}(X_t)+e_t-f\pth{X_t;W(t)}}-\eta_t\Expect_{X_t}\qth{H_{t}(x,X_t)\pth{f^{*}(X_t)-f\pth{X_t;W(t)}}  }
\end{align*}
characterizes the deviation of the stochastic gradient from its expectation.

Recall $\mathsf{\Phi}$ is the kernel operator associated with the kernel function 
$$
\Phi(x,x')=x^Tx'\Expect_{w\sim N(0,I_d)}\qth{\indc{w^Tx\geq 0}\indc{w^Tx'\geq 0}}.
$$ 
For notation simplicity, we define operators:
$$
\sfK_t=\sfI-\eta_t\mathsf{\Phi},\quad
\sfQ_t=\sfI-\eta_t\sfH_t,\quad
\sfD_t=\sfQ_t-\sfK_t.
$$
Note that $\opnorm{\sfD_t}=\opnorm{\sfQ_t-\sfK_t}\leq \eta_t\linf{\Phi-H_t}$.
Since $\sfH_{t}$ is positive semi-definite and $\linf{H_{t}}\leq 1$, we get that $0\leq \gamma_j\leq 1$ for all $j$, 
where $\gamma_i$ is the $i$-th largest eigenvalue of $\sfH_{t}$. 
Therefore, as $0\leq \eta_t\leq 2$,
\begin{equation}
    \opnorm{\sfQ_t}\leq \linf{\sfQ_t}\leq \sup_{1\leq i<\infty}\left|1-\eta_{t}\gamma_i\right|\leq 1.
    \label{bound_norm_calQ_t}
\end{equation}
Similarly, we can get that $\opnorm{\sfK_t}\leq 1$.

With the above notation, we can simplify (\ref{dynamic}) as
\begin{equation}
  \Delta_{t+1}=\sfQ_t \circ \Delta_t - v_{t}+\epsilon_{t}.
  \label{dynamic_simplify}
\end{equation}

Unrolling the recursion (\ref{dynamic_simplify}), we have
\begin{equation}
 \Delta_{t+1}=\prod^t_{s=0}\sfQ_s \circ \Delta_0 -\sum^{t}_{r=0}\prod^{t}_{s=r+1} \sfQ_s \circ v_{r}+ \sum^{t}_{r=0}\prod^{t}_{s=r+1} \sfQ_s \circ\epsilon_{r}. 
    \label{dynamic_0}
\end{equation}

In view of the definition of $H_t$, if $S_t(x)$ is small for any $x$, then we expect $H_t$ to be close to $H_0$. Further note that 
$\mathbb{E}_{W(0)}[H_0]=\Phi$
and thus $H_0$ concentrates on the NTK $\Phi$. 
By the triangle inequality, $H_t$ is close to 
$\Phi$ and hence
$\sfQ_t$ is close to $\sfK_t$.
To capture this idea, we decompose $\sfQ_t$ into $\sfK_t+\sfD_t$ in the first term on the right hand side of (\ref{dynamic_0}) to obtain 
\begin{align*}
\Delta_{t+1}=\prod^{t}_{s=0}\sfK_s\circ \Delta_0
+ \sum^{t}_{r=0}\pth{\prod^{t}_{i=r+1}\sfQ_i \sfD_{r}\prod^{r-1}_{j=0}\sfK_j \circ \Delta_0} -\sum^{t}_{r=0}\prod^{t}_{s=r+1} \sfQ_s \circ v_{r}+ \sum^{t}_{r=0}\prod^{t}_{s=r+1} \sfQ_s \circ\epsilon_{r},
\end{align*}
where the equality holds by $\prod^t_{s=0}\sfQ_s=\prod^t_{s=0}\sfK_s + \sum^{t}_{r=0}\prod^t_{i=r+1}\sfQ_i\sfD_{r}\prod^{r-1}_{j=0}\sfK_j$.

Taking the $L_2$ norm over both hand sides and using the triangle inequality, we get
\begin{align}
\opnorm{\Delta_{t+1}} 
&\leq \opnorm{\prod^{t}_{s=0}\sfK_s\circ \Delta_0} 
 + \sum^{t}_{r=0}\opnorm{\prod^{t}_{i=r+1}\sfQ_i \sfD_{r}\prod^{r-1}_{j=0}\sfK_j \circ \Delta_0}
 +\opnorm{\sum^{t}_{r=0}\prod^{t}_{s=r+1}\sfQ_s\circ v_{r}}
 + \sum^{t}_{r=0}\opnorm{\prod^{t}_{s=r+1}\sfQ_s\circ\epsilon_{r}}  \nonumber \\
 & \leq 
 \opnorm{\prod^{t}_{s=0}\sfK_s\circ \Delta_0} 
 + \sum^{t}_{r=0}\opnorm{ \sfD_{r} } \opnorm{\Delta_0}
 +\opnorm{\sum^{t}_{r=0}\prod^{t}_{s=r+1}\sfQ_s\circ v_{r}}
 + \sum^{t}_{r=0}\opnorm{\epsilon_{r}},
 \label{dynamic_T}
\end{align}
where the last inequality holds due to $\opnorm{\sfQ_s}\le 1$ and $\opnorm{\sfK_s} \le 1$.

Note that the first term in (\ref{dynamic_T}) does not depend on the sample drawn in SGD. The second term corresponds to the approximation error of using $\sfK_s$ instead of $\sfQ_s$. The third term measures the accumulation of the noise brought by the stochastic gradients. The last term measures the accumulation of the approximation error from the non-linearity of ReLU activation.

We will analyze (\ref{dynamic_T}) term by term, and then combine them to prove Proposition \ref{prop_induction}. 

\paragraph{\emph{First term:}}
Recall $\lambda_1\geq \lambda_2 \cdots $ are the eigenvalues of $\mathsf{\Phi}$ with corresponding eigenfunction $\phi_i$ and $\calR(g,\ell)=\sum_{i\geq \ell+1}\langle g,\phi_i\rangle^2$ is the $L_2$ norm of the projection of function $g$ onto the space spanned by the $\ell+1,\ell+2,\cdots $ eigenfunctions of $\mathsf{\Phi}$. The following lemma derives an upper bound of the first term of (\ref{dynamic_T}) via the eigendecomposition of $\mathsf{\Phi}$.

\begin{lemma} \label{bound_first_term}
Suppose $\eta_s\lambda_1< 1$ for any $s\leq t$, then,
$$
\opnorm{\prod^{t}_{s=0}\sfK_s\circ\Delta_0} \leq \inf_r \sth{\prod^t_{s=0}\pth{1-\eta_s\lambda_r}\opnorm{\Delta_0}+\calR(\Delta_0,r) }.
$$ 
\end{lemma}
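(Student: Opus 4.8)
The plan is to work in the eigenbasis $\{\phi_i\}_{i\ge 1}$ of the integral operator $\mathsf{\Phi}$, which is self-adjoint and positive semidefinite, so that $\{\phi_i\}$ form an orthonormal basis of $L^2(\mu)$ (modulo the kernel of $\mathsf{\Phi}$, which we can handle by noting $\sfK_s$ acts as the identity there). Since each $\sfK_s = \sfI - \eta_s \mathsf{\Phi}$ is a polynomial in $\mathsf{\Phi}$, all the $\sfK_s$ share these eigenfunctions: $\sfK_s \phi_i = (1-\eta_s\lambda_i)\phi_i$. Hence $\prod_{s=0}^t \sfK_s \phi_i = \prod_{s=0}^t (1-\eta_s\lambda_i)\,\phi_i$, and writing $\Delta_0 = \sum_i \langle \Delta_0,\phi_i\rangle \phi_i$ gives, by Parseval,
\begin{equation*}
\opnorm{\prod_{s=0}^t \sfK_s \circ \Delta_0}^2 = \sum_{i\ge 1} \prod_{s=0}^t (1-\eta_s\lambda_i)^2 \,\langle \Delta_0,\phi_i\rangle^2.
\end{equation*}

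Next I would fix an arbitrary index $r$ and split the sum at $i=r$. For $i\le r$ we have $\lambda_i\ge \lambda_r$, and since the hypothesis $\eta_s\lambda_1<1$ guarantees $0\le 1-\eta_s\lambda_i \le 1-\eta_s\lambda_r$ for every $s\le t$ (all factors lie in $[0,1)$, so squaring preserves the inequality), we bound $\prod_{s=0}^t(1-\eta_s\lambda_i)^2 \le \prod_{s=0}^t(1-\eta_s\lambda_r)^2$. For $i\ge r+1$ we simply use $\prod_{s=0}^t(1-\eta_s\lambda_i)^2 \le 1$. This yields
\begin{equation*}
\opnorm{\prod_{s=0}^t \sfK_s \circ \Delta_0}^2 \le \prod_{s=0}^t(1-\eta_s\lambda_r)^2 \sum_{i\le r}\langle\Delta_0,\phi_i\rangle^2 + \sum_{i\ge r+1}\langle\Delta_0,\phi_i\rangle^2 \le \prod_{s=0}^t(1-\eta_s\lambda_r)^2 \opnorm{\Delta_0}^2 + \calR(\Delta_0,r)^2,
\end{equation*}
using $\sum_{i\le r}\langle\Delta_0,\phi_i\rangle^2 \le \opnorm{\Delta_0}^2$ and the definition of $\calR(\Delta_0,r)$. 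Taking square roots and applying $\sqrt{a+b}\le \sqrt a+\sqrt b$ gives the bound $\prod_{s=0}^t(1-\eta_s\lambda_r)\opnorm{\Delta_0} + \calR(\Delta_0,r)$ for this fixed $r$; since $r$ was arbitrary, taking the infimum over $r$ finishes the proof.

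There is no serious obstacle here; the only points needing mild care are (i) justifying that $\{\phi_i\}$ together with the orthogonal complement $\ker\mathsf{\Phi}$ span $L^2(\mu)$ — this is the spectral theorem for the compact self-adjoint operator $\mathsf{\Phi}$, and on $\ker\mathsf{\Phi}$ every $\sfK_s$ is the identity, so that component contributes $\langle\Delta_0,\phi\rangle^2$ terms with $\lambda=0$ which are swept into $\calR(\Delta_0,r)$ just like the $i\ge r+1$ tail; and (ii) checking the sign of each factor $1-\eta_s\lambda_i$, which is where the hypothesis $\eta_s\lambda_1<1$ is used: it ensures $1-\eta_s\lambda_i\in(0,1]$ for all $i$, so the monotonicity step $\lambda_i\ge\lambda_r\Rightarrow (1-\eta_s\lambda_i)^2\le(1-\eta_s\lambda_r)^2$ is valid (without it, a large negative factor could be squared to exceed $1$). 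If the "hard part" must be named, it is simply being careful that the infimum is over $r$ outside the square root, so one must produce the clean two-term bound for each fixed $r$ before optimizing.
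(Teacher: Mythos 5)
Your proof is correct and follows essentially the same approach as the paper's: expand $\Delta_0$ in the eigenbasis of $\mathsf{\Phi}$, compute $\opnorm{\prod_s \sfK_s \circ \Delta_0}^2 = \sum_i \rho_i(t)^2\langle\Delta_0,\phi_i\rangle^2$, split at index $r$, bound the head by $\rho_r(t)^2\opnorm{\Delta_0}^2$ and the tail by $\calR(\Delta_0,r)^2$, then take square roots and the infimum over $r$. The extra care you add (positivity of the factors via $\eta_s\lambda_1<1$, the kernel of $\mathsf{\Phi}$, and noting the infimum must be taken after producing the two-term bound) is all sound and merely spells out what the paper leaves implicit.
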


\paragraph{\emph{Second term:}}
To bound the second term of 
(\ref{dynamic_T}), it remains to bound $\sum^t_{r=0}\opnorm{\sfD_r}$. Note that
\begin{equation}
\opnorm{\sfD_r}=\opnorm{\sfQ_r-\sfK_r}\leq \eta_r\linf{H_r-\Phi}.
\label{Dr_in_Hr_Phi}
\end{equation}
Lemma \ref{bound_Ht_Phi} and Lemma \ref{bound_St} below together provide an upper bound of $\linf{H_r-\Phi}$ under event $\Omega_1\cap \Omega_2$, where
\begin{align} \label{def_Omega_1}
\Omega_1&=\Biggl\{\sup_{x,R}\biggl| \frac{1}{m}\sum^{m}_{i=1}\indc{|\langle W_{i}(0), x\rangle | \leq R }-\Expect_{w\sim N(0,I_d)}\qth{\indc{|\langle w,x \rangle | \leq R}} \biggr| \leq \frac{1}{m^{1/3}}+C_2\sqrt{\frac{d}{m}}\Biggr\},\\
\Omega_2&=\Biggl\{\sup_{x,\tilde{x}} \biggl|\frac{1}{m} \sum^{m}_{i=1}\indc{\langle W_i(0),x\rangle \geq 0}\indc{\langle W_i(0),\tilde{x}\rangle \geq 0}-\Expect_{w\sim N(0,I_d)}\qth{\indc{\langle w,x\rangle \geq 0}\indc{\langle w,\tilde{x}\rangle \geq 0})}\biggr|\leq \frac{1}{m^{1/3}}+C_3\sqrt{\frac{d}{m}} \Biggr\},
\label{def_Omega_2}
\end{align}
for some universal constants $C_2>1$ and $C_3>1$.

Both events are defined with respect to the initial randomness $W(0)$, and require the sample mean of some function of $W_i(0)$ to be close to the expectation. 
Since $W_i(0)$'s are $\iid$ Gaussian, using uniform concentration inequalities, we will show later in Lemma \ref{bound_prob_Omega} that both $\Omega_1$ and $\Omega_2$ occur with high probability when $m$ is large.

Recall $O_t(x)=\left\{i: \text{sgn}\pth{\langle W_{i}(t),x\rangle }\neq \text{sgn}\pth{ \langle W_{i}(0),x\rangle }   \right\}$ is the set of neurons that have sign flips at iteration $t$ when the input data is $x$ and $S_t(x)$ is the cardinality of $O_t(x)$. Lemma \ref{bound_Ht_Phi} provides an upper bound 
on $\linf{H_t-\Phi}$ in terms of the number of sign changes
$\linf{S_t}$. 

\begin{lemma} \label{bound_Ht_Phi}
Under $\Omega_2$, for any $t\geq 0$,
\begin{align*}
\linf{H_t-\Phi}&\leq  \frac{2}{m}\linf{S_t}+C_3\sqrt{\frac{d}{m}}+\frac{1}{m^{1/3}}.
\end{align*}
\end{lemma}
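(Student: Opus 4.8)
The plan is to interpolate through the kernel $H_0$ built from the \emph{initial} weights $W(0)$, and split by the triangle inequality:
$$\linf{H_t-\Phi}\le \linf{H_t-H_0}+\linf{H_0-\Phi}.$$
The second term is handled immediately by the event $\Omega_2$: since $x,\tilde x\in\mathbb{S}^{d-1}$ we have $|\langle x,\tilde x\rangle|\le 1$, so for every $x,\tilde x$ the quantity $|H_0(x,\tilde x)-\Phi(x,\tilde x)|$ is at most the deviation $\bigl|\frac{1}{m}\sum_i \indc{\langle W_i(0),x\rangle\ge0}\indc{\langle W_i(0),\tilde x\rangle\ge0}-\Expect_{w}\qth{\indc{\langle w,x\rangle\ge0}\indc{\langle w,\tilde x\rangle\ge0}}\bigr|$ that defines $\Omega_2$, whence $\linf{H_0-\Phi}\le \frac{1}{m^{1/3}}+C_3\sqrt{d/m}$ on $\Omega_2$.

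For the first term I would bound the per-entry difference by the number of sign flips. Fix $x,\tilde x$ and write $p_i^{(s)}=\indc{\langle W_i(s),x\rangle\ge0}$, $q_i^{(s)}=\indc{\langle W_i(s),\tilde x\rangle\ge0}$. Using the algebraic identity $p_i^{(t)}q_i^{(t)}-p_i^{(0)}q_i^{(0)}=p_i^{(t)}\pth{q_i^{(t)}-q_i^{(0)}}+q_i^{(0)}\pth{p_i^{(t)}-p_i^{(0)}}$, the bounds $|p_i^{(t)}|\le1$, $|q_i^{(0)}|\le1$, and the observation that $p_i^{(t)}\ne p_i^{(0)}$ forces $i\in O_t(x)$ while $q_i^{(t)}\ne q_i^{(0)}$ forces $i\in O_t(\tilde x)$, we get $\bigl|p_i^{(t)}q_i^{(t)}-p_i^{(0)}q_i^{(0)}\bigr|\le \indc{i\in O_t(x)}+\indc{i\in O_t(\tilde x)}$. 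Summing over $i$ and multiplying by $|\langle x,\tilde x\rangle|/m\le 1/m$ gives
$$\bigl|H_t(x,\tilde x)-H_0(x,\tilde x)\bigr|\le \frac{1}{m}\pth{S_t(x)+S_t(\tilde x)}\le \frac{2}{m}\linf{S_t},$$
and taking the supremum over $x,\tilde x$ yields $\linf{H_t-H_0}\le \frac{2}{m}\linf{S_t}$. Adding the two bounds proves the lemma.

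The only subtlety worth spelling out is the step "$p_i^{(t)}\ne p_i^{(0)}$ implies $i\in O_t(x)$", i.e.\ that a preserved value of $\text{sgn}(\langle W_i(\cdot),x\rangle)$ implies a preserved indicator $\indc{\langle W_i(\cdot),x\rangle\ge0}$; this holds under the convention used to define $O_t$, and the degenerate case $\langle W_i(s),x\rangle=0$ can be discarded since over the Gaussian initialization and the finitely many samples drawn up to iteration $t$ it occurs with probability zero. Beyond this, the argument is a deterministic decomposition with the single probabilistic input ($\Omega_2$) already assumed, so I do not anticipate any real obstacle; the content is entirely in recognizing that the product of threshold indicators can only change on the union $O_t(x)\cup O_t(\tilde x)$.
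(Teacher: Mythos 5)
Your proof is correct and takes essentially the same route as the paper's: interpolate through $H_0$, invoke $\Omega_2$ for $\linf{H_0-\Phi}$, and telescope the indicator products to bound $\linf{H_t-H_0}$ by $\tfrac{2}{m}\linf{S_t}$. The ``subtlety'' you flag about $\text{sgn}$ versus $\indc{\cdot\geq 0}$ is in fact vacuous: a change in $\indc{\langle W_i(\cdot),x\rangle\geq 0}$ always forces a change in $\text{sgn}(\langle W_i(\cdot),x\rangle)$, so the inequality $|\indc{\langle W_i(t),x\rangle\geq 0}-\indc{\langle W_i(0),x\rangle\geq 0}|\leq \indc{i\in O_t(x)}$ holds deterministically and no measure-zero argument is needed.
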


Lemma \ref{bound_Ht_Phi} directly
follows from the triangle inequality $\linf{H_t-\Phi}\leq \linf{H_t-H_0}+\linf{H_0-\Phi}$ 
and the definition of $\Omega_2$.

The following result further shows that 
when $\fnorm{W(t)-W(0)}$ is small and $m$ is large,
under $\Omega_1$,  $\|S_t\|_\infty$ is small.

\begin{lemma} \label{bound_St}
Under $\Omega_1$,
$$
\frac{1}{m}\linf{S_t}\leq \frac{1}{m^{1/3}}+C_2\sqrt{\frac{d}{m}}+ \frac{2^{\frac{4}{3}}\fnorm{W(t)-W(0)}^{\frac{2}{3}}}{m^{1/3}\pi^{1/3}}.
$$
\end{lemma}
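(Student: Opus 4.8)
The plan is to reduce the sign-flip count $S_t(x)$ to a level-set count of the initial inner products $\langle W_i(0),x\rangle$ and then invoke $\Omega_1$ together with a Gaussian anti-concentration estimate. First I would fix $x\in\mathbb{S}^{d-1}$ and observe that if $i\in O_t(x)$, then $\text{sgn}(\langle W_i(t),x\rangle)\neq\text{sgn}(\langle W_i(0),x\rangle)$, which forces $|\langle W_i(t)-W_i(0),x\rangle|\geq|\langle W_i(0),x\rangle|$; by Cauchy--Schwarz and $\opnorm{x}=1$ this yields $|\langle W_i(0),x\rangle|\leq\opnorm{W_i(t)-W_i(0)}$. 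Writing $R_i\triangleq\opnorm{W_i(t)-W_i(0)}$, we obtain $S_t(x)\leq\sum_{i=1}^m\indc{|\langle W_i(0),x\rangle|\leq R_i}$.

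Next, for an arbitrary $R>0$ I would split this sum according to whether $R_i\leq R$ or $R_i>R$. On $\{i:R_i\leq R\}$ we have $\indc{|\langle W_i(0),x\rangle|\leq R_i}\leq\indc{|\langle W_i(0),x\rangle|\leq R}$, so summing over \emph{all} $i$ and applying $\Omega_1$ (whose supremum ranges over both $x$ and $R$, which is exactly what makes the estimate uniform in $x$),
$$
\frac{1}{m}\sum_{i:R_i\leq R}\indc{|\langle W_i(0),x\rangle|\leq R}\leq\frac{1}{m^{1/3}}+C_2\sqrt{\frac{d}{m}}+\Prob_{w\sim N(0,I_d)}\{|\langle w,x\rangle|\leq R\}.
$$
Since $\langle w,x\rangle\sim N(0,1)$ for $\opnorm{x}=1$, the bound $\tfrac{1}{\sqrt{2\pi}}$ on the standard Gaussian density gives $\Prob_{w\sim N(0,I_d)}\{|\langle w,x\rangle|\leq R\}\leq\sqrt{2/\pi}\,R$. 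On $\{i:R_i>R\}$, Markov's inequality with $\sum_{i=1}^m R_i^2=\fnorm{W(t)-W(0)}^2$ gives $|\{i:R_i>R\}|\leq\fnorm{W(t)-W(0)}^2/R^2$.

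Combining the two pieces, for every $R>0$ and every $x$,
$$
\frac{1}{m}S_t(x)\leq\frac{1}{m^{1/3}}+C_2\sqrt{\frac{d}{m}}+\sqrt{\frac{2}{\pi}}\,R+\frac{\fnorm{W(t)-W(0)}^2}{mR^2}.
$$
Finally I would optimize over $R$, balancing the last two terms by taking $R=(\pi/2)^{1/6}\pth{\fnorm{W(t)-W(0)}^2/m}^{1/3}$; each of them then equals $(2/\pi)^{1/3}\fnorm{W(t)-W(0)}^{2/3}/m^{1/3}$, so their sum is $2^{4/3}\fnorm{W(t)-W(0)}^{2/3}/(\pi^{1/3}m^{1/3})$. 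Taking the supremum over $x\in\mathbb{S}^{d-1}$ then gives the claimed inequality.

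The only subtle point is conceptual rather than computational: $S_t$ cannot be concentrated directly because $W(t)$ depends on the random SGD trajectory. The argument works precisely because it separates a purely initialization-dependent empirical process — controlled uniformly in $x$ and $R$ by $\Omega_1$ — from the deterministic quantity $\fnorm{W(t)-W(0)}$, which will be bounded elsewhere (via Lemma~\ref{bound_W_exp} and~(\ref{bound_W_in_T})). One should also note that $R$ is chosen only after conditioning on the trajectory, so the inequality ``for every $R>0$'' is applied to a value of $R$ that may depend on $\fnorm{W(t)-W(0)}$, with no circularity. Recovering the exact constant $2^{4/3}/\pi^{1/3}$ requires the sharp anti-concentration constant $\sqrt{2/\pi}$ and this exact balancing of $R$.
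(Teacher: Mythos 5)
Your proof is correct and is essentially the same argument as the paper's: both bound $S_t(x)$ by the number of neurons with small initial inner product (controlled uniformly by $\Omega_1$ and the Gaussian density bound $2/\sqrt{2\pi}$) plus a count controlled by $\fnorm{W(t)-W(0)}^2/R^2$, and then optimize over $R$ with the same choice $R=(\pi/2)^{1/6}(\fnorm{W(t)-W(0)}^2/m)^{1/3}$. The only cosmetic difference is that you first reduce to $\sum_i\indc{|\langle W_i(0),x\rangle|\le R_i}$ with per-neuron radii $R_i$ and split on $R_i\lessgtr R$, whereas the paper splits $O_t(x)$ directly on $|\langle W_i(0),x\rangle|\lessgtr R$; the two decompositions yield the identical intermediate inequality.
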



Now, we bound $\Expect\qth{\linf{H_t-\Phi}}$ from above. Note that from Lemma \ref{bound_St}, we have
\begin{align}\notag
\Expect\qth{\frac{1}{m}\linf{S_t}}&=\Expect\qth{\frac{1}{m}\linf{S_t} \indc{ \fnorm{W(t)-W(0)}< m^{1/3}} }+ \Expect\qth{\frac{1}{m}\linf{S_t} \indc{ \fnorm{W(t)-W(0)}>m^{1/3}   }  } \\ 
&\leq \pth{\frac{1}{m^{1/3}}+C_2\sqrt{\frac{d}{m}}+\frac{2^{4/3}}{\pi^{1/3}m^{1/9}}}+\Prob\qth{ \fnorm{W(t)-W(0)}>m^{1/3} },
\label{bound_St_m}
\end{align}
where the inequality holds by $\frac{1}{m}\linf{S_t}\leq 1$.

For the first component on the right hand side of (\ref{bound_St_m}), for $m\geq 2^{14}C^4_2d^2$, we have
\begin{equation}
\frac{1}{m^{1/3}}+C_2\sqrt{\frac{d}{m}}+\frac{2^{4/3}}{\pi^{1/3}m^{1/9}}\leq \frac{2}{m^{1/9}},
\label{bound_St_dw_small}
\end{equation}
 where the last inequality holds by 
 \begin{equation}
 \frac{1}{m^{1/3}}\leq \frac{1}{8m^{1/9}},    
 \label{condition_m_1}
 \end{equation}
 when $m\geq 2^{14}$ and 
 \begin{equation}
 C_2\sqrt{\frac{d}{m}}\leq \sqrt{\frac{1}{2^{7}m^{1/2}}}=\frac{1}{2^{7/2}m^{1/4}}\leq \frac{1}{14m^{1/9}},
 \label{condition_m_2}
 \end{equation}
when $m\geq 2^{14}C^4_2d^2$.

\vspace{\medskipamount}

For the second component on the right hand side of (\ref{bound_St_m}), by (\ref{bound_W_in_T}) and Markov's inequality, we have
\begin{align}
\Prob\qth{\fnorm{W(t)-W(0)}>m^{1/3}}\leq \frac{\pth{\opnorm{\Delta_0}+\tau+2c_1}\theta\pth{\log(T)+1}}{m^{1/3}}.
\label{Markov_prob_Wt}
\end{align}

Denote 
\begin{equation}
 \Omega_3=\sth{\opnorm{\Delta_0}\leq \frac{\sqrt{\opnorm{f^*}^2+1}}{\delta}},   
\label{def_Omega_3}
\end{equation}
where $0<\delta<1$. Under $\Omega_3$, we can further bound the right hand side of (\ref{Markov_prob_Wt}) in terms of $\delta$. In particular, denote $\kappa\equiv\kappa(\delta,\tau,\theta)\triangleq \frac{\sqrt{\opnorm{f^*}^2+1}}{\delta}+\tau+2c_1$. Under $\Omega_3$, we have
\begin{align}
\frac{\pth{\opnorm{\Delta_0}+\tau+2c_1}\theta\pth{\log(T)+1}}{m^{1/3}}\leq \frac{\kappa \theta \pth{\log T+1}}{m^{1/3}}\overset{(a)}{\leq} \frac{1}{m^{2/9}}\leq \frac{1}{m^{1/9}},    \label{bound_St_dw_large}
\end{align}
where (a) holds when $m\geq \pth{\kappa \theta \pth{\log T+1}}^9$.

Plugging (\ref{bound_St_dw_small}) and (\ref{bound_St_dw_large}) into (\ref{bound_St_m}), we get 
\begin{equation}
\Expect\qth{\frac{1}{m}\linf{S_t}}\leq \frac{3}{m^{1/9}}.    
\label{bound_St_Exp}
\end{equation}

By Lemma \ref{bound_Ht_Phi}, we get
\begin{equation}
\Expect\qth{\linf{H_t-\Phi}}\leq \Expect\qth{\frac{2}{m}\linf{S_t}}+C_3\sqrt{\frac{d}{m}}+\frac{1}{m^{1/3}}\overset{(a)}{\leq} \frac{7}{m^{1/9}},
\label{bound_Dt}
\end{equation}
for 
\begin{align}
m&\geq \max\sth{2^{14}\pth{C_2^4+C^4_3}d^2, \qth{\kappa \theta \pth{\log(T)+1}}^9},
\label{eq_condition_m_lower_from_Dt}
\end{align}
where (a) holds by (\ref{bound_St_Exp}), (\ref{condition_m_1}), and (\ref{condition_m_2}) with $C_2$ replaced by $C_3$.

\vspace{\medskipamount}

To bound $\Expect\qth{\opnorm{\sfD_t}}$ from above, we combine (\ref{bound_Dt}) with (\ref{Dr_in_Hr_Phi}) to obtain
\begin{equation*}
    \Expect\qth{\opnorm{\sfD_t}}\leq \eta_t\Expect\qth{\linf{H_t-\Phi} }\leq \frac{7\eta_t}{m^{1/9}}.
\end{equation*}

As a result,
\begin{equation}
\sum^t_{r=0}\Expect\qth{\opnorm{\sfD_r}}\opnorm{\Delta_0}\leq \sum^t_{r=0}\frac{7
\eta_r \opnorm{\Delta_0}}{m^{1/9}}\overset{(a)}{\leq} \frac{7\theta \pth{\log(t+1) +1}\opnorm{\Delta_0} }{m^{1/9}},
      \label{bound_Dt_Exp} 
\end{equation}
where (a) holds by $\eta_r\leq \frac{\theta}{r+1}$.

\paragraph{\emph{Third term:}}
Next we derive an upper bound of the third term of (\ref{dynamic_T}). Denote $\sigma^2_t=\Expect\qth{\opnorm{\Delta_t}^2}+\tau^2$.

\begin{lemma} \label{bound_v_term}
Suppose $0\leq \eta_s\leq 2$ for any $s\geq 0$, then,
\begin{align*}
\Expect \qth{\opnorm{\sum^{t}_{s=0}\prod^{t}_{i=s+1}\sfQ_i\circ v_{s}}}\leq \sqrt{ \sum^{t}_{s=0}\eta^2_s\sigma^2_s}.
\end{align*}
\end{lemma}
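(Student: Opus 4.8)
The plan is to exploit the fact that the increments $v_s$ form a martingale-difference sequence and to cash in the resulting orthogonality; a crude triangle-inequality bound using $\opnorm{\sfQ_i}\le 1$ would only give $\sum_s\eta_s\sigma_s$, whereas the martingale cancellation is exactly what produces the square root $\sqrt{\sum_s\eta_s^2\sigma_s^2}$. First I would fix the natural filtration $\calF_r\triangleq\sigma(X_0,e_0,\ldots,X_{r-1},e_{r-1})$, so that $W(r)$, and hence $H_r$ and $\sfQ_r$, are $\calF_r$-measurable, while the fresh sample $(X_r,e_r)$ is independent of $\calF_r$. Writing $\xi_r(x)\triangleq H_r(x,X_r)\pth{\Delta_r(X_r)+e_r}$, the definition of $v_r$ (the displayed equation following (\ref{dynamic})) reads $v_r=\eta_r\pth{\xi_r-\Expect\qth{\xi_r\mid\calF_r}}$, because $e_r$ being mean-zero and independent of $(X_r,\calF_r)$ forces $\Expect\qth{\xi_r(x)\mid\calF_r}=\Expect_{X_r}\qth{H_r(x,X_r)\Delta_r(X_r)}$, which is precisely the term subtracted off in $v_r$. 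In particular $\Expect\qth{v_r\mid\calF_r}=0$ as an element of $L^2(\mu)$.

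The key device is to rewrite the quantity of interest recursively rather than as one long sum. Put $Z_{-1}=0$ and $Z_r\triangleq\sum^r_{s=0}\prod^r_{i=s+1}\sfQ_i\circ v_s$, so that the left-hand side of the lemma equals $\Expect\qth{\opnorm{Z_t}}$, and note the one-step recursion $Z_r=\sfQ_r\circ Z_{r-1}+v_r$. Expanding $\opnorm{Z_r}^2=\opnorm{\sfQ_r\circ Z_{r-1}}^2+2\Iprod{\sfQ_r\circ Z_{r-1}}{v_r}+\opnorm{v_r}^2$ and taking expectations, the cross term drops out: $\sfQ_r\circ Z_{r-1}$ is $\calF_r$-measurable, so conditioning on $\calF_r$ and moving $\Expect$ through the $\mu$-integral defining the inner product (Fubini, legitimate since $H_r$ is bounded and $\Delta_r,e_r$ are square-integrable) gives $\Expect\qth{\Iprod{\sfQ_r\circ Z_{r-1}}{v_r}\mid\calF_r}=\Iprod{\sfQ_r\circ Z_{r-1}}{\Expect\qth{v_r\mid\calF_r}}=0$. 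Since $\opnorm{\sfQ_r}\le 1$ by (\ref{bound_norm_calQ_t}), $\Expect\qth{\opnorm{\sfQ_r\circ Z_{r-1}}^2}\le\Expect\qth{\opnorm{Z_{r-1}}^2}$, and iterating from $Z_{-1}=0$ yields $\Expect\qth{\opnorm{Z_t}^2}\le\sum^t_{r=0}\Expect\qth{\opnorm{v_r}^2}$.

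It then remains to bound each $\Expect\qth{\opnorm{v_r}^2}$. Conditionally on $\calF_r$, the identity $v_r/\eta_r=\xi_r-\Expect\qth{\xi_r\mid\calF_r}$ and the ``variance $\le$ second moment'' inequality, applied pointwise in $x$ and integrated against $\mu$, give $\Expect\qth{\opnorm{v_r}^2\mid\calF_r}\le\eta_r^2\,\Expect\qth{\opnorm{\xi_r}^2\mid\calF_r}$. Because $\linf{H_r}\le 1$ (as noted just before (\ref{bound_norm_calQ_t})) and $\mu$ is a probability measure on $\mathbb{S}^{d-1}$, one has $\opnorm{\xi_r}^2=\pth{\Delta_r(X_r)+e_r}^2\int H_r(x,X_r)^2\,\mu(dx)\le\pth{\Delta_r(X_r)+e_r}^2$; taking the $\calF_r$-conditional expectation and using $\Expect\qth{e_r}=0$, $\Expect\qth{e_r^2}=\tau^2$ and independence of $e_r$ from $X_r$ gives $\Expect\qth{\opnorm{\xi_r}^2\mid\calF_r}\le\Expect_{X_r}\qth{\Delta_r(X_r)^2}+\tau^2=\opnorm{\Delta_r}^2+\tau^2$. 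Hence $\Expect\qth{\opnorm{v_r}^2}\le\eta_r^2\pth{\Expect\qth{\opnorm{\Delta_r}^2}+\tau^2}=\eta_r^2\sigma_r^2$, and combining with the recursion bound and Jensen's inequality closes the argument: $\Expect\qth{\opnorm{Z_t}}\le\sqrt{\Expect\qth{\opnorm{Z_t}^2}}\le\sqrt{\sum^t_{r=0}\eta_r^2\sigma_r^2}$.

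The main obstacle — in fact the only non-routine point — is that the operators $\prod^t_{i=s+1}\sfQ_i$ multiplying $v_s$ in the original sum depend on the samples $X_s,\ldots,X_{t-1}$ drawn \emph{after} iteration $s$, so the sum is not a martingale transform with predictable coefficients and the cross terms are not obviously orthogonal. Passing to the recursion $Z_r=\sfQ_r\circ Z_{r-1}+v_r$ is what resolves this, since there both $\sfQ_r$ (a function of $W(r)$, hence of $X_0,\ldots,X_{r-1}$) and $Z_{r-1}$ are $\calF_r$-measurable, so the single new increment $v_r$ is genuinely orthogonal to everything accumulated so far. The remaining steps are bookkeeping, modulo the Fubini interchange, which is harmless here.
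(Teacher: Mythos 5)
Your proof is correct and follows essentially the same route as the paper's: the same recursion $Z_r=\sfQ_r\circ Z_{r-1}+v_r$ (the paper calls it $q_t=v_t+h_t$ with $h_t=\sfQ_t\circ q_{t-1}$), the same martingale-difference orthogonality to kill the cross term, the same $\opnorm{\sfQ_r}\le 1$ contraction, the same $\Expect[\opnorm{v_r}^2]\le\eta_r^2\sigma_r^2$ bound, and the same final Jensen/Cauchy--Schwarz step. The only (slight) improvement is that you state the filtration precisely as $\calF_r=\sigma(X_0,e_0,\ldots,X_{r-1},e_{r-1})$ and explicitly verify that $\sfQ_r$ and $Z_{r-1}$ are $\calF_r$-measurable, whereas the paper writes $F_t$ informally as the filtration of $\{X_1,\ldots,X_t\}$ and leaves the noise variables and the measurability of $\sfQ_t$ implicit.
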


\begin{remark}
One key technical challenge is how to control the
accumulation of the noise $v_{t}$ due to the stochasticity of the gradients. Unlike the conventional SGD analysis such as \cite{nemi2009robust}, there is no deterministic upper bound on $\opnorm{v_t}$. 
In the existing neural networks literature on SGD 
such as \cite{allen2019convergence}, a vanishing step size with order $O\pth{\frac{1}{\log m}}$ is used to ensure a small accumulation of the noise $v_t$, which is particularly undesirable in the overparameterized regime when $m$ is large. In contrast, we utilize the fact that  $v_t$ is a sequence of martingale difference and carefully bound the accumulation of $v_{t}$ in expectation in Lemma \ref{bound_v_term} when
$\eta_t =O(1/t)$. The detailed proof is provided in Appendix \ref{sec_analysis}.
\end{remark}

We see the third term depends on $\sigma_t$. The next lemma shows that $\sigma_t$ does not grow fast in $t$.
\begin{lemma} \label{sigma_t_recursive}
For any $t\geq 0$, 
$$
\sigma^2_{t+1} \leq \prod^{t}_{s=0}\left(1+2\eta_{s} \right)^2\sigma^2_0.
$$
\end{lemma}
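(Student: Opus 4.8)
The plan is to prove the one-step inequality $\sigma_{t+1}^2 \le (1+2\eta_t)^2 \sigma_t^2$ and then unroll it. The first ingredient is a uniform-in-$x$ bound on the one-step change of the network output. Since ReLU is $1$-Lipschitz, $|a_i|=1$, and all inputs lie on $\mathbb{S}^{d-1}$, the weight update \eqref{eq:W_update} gives, for every $x\in\mathbb{S}^{d-1}$,
\[
\left|f(x;W(t+1))-f(x;W(t))\right|\le \frac{1}{\sqrt m}\sum_{i=1}^m\|W_i(t+1)-W_i(t)\|\le \eta_t\,|\Delta_t(X_t)+e_t|,
\]
where the last step uses $\|W_i(t+1)-W_i(t)\|\le \frac{\eta_t}{\sqrt m}|\Delta_t(X_t)+e_t|$, read off directly from \eqref{eq:W_update}. (Equivalently, one may argue from the sandwich \eqref{eq:2_layer_eps_lower}--\eqref{eq:2_layer_eps_upper} together with $\linf{H_t+L_t}\le 1$ and $\linf{H_t+M_t}\le 1$, which hold since $H_t+L_t$ and $H_t+M_t$ each equal $\langle x,\tilde x\rangle$ times an average over the $m$ neurons of $\{0,1\}$-valued terms.) Hence $|\Delta_{t+1}(x)|\le |\Delta_t(x)|+\eta_t|\Delta_t(X_t)+e_t|$ for all $x$.

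Squaring, integrating against the probability measure $\mu$, and using $\int|\Delta_t|\,d\mu\le\opnorm{\Delta_t}$ (Cauchy--Schwarz, $\mu(\mathbb{S}^{d-1})=1$) yields
\[
\opnorm{\Delta_{t+1}}^2\le \opnorm{\Delta_t}^2+2\eta_t\opnorm{\Delta_t}\,|\Delta_t(X_t)+e_t|+\eta_t^2\,|\Delta_t(X_t)+e_t|^2 .
\]
Let $\calF_t$ denote the $\sigma$-algebra generated by the first $t$ fresh samples $(X_0,e_0),\dots,(X_{t-1},e_{t-1})$, so that $W(t)$ and the function $\Delta_t$ are $\calF_t$-measurable while $(X_t,e_t)$ is drawn independently of $\calF_t$. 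From $\Expect[e_t]=0$, $\Expect[e_t^2]=\tau^2$ and the independence of $e_t$ and $X_t$, we get $\Expect\qth{|\Delta_t(X_t)+e_t|^2\mid\calF_t}=\opnorm{\Delta_t}^2+\tau^2$ and, by conditional Jensen, $\Expect\qth{|\Delta_t(X_t)+e_t|\mid\calF_t}\le\sqrt{\opnorm{\Delta_t}^2+\tau^2}$. Combining these with $\opnorm{\Delta_t}\le\sqrt{\opnorm{\Delta_t}^2+\tau^2}$ and taking the conditional expectation of the last display gives
\[
\Expect\qth{\opnorm{\Delta_{t+1}}^2\mid\calF_t}\le (\opnorm{\Delta_t}^2+\tau^2)(1+2\eta_t+\eta_t^2)-\tau^2\le (\opnorm{\Delta_t}^2+\tau^2)(1+2\eta_t)^2-\tau^2 .
\]
Taking total expectation and adding $\tau^2$ to both sides gives $\sigma_{t+1}^2\le(1+2\eta_t)^2\sigma_t^2$; iterating from $s=0$ to $s=t$ proves the lemma.

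I do not anticipate a genuine obstacle; this is a second-moment recursion. The only point needing care is the conditioning: one must make precise that $\Delta_t$ (hence $\opnorm{\Delta_t}$) is measurable with respect to the history up to time $t$ while $(X_t,e_t)$ is a fresh sample independent of it, since this is exactly what collapses $\Expect\qth{|\Delta_t(X_t)+e_t|^2\mid\calF_t}$ to $\opnorm{\Delta_t}^2+\tau^2$ and is where the one-pass structure enters. Note that the factor $(1+2\eta_t)^2$ in the statement is not tight: the argument above in fact delivers the stronger $(1+\eta_t)^2$.
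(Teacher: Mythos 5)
Your proof is correct, and it takes a genuinely different route from the paper's. The paper reuses the operator decomposition $\Delta_{t+1}=\sfQ_t\circ\Delta_t - v_t + \epsilon_t$ from \eqref{dynamic_simplify}: it expands $\opnorm{\Delta_{t+1}}^2$, uses $\opnorm{\sfQ_t}\le 1$ and Cauchy--Schwarz on the cross terms, and then plugs in the previously derived second-moment bounds $\Expect\qth{\opnorm{v_t}^2}\le\eta_t^2\sigma_t^2$ (from \eqref{v_in_sigma_form}) and $\Expect\qth{\opnorm{\epsilon_t}^2}\le\eta_t^2\sigma_t^2$ (from \eqref{bone} together with $\linf{L_t},\linf{M_t}\le 1$). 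You instead bypass that machinery entirely with the single pointwise observation $|f(x;W(t+1))-f(x;W(t))|\le\eta_t|\Delta_t(X_t)+e_t|$, derived directly from the $1$-Lipschitzness of ReLU and the form of the SGD update, then square, integrate, and take conditional expectation. Your route is more elementary and self-contained (it does not need the $\sfQ_t,v_t,\epsilon_t$ bookkeeping), and as you note it actually yields the sharper one-step factor $(1+\eta_t)^2$; the paper's approach pays $(1+2\eta_t)^2$ because it bounds the gradient-noise term $v_t$ and the linearization-error term $\epsilon_t$ separately, each costing $\eta_t\sigma_t$, whereas your bound treats their sum as a single quantity bounded by $\eta_t|\Delta_t(X_t)+e_t|$. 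The trade-off is that the paper's decomposition is the one it needs anyway for the main recursion \eqref{dynamic_T}, so reusing it here costs nothing, while your argument is the natural standalone proof of this lemma. One small remark: your parenthetical alternative via the sandwich \eqref{eq:2_layer_eps_lower}--\eqref{eq:2_layer_eps_upper} is also fine, but note the sandwich brackets the (signed) increment between two quantities each of absolute value at most $\eta_t|\Delta_t(X_t)+e_t|$, which is exactly what the Lipschitz calculation gives directly.
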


By Lemma \ref{sigma_t_recursive} and recalling $\eta_t\leq \frac{\theta}{t+1}$, we get
\begin{align} 
\eta_r\sigma_r&\leq \frac{\theta}{r+1}\prod^{r-1}_{k=0}\pth{1+\frac{2\theta}{k+1}}\sigma_0 \nonumber \\ 
&\leq \frac{\theta}{r+1}\exp\pth{2\theta\pth{\log\pth{r+1}+1}}\sigma_0 \nonumber \\
&\leq \theta\pth{r+1}^{2\theta-1}e^{2\theta}\sigma_0.
\label{eq:eta_sigma_t}
\end{align}

Plugging (\ref{eq:eta_sigma_t}) into Lemma \ref{bound_v_term}, we get that under $\Omega_3$,
\begin{align} 
\Expect \qth{\opnorm{\sum^{t}_{s=0}\prod^{t}_{i=s+1}\sfQ_i\circ v_{s}}}&\leq \sqrt{\sum^{t}_{r=0}\eta^2_r\sigma_{r}^2}\nonumber \\ 
&\leq \sqrt{\sum^{t}_{r=0}\sigma^2_0e^{4\theta}\theta^2 (r+1)^{4\theta-2}} \nonumber \\
&\overset{(a)}{\leq} \sqrt{\theta^2e^{4\theta}\pth{\frac{1}{1-4\theta}+1}\sigma^2_0} \nonumber\\
&\leq  \theta e^{2\theta}\sqrt{\frac{2-4\theta}{1-4\theta}\pth{\frac{\opnorm{f^*}^2+1}{\delta^2}+\tau^2 }}=c_1,
    \label{thm1:3}
\end{align}
where (a) holds since
$
\sum^t_{r=0}(r+1)^{4\theta-2}\leq \int^{t+1}_1 x^{4\theta-2}dx+1 \leq \frac{1}{4\theta-1}x^{4\theta-1}\biggr|^{t+1}_1+1\leq \frac{1}{1-4\theta}+1$ when $\theta<\frac{1}{4}$.

\paragraph{\emph{Fourth term:}}
For the fourth term of (\ref{dynamic_T}), taking the $L_2$ norm of (\ref{bone}), we get
\begin{equation}
\opnorm{\epsilon_r}\leq \eta_t \max\sth{\linf{L_r},\linf{M_r}}|\Delta_r(X_r)+e_r|.    
\end{equation}

Note that $\linf{L_r}$ and $\linf{M_r}$ still depend on $X_r$. Taking the conditional expectation, we get
\begin{align}\notag
\Expect\qth{\opnorm{\epsilon_r}}&=\eta_r\Expect\qth{ \max\sth{\linf{L_r},\linf{M_r}}|\Delta_r(X_r)+e_r| } \\ \notag
&\overset{(a)}{\leq} \eta_r\sqrt{\Expect\qth{ \max\sth{\linf{L_r}^2,\linf{M_r}^2}}\Expect\qth{\pth{\Delta_r(X_r)+e_r}^2 }} \\ \notag
&\overset{(b)}{=} \eta_r\sqrt{\Expect\qth{ \max\sth{\linf{L_r}^2,\linf{M_r}^2} }}\sqrt{\Expect\qth{\opnorm{\Delta_r}^2+\tau^2}}\\
&\leq\eta_r\sigma_r\sqrt{\Expect\qth{ \linf{L_r}^2+\linf{M_r}^2 }},
\label{eps_LM_sigma}
\end{align}
where (a) holds by Cauchy-Schwartz inequality and (b) holds by the independence of $X_r$ and $e_r$.

It remains to bound $\Expect\qth{\linf{L_r}^2}$ and $\Expect\qth{\linf{M_r}^2}$. Note 
\begin{align} \nonumber
\Expect\qth{\linf{L_r}^2}&=\Expect\qth{\linf{L_r}^2 \indc{\fnorm{W(r+1)-W(0)}\leq m^{1/3},\, \fnorm{W(r)-W(0)}\leq m^{1/3}  }}\\ 
&+\Expect\qth{\linf{L_r}^2 \indc{\fnorm{W(r+1)-W(0)}> m^{1/3}\text{ or }\fnorm{W(r)-W(0)}> m^{1/3} }}\nonumber \\ 
&\leq \Expect\qth{\linf{L_r}^2 \indc{\fnorm{W(r+1)-W(0)}\leq m^{1/3},\, \fnorm{W(r)-W(0)}\leq m^{1/3}}} \nonumber \\ 
&+\Prob\qth{\fnorm{W(r+1)-W(0)}> m^{1/3} \text{ or } \fnorm{W(r)-W(0)}> m^{1/3} },
\label{Lt_square}
\end{align}
where the inequality holds by $\linf{L_r}\leq 1$.

To bound the first component of the right hand side of (\ref{Lt_square}), we utilize the following Lemma \ref{bound_LM}.

\begin{lemma} \label{bound_LM}
$$\linf{L_{t}}\leq \frac{1}{m}\linf{S_t}+\frac{1}{m}\linf{S_{t+1}},$$
$$\linf{M_{t}} \leq \frac{1}{m}\linf{S_t}+\frac{1}{m}\linf{S_{t+1}}.$$
\end{lemma}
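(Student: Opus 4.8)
\textbf{Proof plan for Lemma \ref{bound_LM}.}
The plan is to bound $\linf{L_t}$ (the bound for $\linf{M_t}$ is identical, replacing the index set $A$ by $B$), starting from the explicit formula
\[
L_t(x,\tilde x)=\frac{1}{m}\langle x,\tilde x\rangle \sum_{i\in A}\indc{\langle W_i(t),\tilde x\rangle \geq 0}\pth{\indc{\langle W_i(t+1),x\rangle \geq 0}-\indc{\langle W_i(t),x\rangle \geq 0}}.
\]
First I would fix $x,\tilde x\in\mathbb{S}^{d-1}$ and note that each summand is nonzero only for indices $i$ at which the sign of $\langle W_i(\cdot),x\rangle$ changes between iterations $t$ and $t+1$; moreover, since $\indc{\langle W_i(t),\tilde x\rangle\geq 0}\in\{0,1\}$ and $|\langle x,\tilde x\rangle|\leq \opnorm{x}\opnorm{\tilde x}=1$ by Cauchy--Schwarz, each nonzero summand is bounded in absolute value by $1$. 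Hence $|L_t(x,\tilde x)|\leq \frac{1}{m}\,\#\{i\in A:\ \mathrm{sgn}(\langle W_i(t+1),x\rangle)\neq \mathrm{sgn}(\langle W_i(t),x\rangle)\}$.

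The next step is to compare the set of sign flips between consecutive iterations with the sets of sign flips relative to initialization. A neuron $i$ whose sign differs between $W_i(t)$ and $W_i(t+1)$ must satisfy either $\mathrm{sgn}(\langle W_i(t),x\rangle)\neq\mathrm{sgn}(\langle W_i(0),x\rangle)$ or $\mathrm{sgn}(\langle W_i(t+1),x\rangle)\neq\mathrm{sgn}(\langle W_i(0),x\rangle)$ (otherwise both signs equal the sign at initialization and thus agree). Therefore
\[
\#\{i:\ \mathrm{sgn}(\langle W_i(t+1),x\rangle)\neq \mathrm{sgn}(\langle W_i(t),x\rangle)\}\ \leq\ S_t(x)+S_{t+1}(x)\ \leq\ \linf{S_t}+\linf{S_{t+1}},
\]
using the definitions $O_t(x)=\{i:\mathrm{sgn}(\langle W_i(t),x\rangle)\neq\mathrm{sgn}(\langle W_i(0),x\rangle)\}$ and $S_t(x)=|O_t(x)|$. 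Combining the two displays and taking the supremum over $x,\tilde x$ gives $\linf{L_t}\leq \frac{1}{m}\linf{S_t}+\frac{1}{m}\linf{S_{t+1}}$. The argument for $\linf{M_t}$ is verbatim the same with the sum over $i\in B$, and since $|A|,|B|\le m$ the same bound holds.

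I do not anticipate a substantive obstacle here: the lemma is essentially a bookkeeping step, and the only mild subtlety is the triangle-inequality-type observation on sign flips (a flip between $t$ and $t+1$ forces a flip against $0$ at one of the two times), plus keeping track that restricting the sum to $A$ (resp. $B$) only makes the count smaller, so bounding by the full $S_t,S_{t+1}$ is valid. The bound is deliberately loose because in the subsequent analysis $\linf{S_t}$ will already be shown to be $o(m)$ via Lemma \ref{bound_St}, so a factor-of-two slack is immaterial.
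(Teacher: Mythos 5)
Your proof is correct and follows essentially the same route as the paper's: bound $|\langle x,\tilde x\rangle|$ and the extra indicator by $1$, reduce to counting sign flips between $t$ and $t{+}1$, and then use the triangle inequality (your "a flip between $t$ and $t{+}1$ forces a flip against $0$ at one of the two times" is exactly the paper's $|\indc{\cdot\geq 0}(t{+}1)-\indc{\cdot\geq 0}(t)|\leq |\indc{\cdot\geq 0}(t{+}1)-\indc{\cdot\geq 0}(0)|+|\indc{\cdot\geq 0}(t)-\indc{\cdot\geq 0}(0)|$) to bound by $S_{t+1}(x)+S_t(x)$ and take suprema.
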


Intuitively, if the weight matrix is close to the initialization at iteration $t$ and $t+1$, we expect the number of sign changes $S_t$ and $S_{t+1}$ to be small for any $x$. Small $\linf{S_t}$ and $\linf{S_{t+1}}$ then lead to small $\linf{L_t}$ and $\linf{M_t}$.

Note that by Lemma \ref{bound_LM}, we get
\begin{align}\notag
\linf{L_r}^2&\leq \pth{ \frac{\linf{S_r}+\linf{S_{r+1}}}{m}  }^2 \overset{(a)}{\leq} 2\pth{\frac{\linf{S_r}}{m}  }^2    +2\pth{\frac{\linf{S_{r+1}}}{m}  }^2 \\ 
&\overset{(b)}{\leq} 4\pth{ \frac{1}{m^{1/3}}+C_2\sqrt{\frac{d}{m}}+ \frac{2^{\frac{4}{3}}\sup_{s\in\sth{r,r+1}}\fnorm{W(s)-W(0)}^{\frac{2}{3}}}{m^{1/3}\pi^{1/3}}     }^2,
\label{Lr_square_bound}
\end{align}
where (a) holds by $(a+b)^2\leq 2a^2+2b^2$ and (b) holds by Lemma \ref{bound_St}.

Plugging (\ref{Lr_square_bound}) into the first component of the right hand side of (\ref{Lt_square}), we have
\begin{align}\notag
&\Expect\qth{\linf{L_r}^2 \indc{\fnorm{W(r+1)-W(0)}\leq m^{1/3},\, \fnorm{W(r)-W(0)}\leq m^{1/3}} }\\ \notag
&\leq 4\Expect\qth{\pth{ \frac{1}{m^{1/3}}+C_2\sqrt{\frac{d}{m}}+ \frac{2^{\frac{4}{3}}\sup_{s\in\sth{r,r+1}}\fnorm{W(s)-W(0)}^{\frac{2}{3}}}{m^{1/3}\pi^{1/3}}     }^2  \indc{\fnorm{W(r+1)-W(0)}\leq m^{1/3},\, \fnorm{W(r)-W(0)}\leq m^{1/3}}  }\\
&\leq 4\qth{\frac{1}{m^{1/3}}+C_2\sqrt{\frac{d}{m}}+\frac{2^{4/3}}{\pi^{1/3}m^{1/9}}}^2\overset{(a)}{\leq}\frac{16}{m^{2/9}},
\label{Lt_square_first_term}
\end{align}
where (a) holds by (\ref{bound_St_dw_small}) for $m\geq 2^{14}C^4_2 d^2$.

\vspace{\medskipamount}

For the second component on the right hand side of (\ref{Lt_square}), recall by (\ref{bound_St_dw_large}), for $s \in \sth{r,r+1}$,
\begin{align}
\Prob\qth{\fnorm{W(s)-W(0)}>m^{1/3}}\leq \frac{1}{m^{2/9}},
\label{Markov_prob_Wt2}
\end{align}
for $m\geq \pth{\kappa \theta \pth{ \log T+1} }^9$.

Plugging (\ref{Markov_prob_Wt2}) and (\ref{Lt_square_first_term}) into (\ref{Lt_square}), we have
\begin{equation}
    \Expect\qth{\linf{L_r}^2}\leq \frac{16}{m^{2/9}}+\frac{2}{m^{2/9}}=\frac{18}{m^{2/9}},
\end{equation}
for 
\begin{align}
m&\geq \max\sth{\qth{\kappa\theta \pth{\log(T)+1}}^9,2^{14}C^4_2d^2}.
\label{eq_condition_m_lower_from_eps}
\end{align}

We can bound $\Expect\qth{\linf{M_r}^2}$ analogously.

As a result,
\begin{equation}
\sqrt{\Expect\qth{\linf{L_r}^2}+\Expect\qth{\linf{M_r}^2}} \leq \frac{6}{m^{1/9}}. 
\label{bound_exp_L2_M2}
\end{equation}

Plugging (\ref{bound_exp_L2_M2}) and (\ref{eq:eta_sigma_t}) into (\ref{eps_LM_sigma}), we get
\begin{align}
\sum^t_{r=0}\Expect\qth{\opnorm{\epsilon_r}}&\leq\frac{6\sigma_0}{m^{1/9}}\sum^{t}_{r=0}\theta e^{2\theta}(r+1)^{2\theta-1} \nonumber \\
&\leq \frac{3e^{2\theta}\pth{t+2}^{2\theta}\sigma_0}{m^{1/9}}.
\label{bound_fourth_term}
\end{align}
Combining Lemma \ref{bound_first_term}, (\ref{bound_Dt_Exp}), (\ref{thm1:3}) and (\ref{bound_fourth_term}), we get that under (\ref{eq_condition_m_lower_from_Dt}) and (\ref{eq_condition_m_lower_from_eps}), conditioning on $W(0)$ and the outer weights $a$ such that $\Omega_1 \cap \Omega_2 \cap \Omega_3$ holds,
\begin{align}\notag
  \Expect\qth{\opnorm{\Delta_{t+1}}}&\leq  \inf_{\ell}\sth{  \prod^{t}_{k=0}(1-\eta_k\lambda_{\ell})\opnorm{\Delta_0}+\calR(\Delta_0,\ell)}\\
  &+\underbrace{\frac{7\theta}{m^{1/9}}\pth{\log(t+1)+1}\opnorm{\Delta_0}+\frac{3e^{2\theta}\sigma_0}{m^{1/9}}(t+2)^{2\theta}}_{\text{(I)}}+c_1.
 \label{eq_Exp_Delta_final}
\end{align}

To bound (I) from the above, note that for all $t\leq T-1$,
\begin{align}
\text{(I)}&\overset{(a)}{\leq} \theta \sigma_0+\pth{\sqrt{2}-1}e^{2\theta}\theta \sigma_0 \overset{(b)}{\leq} \sqrt{2}e^{2\theta}\theta \sigma_0 \overset{(c)}{\leq}c_1,
 \label{eq_Exp_Delta_final_second_fourth}
\end{align}
where (a) holds since $\frac{7\theta}{m^{1/9}}\pth{\log(t+1)+1}\opnorm{\Delta_0}\leq \theta \sigma_0$ when $m\geq \qth{7\pth{\log T+1}}^9$ and $\frac{3}{m^{1/9}}(t+2)^{2\theta} \leq (\sqrt{2}-1)\theta$ when $m\geq 3^9(\sqrt{2}+1)^{9}\qth{\frac{(T+1)^{2\theta}}{\theta}}^9$, (b) holds by $e^{2\theta}\geq 1$, and (c) holds on $\Omega_3$ since $\sqrt{\frac{2-4\theta}{1-4\theta}}\geq \sqrt{2}$ for $\theta\geq 0$. 

Plugging (\ref{eq_Exp_Delta_final_second_fourth}) into (\ref{eq_Exp_Delta_final}), we get
$$
\Expect\qth{\opnorm{\Delta_{t+1}}}\leq  \inf_{\ell}\sth{  \prod^{t}_{k=0}(1-\eta_k\lambda_{\ell})\opnorm{\Delta_0}+\calR(\Delta_0,\ell)}+2c_1,
$$
under (\ref{eq_condition_m_lower_from_Dt}), (\ref{eq_condition_m_lower_from_eps}) and the following condition:
\begin{align}
m&\geq \max \Bigg\{ \qth{7\pth{\log T+1}}^9,  3^9(\sqrt{2}+1)^{9}\qth{\frac{(T+1)^{2\theta}}{\theta}}^9\Bigg\}. 
\label{eq_condition_m_lower_general}
\end{align}

To ensure conditions (\ref{eq_condition_m_lower_from_Dt}), (\ref{eq_condition_m_lower_from_eps}) and (\ref{eq_condition_m_lower_general}) to hold, we need
\begin{align}
m&\geq c\qth{d^2+\pth{\log T+1}^9+\pth{\frac{(T+1)^{2\theta}}{\theta}}^9},
\label{condition_m_final}
\end{align}
for a sufficiently large constant $c$ that only depends on $\opnorm{f^*}, \delta$ and $\tau$. This concludes the proof of Proposition \ref{prop_induction}.

\subsection{$\Omega_1, \Omega_2$ and $\Omega_3$ occur with high probability }\label{analysis_omega}

It remains to show event $\cap^3_{i=1}\Omega_i$ occurs with probability at least $1-\delta-2\exp(-m^{1/3})$. 

\begin{lemma} \label{bound_Delta_0}
For any $0<\delta<1$, $$
\Prob\qth{\Omega_3}\geq 1-\delta.
$$
\end{lemma}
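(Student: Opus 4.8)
The plan is to establish the second-moment bound $\Expect\qth{\opnorm{\Delta_0}^2}\leq \opnorm{f^*}^2+1$ announced in Remark \ref{rmk_delta}, and then apply Markov's inequality. Recall that the randomness in $\Delta_0(x)=f^*(x)-f(x;W(0))$ is that of the initial Gaussian matrix $W(0)$ together with the Rademacher outer weights $a$, and that $\mu$ is supported on $\mathbb{S}^{d-1}$. Since the integrand $\pth{f^*(x)-f(x;W(0))}^2$ is nonnegative, Tonelli's theorem lets me interchange the expectation and the integral:
$$
\Expect\qth{\opnorm{\Delta_0}^2}=\int \Expect\qth{\pth{f^*(x)-f(x;W(0))}^2}\mu(dx).
$$

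Next I would expand the square pointwise in $x$. Because the $a_i$ are i.i.d.\ Rademacher and independent of the i.i.d.\ Gaussian rows $W_i(0)$, we have $\Expect\qth{a_i\sigma(\langle W_i(0),x\rangle)}=0$, hence $\Expect\qth{f(x;W(0))}=0$ and the cross term vanishes, leaving $\Expect\qth{(f^*(x)-f(x;W(0)))^2}=f^*(x)^2+\Expect\qth{f(x;W(0))^2}$. For the variance term, orthogonality of the Rademacher signs kills the off-diagonal contributions while $a_i^2=1$ handles the diagonal, so
$$
\Expect\qth{f(x;W(0))^2}=\frac{1}{m}\sum_{i=1}^m\Expect\qth{\sigma(\langle W_i(0),x\rangle)^2}=\Expect\qth{\sigma(Z)^2},
$$
where $Z\sim N(0,1)$, using that $\langle W_i(0),x\rangle\sim N(0,\|x\|^2)=N(0,1)$ since $\|x\|=1$ on $\mathbb{S}^{d-1}$. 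A direct computation gives $\Expect\qth{\sigma(Z)^2}=\Expect\qth{Z^2\indc{Z\geq 0}}=\tfrac12$.

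Integrating over $x$ then yields $\Expect\qth{\opnorm{\Delta_0}^2}=\opnorm{f^*}^2+\tfrac12\leq \opnorm{f^*}^2+1$, where I use $f^*\in L^2(\mu)$ so that $\opnorm{f^*}<\infty$. Finally, applying Markov's inequality to the nonnegative random variable $\opnorm{\Delta_0}^2$,
$$
\Prob\qth{\opnorm{\Delta_0}>\frac{\sqrt{\opnorm{f^*}^2+1}}{\delta}}=\Prob\qth{\opnorm{\Delta_0}^2>\frac{\opnorm{f^*}^2+1}{\delta^2}}\leq\frac{\Expect\qth{\opnorm{\Delta_0}^2}}{(\opnorm{f^*}^2+1)/\delta^2}\leq\delta^2\leq\delta,
$$
so $\Prob\qth{\Omega_3}\geq 1-\delta$. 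There is no substantive obstacle here; the only points needing a little care are the justification of the expectation–integral interchange (Tonelli, via nonnegativity) and keeping explicit track that the outer expectation ranges over both $W(0)$ and $a$, which is what makes the cross term and the off-diagonal variance terms vanish.
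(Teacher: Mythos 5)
Your proof is correct and follows essentially the same route as the paper's: expand $\Expect\qth{\opnorm{\Delta_0}^2}$, use independence and $\Expect[a_i]=0$ to kill the cross and off-diagonal terms, bound the diagonal via the Gaussian moment, and finish with Markov. The only cosmetic differences are that you compute $\Expect[\sigma(Z)^2]=1/2$ exactly where the paper just uses $\sigma(z)^2\le z^2$, and you apply Markov to $\opnorm{\Delta_0}^2$ directly (giving $\delta^2$) whereas the paper applies Markov to $\opnorm{\Delta_0}$ and invokes Cauchy--Schwarz to pass to the second moment; both yield the claimed $1-\delta$.
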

The proof of Lemma \ref{bound_Delta_0} follows by  $\Expect_{a,W(0)}\qth{\opnorm{\Delta_0}^2}\leq \opnorm{f^*}^2+1$ and Markov's inequality.

We complete the proof of Proposition \ref{prop_induction} by showing both $\Omega_1$ and $\Omega_2$ occur with probability at least $1-\exp(-2m^{1/3})$. 

\begin{lemma} \label{bound_prob_Omega}
\begin{align*}
&\Prob\qth{\Omega_1}\geq 1-\exp(-2m^{1/3}), \\
 &\Prob\qth{\Omega_2}\geq 1-\exp(-2m^{1/3}).
\end{align*}
\end{lemma}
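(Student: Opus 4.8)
The plan is to express each of the two suprema in (\ref{def_Omega_1}) and (\ref{def_Omega_2}) as a uniform deviation of an empirical mean from its expectation over a suitable class of $\{0,1\}$-valued functions of the $\iid$ rows $W_1(0),\dots,W_m(0)$, and then control this uniform deviation by combining McDiarmid's bounded-differences inequality (to concentrate the deviation around its mean) with a symmetrization argument whose mean term is governed by the VC dimension of the class. For $\Omega_2$ the relevant class is $\calF_2=\{w\mapsto \indc{\langle w,x\rangle\ge0}\indc{\langle w,\tilde x\rangle\ge0}:x,\tilde x\in\mathbb{S}^{d-1}\}$, the indicators of intersections of two homogeneous halfspaces in $\reals^d$; for $\Omega_1$ it is $\calF_1=\{w\mapsto \indc{|\langle w,x\rangle|\le R}:x\in\mathbb{S}^{d-1},\,R\ge0\}$, the indicators of origin-symmetric slabs, which is a subclass of the intersections of two affine halfspaces. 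First I would record that both classes have VC dimension $O(d)$, using the standard $\mathrm{VC}=d+1$ bound for affine halfspaces (resp.\ $d$ for homogeneous ones) together with the standard fact that the family of intersections of a bounded number of set systems of VC dimension $V$ again has VC dimension $O(V)$.

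Writing $Z=\sup_{f\in\calF}\big|\tfrac1m\sum_{i=1}^m f(W_i(0))-\Expect_{w\sim N(0,I_d)}[f(w)]\big|$ for $\calF\in\{\calF_1,\calF_2\}$, I would next apply McDiarmid's inequality: changing a single row $W_i(0)$ perturbs $\tfrac1m\sum_i f(W_i(0))$ by at most $1/m$ uniformly over $f\in\calF$, hence perturbs $Z$ by at most $1/m$, so $\Prob[Z\ge \Expect Z+t]\le \exp(-2mt^2)$; choosing $t=m^{-1/3}$ makes the right-hand side exactly $\exp(-2m^{1/3})$ and produces the $m^{-1/3}$ term in the events. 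It then remains to bound $\Expect Z$, for which I would use the symmetrization inequality $\Expect Z\le 2\,\Expect[\mathrm{Rad}_m(\calF)]$ and the fact that a VC class of dimension $V$ has Rademacher complexity $O(\sqrt{V/m})$ — via Haussler's bound on $L^1$ covering numbers and Dudley's entropy integral, which removes the naive extra $\sqrt{\log m}$ factor. This gives $\Expect Z\le C\sqrt{d/m}$ for a universal constant $C$, which I would take as $C_2$ for $\calF_1$ and $C_3$ for $\calF_2$ (enlarging them if necessary so that $C_2,C_3>1$, as required). Combining the two pieces yields $Z\le m^{-1/3}+C\sqrt{d/m}$ with probability at least $1-\exp(-2m^{1/3})$, which is precisely the event $\Omega_1$ (resp.\ $\Omega_2$).

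The step I expect to be the main obstacle is the clean control of $\Expect Z$ at the rate $\sqrt{d/m}$: one must verify the $O(d)$ VC-dimension bounds for the \emph{specific} classes $\calF_1$ (slabs) and $\calF_2$ (intersections of two halfspaces) rather than quote a generic black box, and then invoke a VC-entropy estimate sharp enough to avoid a spurious $\sqrt{\log m}$ factor, since the target bounds in (\ref{def_Omega_1})--(\ref{def_Omega_2}) carry no such factor. The McDiarmid step, the symmetrization, and the final assembly are routine bookkeeping.
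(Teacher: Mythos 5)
Your proposal follows essentially the same route as the paper: McDiarmid's bounded-differences inequality with increment $1/m$ and deviation $t=m^{-1/3}$ to concentrate the supremum around its mean, and a VC-dimension bound of order $d$ (for homogeneous halfspaces and their pairwise intersections, resp.\ symmetric slabs) to control the mean via the sharp $O(\sqrt{\mathrm{VC}/m})$ expected-deviation estimate. The only cosmetic difference is that you spell out the symmetrization/Haussler/Dudley chain underlying that estimate, whereas the paper simply cites Vershynin's Theorem 8.3.23, and the paper carries out the VC-dimension computations ($\mathrm{VC}(\calG)=d$ and $\mathrm{VC}(\calF_1)\le 11d$ via van~der~Vaart--Wellner) explicitly rather than treating them as standard.
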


\begin{remark}
In Lemma \ref{bound_prob_Omega}, we use the VC-dimension and McDiarmid's inequality to obtain the uniform control of $\linf{H_0-\Phi}$. 
This significantly deviates from the existing literature such as \cite{du2018gradient, du2019gradient, su2019learning, allen2019convergence, zou2020gradient,arora2019fine} that
studies the batch setting and obtains the uniform control via  pointwise control and union bound. More specifically, 
in the batch setting with $n$ data points $\sth{(x_i,y_i)}^n_{i=1}$, similar to $\Omega_2$ we can define event $\Omega'_2=\cup_{i,j}\Omega_{i,j}$, where 
\begin{align*}
    \Omega_{i,j}&=\Biggl\{ W(0):  \biggl|\frac{1}{m}\biggl(\sum^{m}_{k=1}\indc{\langle W_k(0),x_i\rangle \geq 0}\indc{\langle W_k(0),x_j\rangle \geq 0}-\Expect_w\qth{\indc{\langle w,x_i\rangle \geq 0}\indc{\langle w,x_j\rangle \geq 0}}\biggr)\biggr|< \frac{C_4}{m^{1/3}}\Biggr\}.
\end{align*}
for some constant $C_4$.

Then we can show $\Omega'_2$ occurs with high probability
 by bounding the probability of each individual $\Omega_{i,j}$ and applying a union bound. 
However, such techniques are not directly applicable 
in the streaming data setting to obtain the desired uniform control on the kernel functions. 
\end{remark}

Here, we provide the proof of Lemma \ref{bound_prob_Omega} to highlight our new proof strategy utilizing VC dimension and McDiarmid's inequality. In particular, we show the conclusion for $\Omega_2$; the conclusion for $\Omega_1$ follows analogously. For conciseness, the definition of VC dimension and the propositions used in the proof are deferred to Appendix \ref{appendix_vc}.

\begin{proof}
Denote
\begin{equation}
 \phi(w_1,\cdots,w_{m})=\sup_{x,x'}\left|\frac{1}{m}\sum^{m}_{i=1}\indc{\langle w_i,x\rangle  \geq 0}\indc{\langle w_i,x'\rangle \geq 0} - \Expect_w\qth{\indc{\langle w,x\rangle  \geq 0}\indc{\langle w,x'\rangle \geq 0} }\right|.   
 \label{def_phi_Omega}
\end{equation}

By the triangle inequality, we have
$$
\left|\phi(w_1,\cdots,w_{i-1},w_i,w_{i+1},w_m)-\phi(w_1,\cdots,w_{i-1},w'_i,w_{i+1},\cdots,w_m) \right|\leq \frac{1}{m}.
$$
Let $W_1, \ldots, W_m$ denote $m$ $\iid$ $\calN(0, \identity_d)$. 
Thus, by McDiarmid's inequality, we get
\begin{equation}
 \Prob\qth{ \phi(W_1,\cdots,W_{m}) \ge m^{-1/3}+\Expect\qth{\phi(W_1,\cdots,W_m}  }\leq \exp\pth{-2m^{1/3}}. 
 \label{eq_mcdiarmid}
\end{equation}

The proof is then completed by invoking the following claim
$$
\Expect\qth{\phi(W_1,\cdots,W_m)}\leq C_{3}\sqrt{\frac{d}{m}}.
$$

To prove the claim, by Proposition \ref{vc_mean} in Appendix \ref{appendix_vc}, it suffices to show the VC dimension of $\calF_1$ is upper bounded by $11d$, where $\calF_1=\left\{g_{x,x'}:  g_{x,x'}(w)=\indc{\langle w,x\rangle \geq 0}\indc{\langle w,x'\rangle \geq 0}\right \}$. 

To prove $\text{VC}(\calF_1)\leq 11d$, we first show $\text{VC}(\calF_1)\leq 11\text{VC}(\calG)$ where 
$\calG=\sth{g_x: g_x(w)=\indc{\langle w,x\rangle \geq 0}}$ and then show $\text{VC}(\calG)= d$.

Now we show $\text{VC}(\calF_1)\leq 11\text{VC}(\calG)$. For any class of Boolean functions $\calF$ on $\reals^d$, we define $\calC_{\calF}=\sth{D_f, f\in \calF}$ where $D_f=\sth{x: x\in \reals^d, f(x)=1}$.

We claim $\calC_{\calF_1}=\calC_{\calG} \sqcap \calC_{\calG}$ where $\sqcap^N_{i=1} \calC_i = \sth{\cap^N_{j=1}C_j : C_j \in \calC_j, 1\leq j\leq N}$. To see this, note that for any $f\in \calF_1$, \ie, $f=\indc{\langle w, x_1\rangle \geq 0}\indc{\langle w, x_2\rangle \geq 0}$ for some $x_1$ and $x_2$, $D_f=D_{g_1} \cap D_{g_2}$ with $g_1=\indc{\langle w, x_1\rangle \geq 0}$ and $g_2=\indc{\langle w, x_2\rangle \geq 0}$. 
Then by Proposition \ref{vc_union}, 
\begin{equation}
   \text{VC}(\calF_1)\leq 5\log(8)\text{VC}(\calG)\leq 11 \text{VC}(\calG).
   \label{bound_VC_F}
\end{equation}

Next, we show $\text{VC}(\calG)=d$ following the idea of \cite[Proposition 7.1]{hajek2019statistical}. 

Choose $\{w_1,w_2,\cdots,w_d\}$ to be linearly independent vectors in $\reals^d$. Fix an arbitrary binary valued vector $b \in \sth{\pm 1}^d$.

Consider the linear system $w^T_ix=b_i$ for $1\leq i \leq d$. Since $\{w_1,w_2,\cdots,w_d\}$ are linearly independent, we can always find $x_b=W^{-1}b$ where $W=[w_1,w_2,\cdots,w_d]^T$. Thus, $g_{x_b}(w_i)=\indc{b_i=1}$ for all $i$. This shows VC$(\calG)\geq d$.

Now we show VC$(\calG)< d+1$. Fix arbitrary $\{w_1,w_2,\cdots,w_{d+1}\}$. Suppose for any binary valued vector $b=\{\pm 1\}^{d+1}$, $\exists \ x_b$ such that $g_{x_b}(w_i)=\indc{b_i=1}$  for all $i$. Define $V=\{\pth{\langle w_1,x \rangle,\langle w_2,x \rangle, \cdots, \langle w_{d+1},x \rangle }: x\in \reals^d \}$ which is a linear subspace in $\reals^{d+1}$. Since $x \in \reals^d$, $\text{dim}(V)\leq d$. Therefore, $\exists v \neq 0 \in V^{\perp}$ s.t. for any $x \in \reals^d$,

$$
\sum^{d+1}_{i=1}v_i\langle w_i,x \rangle =0
$$
where $v_i$ is the $i$-th coordinate of $v$. 

WLOG we can assume that $v_j<0$ for some $j$. To see this, since $v\neq 0$, there must exist some $v_k \neq 0$. If $v_k\geq 0$ for all $k$, then we consider $-v_k$ for any $k$. Thus, we can always assume $v_j<0$ for some $j$.

Let $b_k=\indc{v_k \geq 0}-\indc{v_k<0}$ for all $k$. Denote $x_0 \in \reals^d$ which solves $g_{x_0}(w_k)=\indc{b_k=1}$ for all $k$. This implies 
$\indc{\langle w_k, x_0 \rangle \geq 0}=\indc{v_k\geq 0}$ for any $k$. Thus, $v_k \langle w_k,x_0 \rangle \geq 0$ for any $k$. However, $\sum^{d+1}_{i=1}v_i\langle w_i,x_0 \rangle =0$ which implies $v_k \langle w_k,x_0 \rangle = 0$ for any $k$. Since $v_j<0$, $\langle w_j, x_0 \rangle < 0$. This contradicts the fact that $v_k \langle w_k,x_0 \rangle = 0 $ for any $k$. Thus, we conclude that $\text{VC}(\calG)<d+1$.

\end{proof}

\section{Numerical Study}

In this section, we present some numerical studies to support our theoretical analysis. 

\subsection{Simulations}
We consider the following different choices of $f^{*}$:
\begin{itemize}
    \item Linear: $f^{*}(x)=\langle b,x\rangle$ with 
    $b \sim N(0,I_d)$.
    \item Quadratic: $f^{*}(x)=x^\top Ax+\langle b,x\rangle$, where both $A\in \reals^{d \times d}$ and $b\in \reals^d$ have $\iid \, N(0,1)$ entries.
    \item Teacher neural network: $f^{*}(x)=\sum^3_{i=1}b_i\psi(\langle v_i,x\rangle)$, where $\psi(z)=\frac{1}{1+e^{-z}}$ is the sigmoid function, $b_i$'s are $\iid$ Rademacher random variables, and $v_i \sim N(0,I_d)$.
    \item Random Label: $f^{*}(x)$ are $\iid$ Bernoulli random variables across all $x$.
\end{itemize}


\begin{figure}[h]
\centering
        \includegraphics[width=0.5\linewidth]{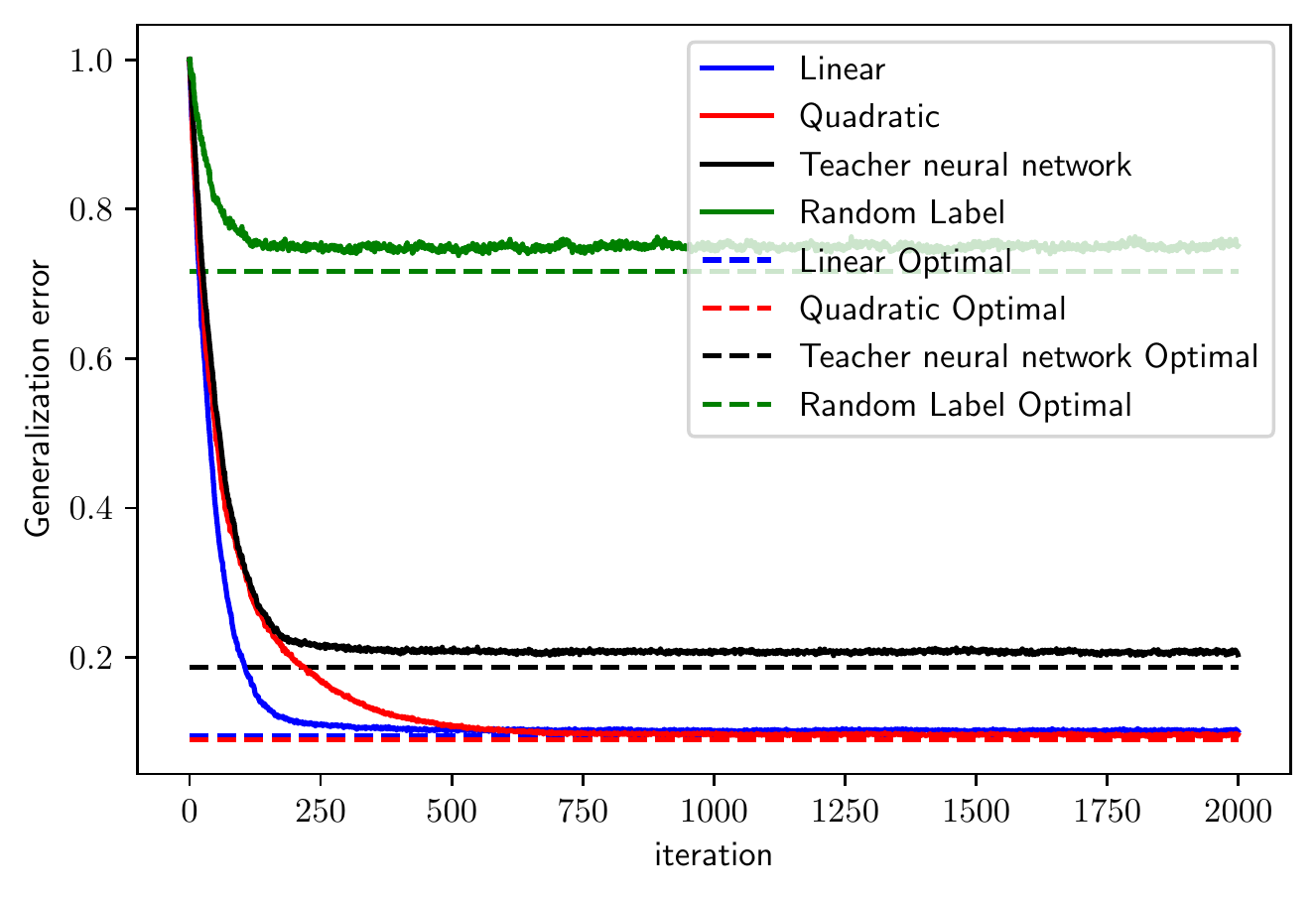}
    \caption{Averaged prediction error under SGD for different target function $f^{*}$}
    \label{fig:compare_f}
\end{figure}

\begin{figure}[h]
    \centering
    \begin{subfigure}[]{0.32\textwidth}
        \includegraphics[width=\linewidth]{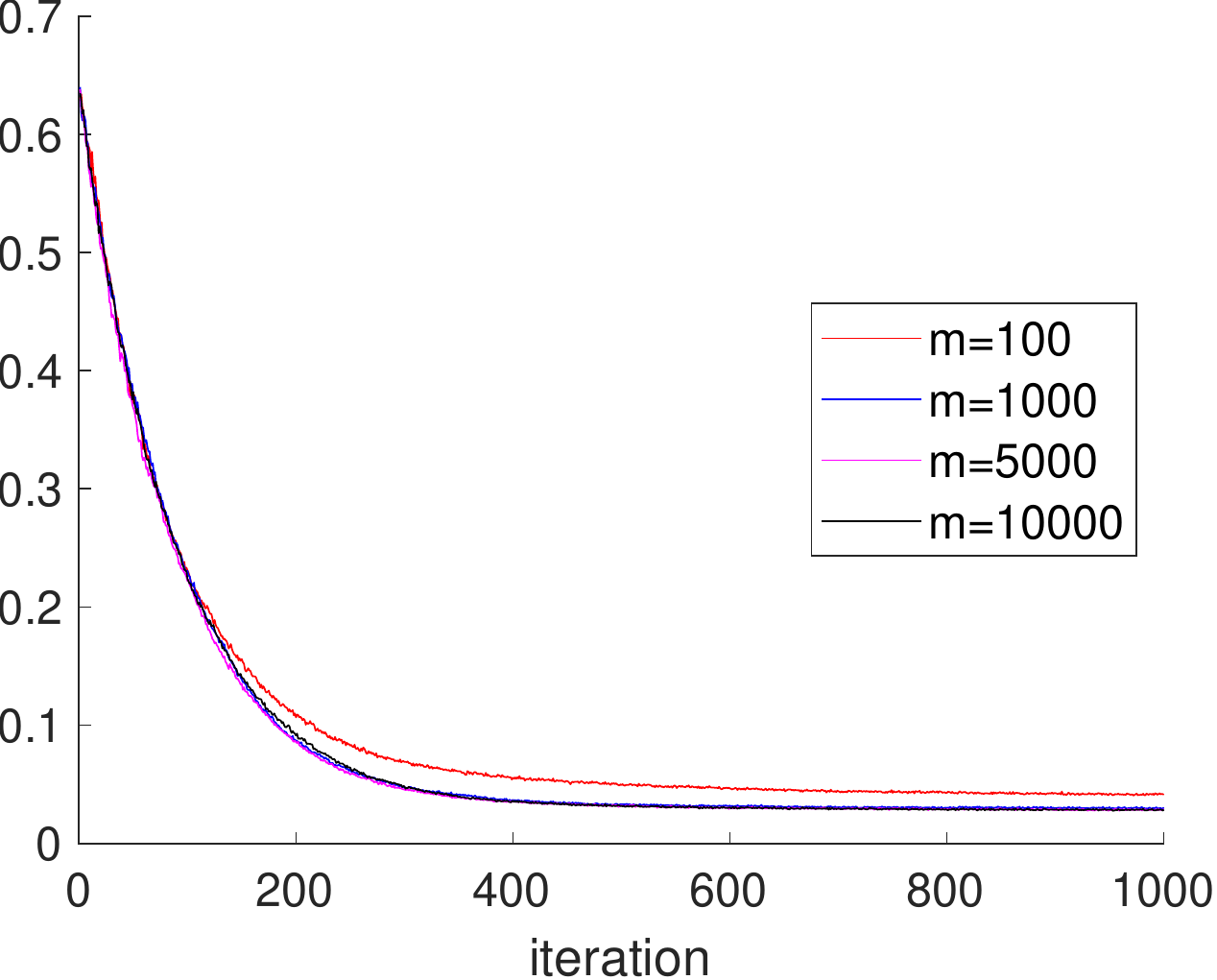}
        \caption{averaged prediction error }
            \label{fig:compare_m_err_sigmoid}
    \end{subfigure}%
    \begin{subfigure}[]{0.32\textwidth}
        \includegraphics[width=\textwidth]{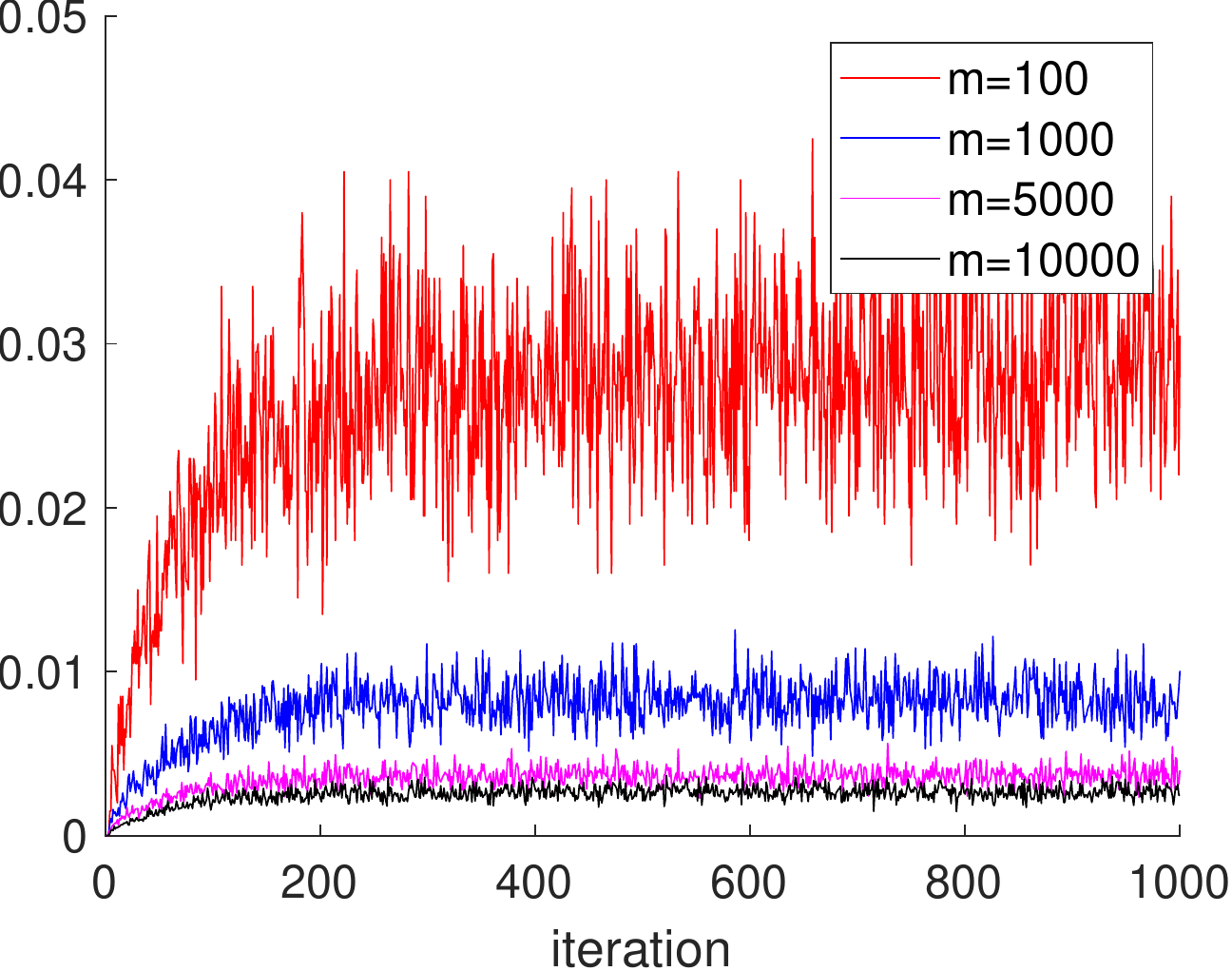}
        \caption{$S_t(X_t)/m$}
            \label{fig:compare_m_sf_sigmoid}
    \end{subfigure}
    ~ 
    \begin{subfigure}[]{0.32\textwidth}
        \includegraphics[width=\textwidth]{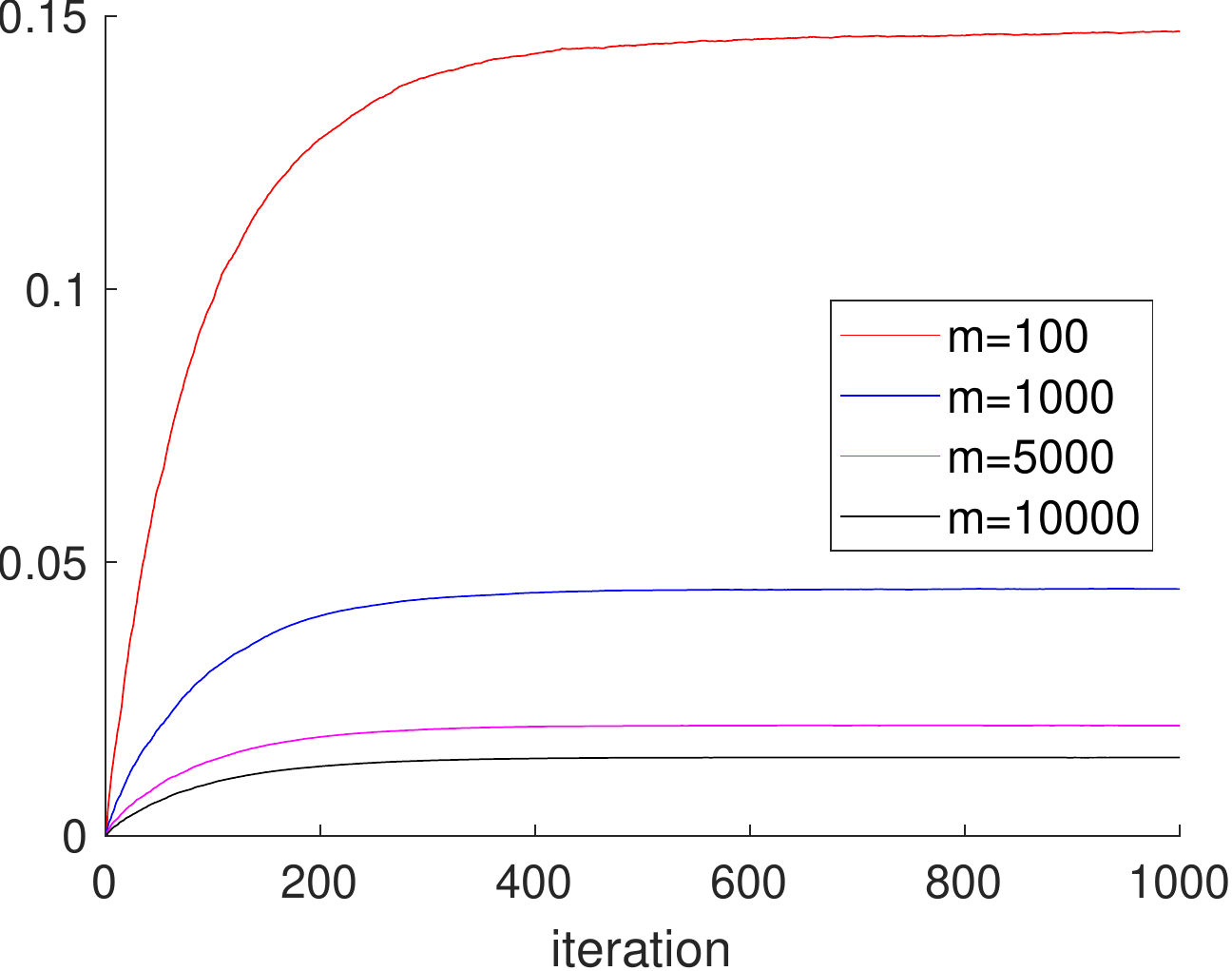}
        \caption{$\frac{\fnorm{W(t)-W(0)}}{\fnorm{W(0)}}$}
            \label{fig:compare_m_Wnorm_sigmoid}
    \end{subfigure}%
    \caption{Comparison of different number of neurons with teacher neural network $f^{*}$}
    \label{fig:compare_m_sigmoid}
\end{figure}

We run the stochastic gradient descent algorithm \eqref{sgd_update} on the streaming data with constant step size $\eta=0.2$. We assume the symmetric initialization introduced in Remark 3.1 to ensure the initial prediction error $\Delta_0 =f^*$. At each iteration, we randomly draw data $X$ uniformly from $\mathbb{S}^{d-1}$ and $e$ from $N(0,\tau^2)$ to obtain $(X,y)$ where $y=f^*(X)+e$.  The average prediction error is estimated using freshly drawn $400$ data points, and the resulting error is further averaged over $20$ independent runs. 

Figure \ref{fig:compare_f} shows the dynamic (solid lines) of the average prediction error normalized by the error at initialization $\sqrt{\opnorm{f^*}^2+\tau^2}$
for different $f^{*}$ with $d=5$, $m=1000$, 
and $\tau=0.1$. The dashed lines represent the optimal (normalized) average prediction error, which is $\frac{\tau}{\sqrt{\opnorm{f^*}^2+\tau^2}}$ for linear, quadratic and teacher neural network and is $\frac{\sqrt{1/4+\tau^2}}{\sqrt{\opnorm{f^*}^2+\tau^2}}$ for the random label case.
Figure \ref{fig:compare_f} shows that SGD is able to learn all the four $f^*$ cases efficiently: the normalized average prediction error converges to the best achievable value. Besides, we see a difference in the convergence rate among different $f^*$: The convergence is the fastest in the linear case and the slowest in the random label case. 
This is consistent with our theory as a larger principle space
(larger $r$)
is needed for the random label function to have relatively small $\calR\pth{f^{*},r}$, resulting in a smaller eigenvalue $\lambda_r$ for the convergence rate.

Figure \ref{fig:compare_m_sigmoid} considers the setting with a varying number of hidden neurons $m$,
when $f^*$ is teacher neural network and $d=5$. 
Figure \ref{fig:compare_m_err_sigmoid} shows the dynamic of the averaged generalization error. The convergence becomes faster when $m$ increases from $100$ to $1000$, but there is not much difference when $m$ is increased further. This is consistent with our theory, because when $m$ is large enough, 
the random kernel $H_t$ is already well approximated by the Neural Tangent Kernel $\Phi$. 
Indeed we observe a small proportion of sign changes from figure \ref{fig:compare_m_sf_sigmoid} when $m$ is above $1000$, which leads to a small approximation error $\epsilon_t$ in view of  Lemma \ref{bound_LM} and Lemma \ref{bound_St}. 
Figure \ref{fig:compare_m_Wnorm_sigmoid} shows the relative deviation of the weight matrix along iterations from the initialization. Following Lemma \ref{bound_W_exp}, we see $\fnorm{W(t)-W(0)}=O(t)$ while $\fnorm{W(0)}=O\pth{\sqrt{md}}$. As a result, we see $\frac{\fnorm{W(t)-W(0)}}{\fnorm{W(0)}}$ decreases as $m$ increases for fixed $t$ and $\frac{\fnorm{W(t)-W(0)}}{\fnorm{W(0)}}$ increases as $t$ grows for fixed $m$. 

Figure \ref{fig:compare_m_sigmoid_large} considers the same setting with Figure \ref{fig:compare_m_sigmoid} except that $d=500$. Similar to the case with $d=5$, Figure \ref{fig:compare_m_err_sigmoid_large} shows that the averaged prediction error convergences faster when $m$ increases from $100$ to $1000$ and does not have much difference when $m$ is increased further. Compare figure \ref{fig:compare_m_err_sigmoid_large} with figure \ref{fig:compare_m_err_sigmoid}, we observe a smaller convergence rate when $d=500$ compared to the case of $d=5$. This is due to the following reason. Compared to $d=5$, when $d=500$, $\lambda_r$ is smaller and thus the contraction factor $\prod^t_{s=0}(1-\eta_s \lambda_r)$ is larger, resulting in a slower convergence rate, as is shown in Corollary \ref{corollary_beta}. We also observe a small proportion of sign changes from figure \ref{fig:compare_m_sf_sigmoid_large} when $m$ is above $1000$, which leads to a small approximation error $\epsilon_t$ in view of Lemma \ref{bound_LM} and Lemma \ref{bound_St}. Figure \ref{fig:compare_m_Wnorm_sigmoid_large} shows the relative deviation of the weight matrix at each iteration from the initialization. The deviation becomes smaller as $m$ grows, which is consistent to our analysis. 

The same experiment is performed on the linear $f^*$ and the results are shown in Figure \ref{fig:compare_m} for $d=5$ and Figure \ref{fig:compare_m_large} for $d=500$. We again see an increase in the convergence rate, a decrease in the number of sign changes, and a decrease in the relative deviation of the weight matrix from the initialization as $m$ increases. In addition, we also observe a smaller convergence rate when $d=500$ compared to $d=5$.
\begin{figure*}[h]
    \centering
    \begin{subfigure}[]{0.32\linewidth}
        \includegraphics[width=\linewidth]{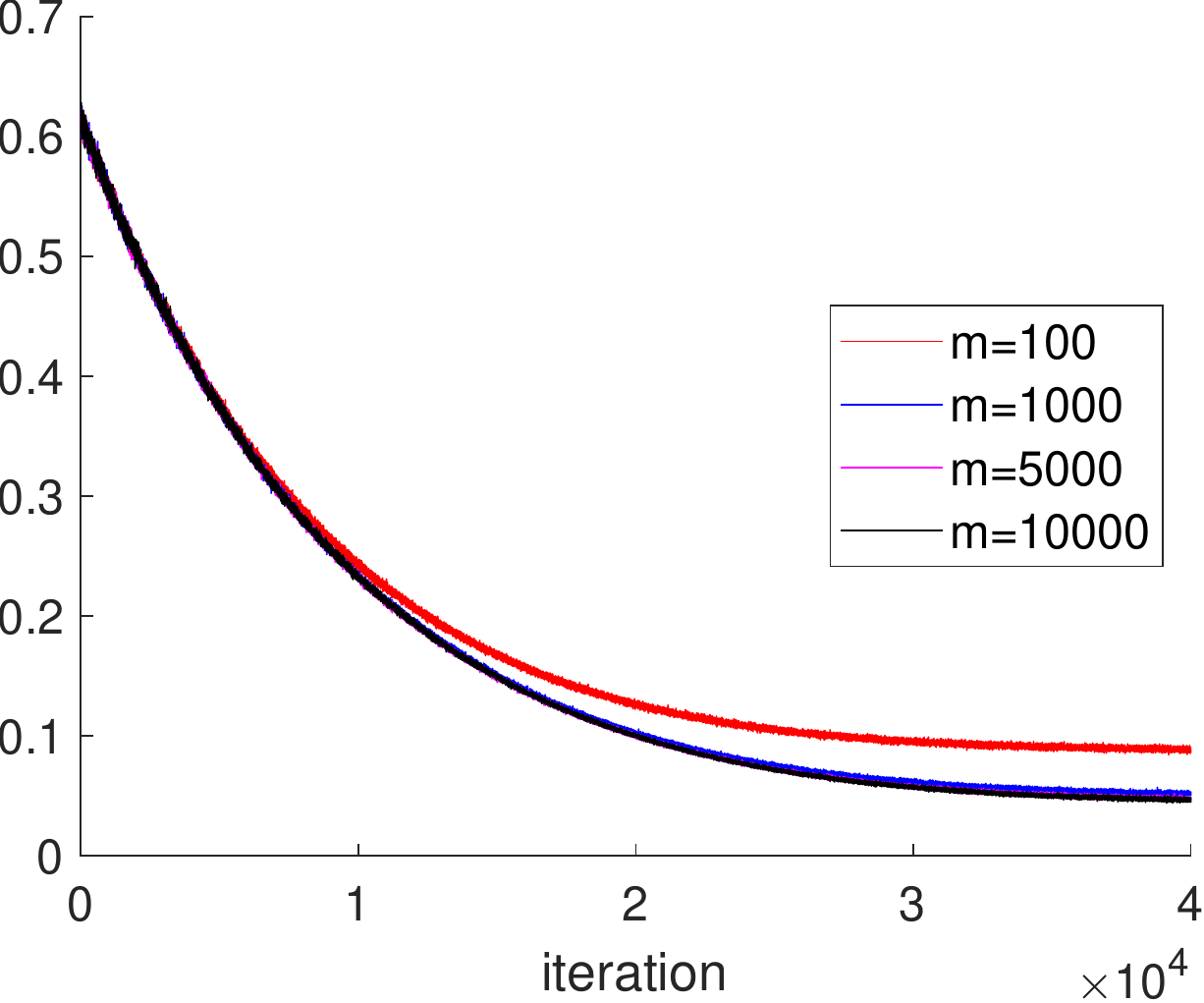}
        \caption{averaged prediction error }
            \label{fig:compare_m_err_sigmoid_large}
    \end{subfigure}%
        ~
    \begin{subfigure}[]{0.32\linewidth}
        \includegraphics[width=\linewidth]{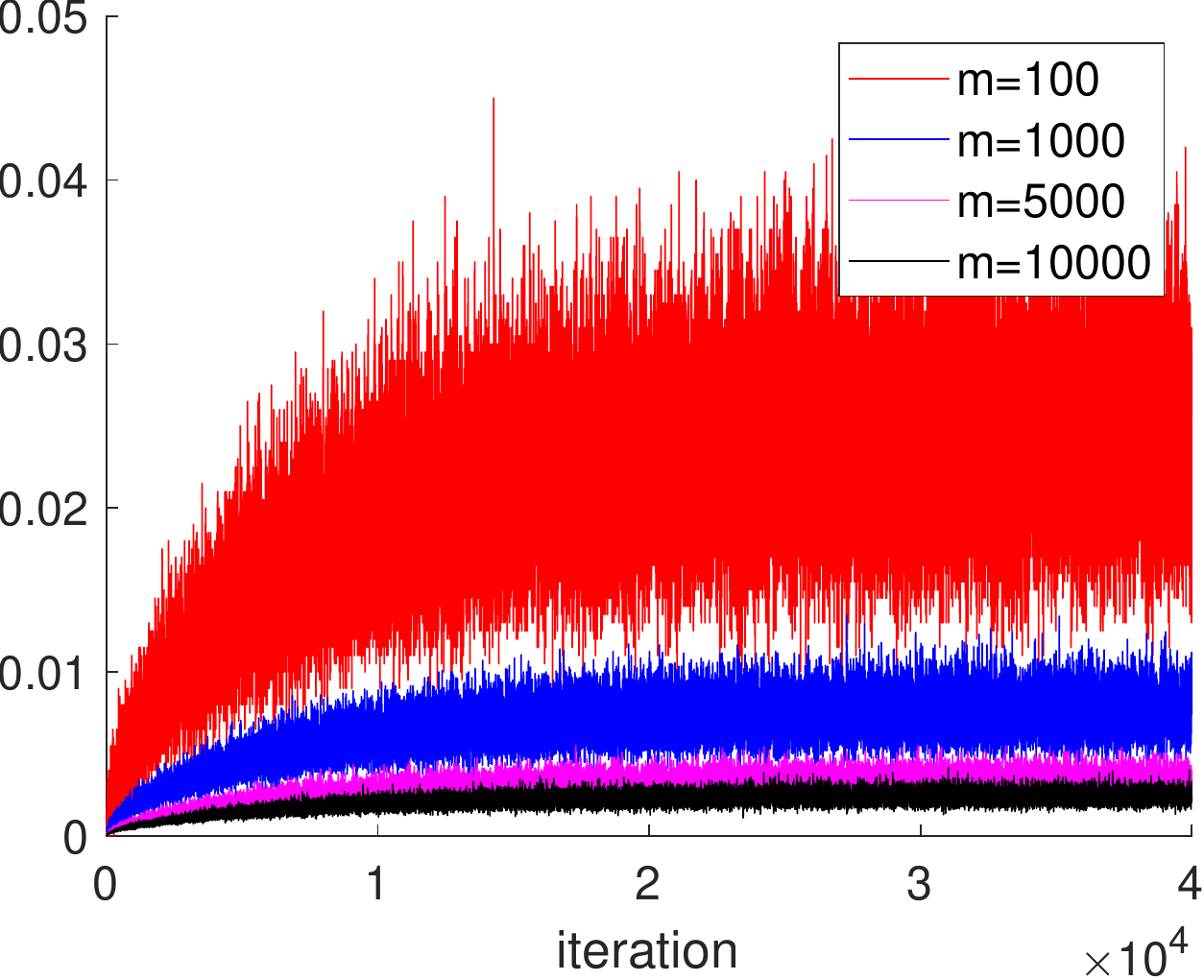}
        \caption{$S_t(X_t)/m$}
            \label{fig:compare_m_sf_sigmoid_large}
    \end{subfigure}
    ~
    \begin{subfigure}[]{0.32\linewidth}
        \includegraphics[width=\linewidth]{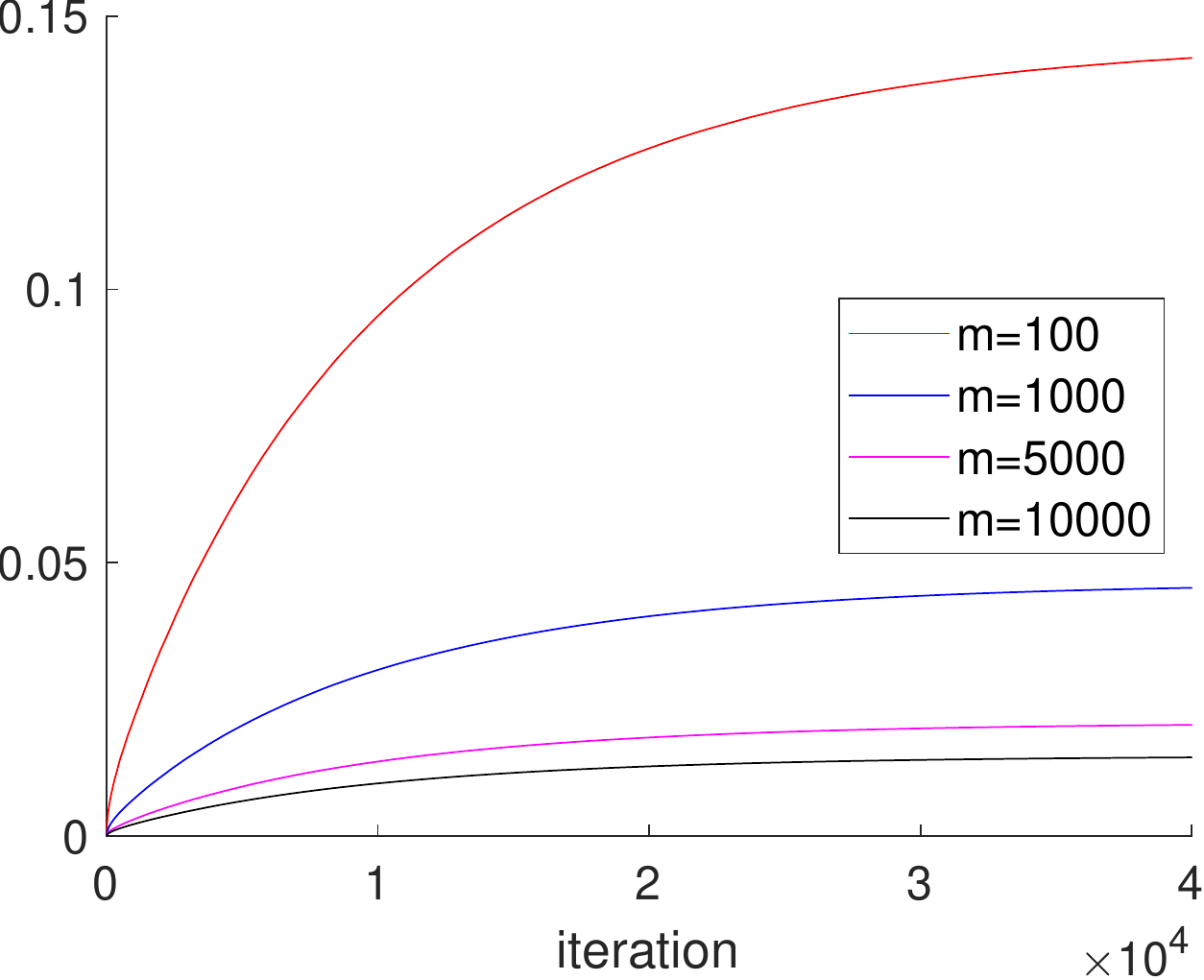}
        \caption{$\frac{\fnorm{W(t)-W(0)}}{\fnorm{W(0)}}$}
            \label{fig:compare_m_Wnorm_sigmoid_large}
    \end{subfigure}%

    \caption{comparison of different number of neurons with teacher neural network $f^{*}$ with $d=500$}
    \label{fig:compare_m_sigmoid_large}
\end{figure*}

\begin{figure*}[h]
    \centering
    \begin{subfigure}[]{0.32\linewidth}
        \includegraphics[width=\linewidth]{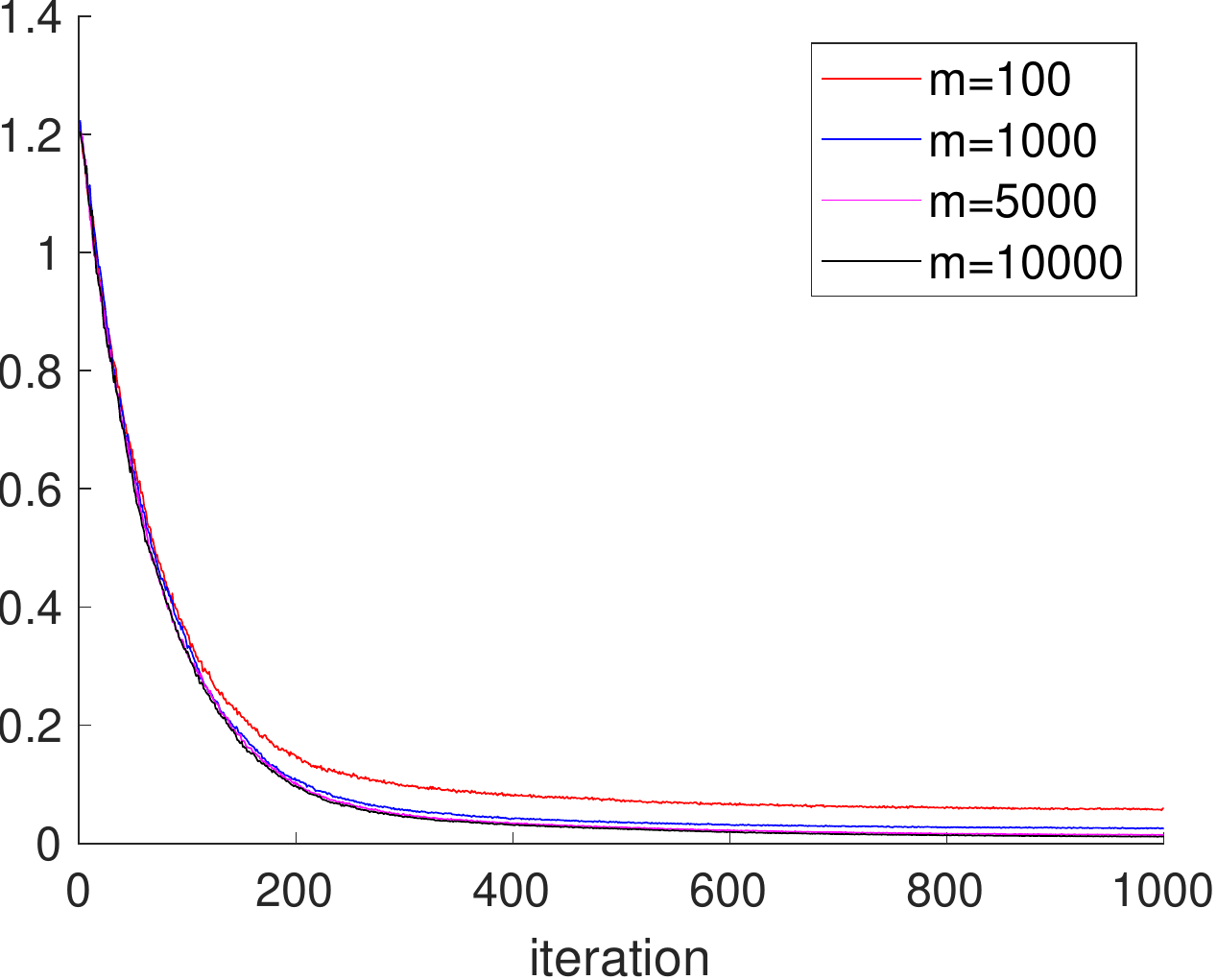}
        \caption{averaged prediction error }
            \label{fig:compare_m_err}
    \end{subfigure}%
        ~
    \begin{subfigure}[]{0.32\linewidth}
        \includegraphics[width=\linewidth]{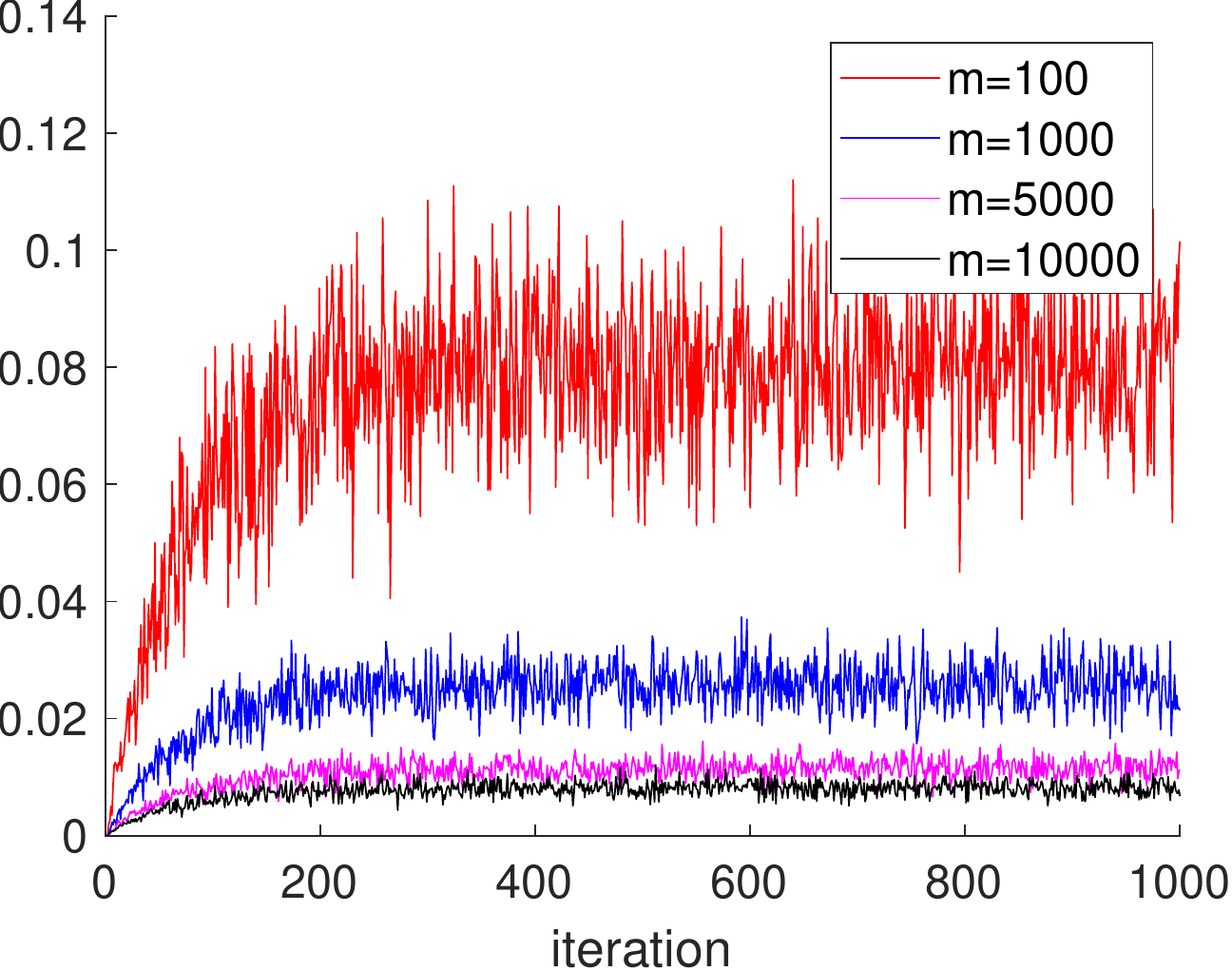}
        \caption{$S_t(X_t)/m$}
            \label{fig:compare_m_sf}
    \end{subfigure}
    ~
        \begin{subfigure}[]{0.32\linewidth}
        \includegraphics[width=\linewidth]{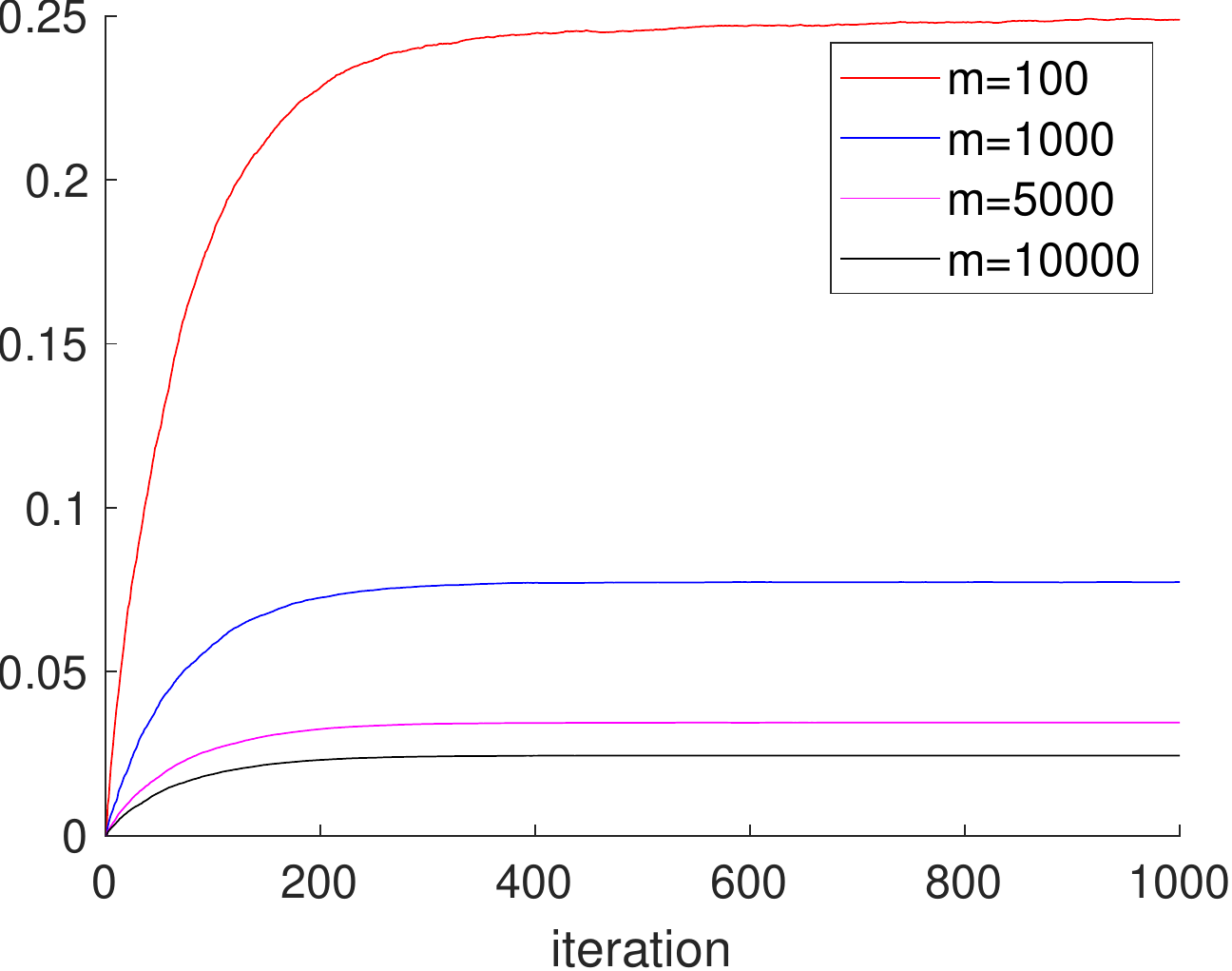}
        \caption{$\frac{\fnorm{W(t)-W(0)}}{\fnorm{W(0)}}$}
            \label{fig:compare_m_Wnorm}
    \end{subfigure}%
    \caption{comparison of different number of neurons with linear $f^{*}$ with $d=5$}
    \label{fig:compare_m}
\end{figure*}

\begin{figure*}[h!]
    \centering
    \begin{subfigure}[]{0.32\linewidth}
        \includegraphics[width=\linewidth]{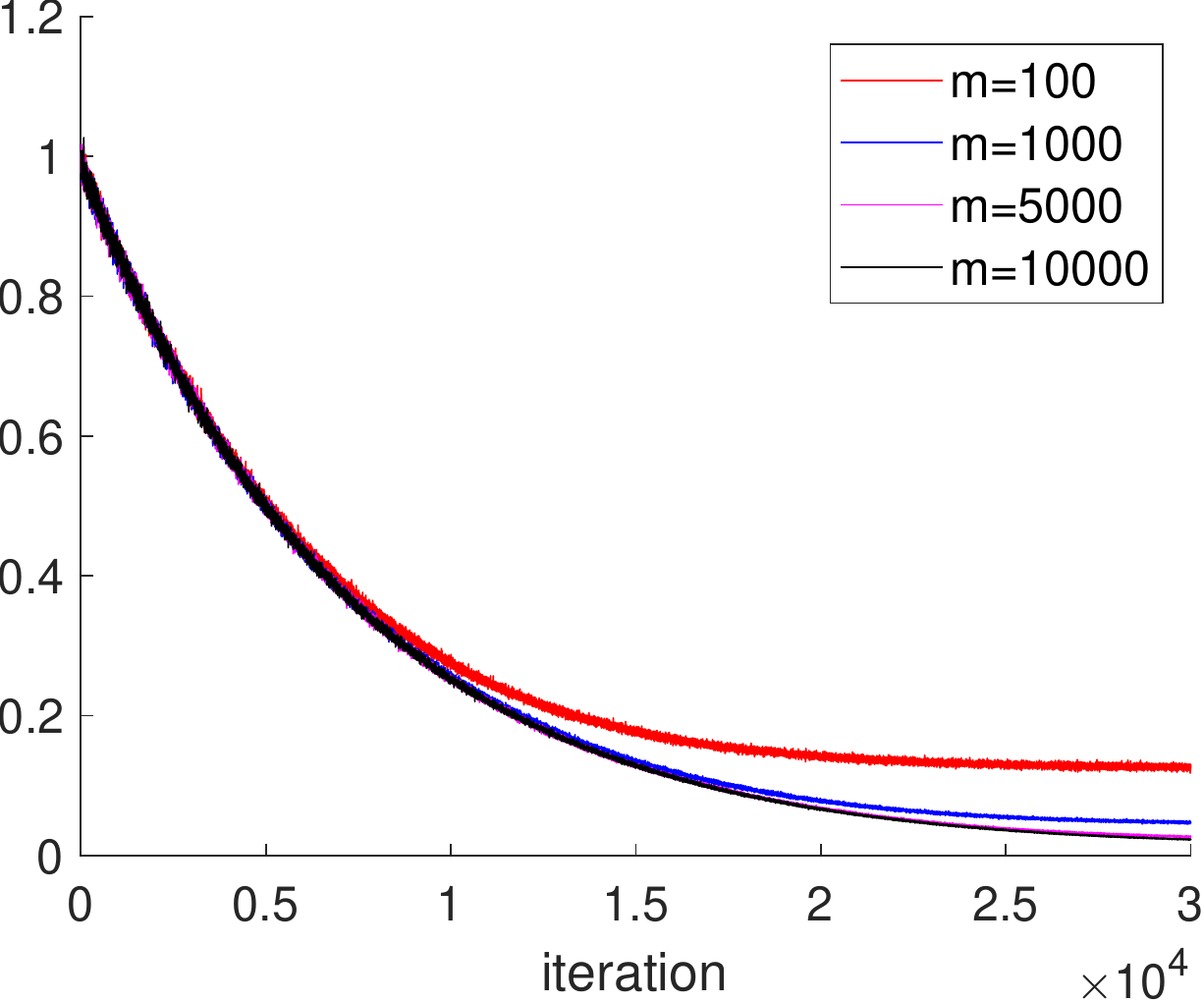}
        \caption{averaged prediction error }
            \label{fig:compare_m_err_large}
    \end{subfigure}%
    ~
        \begin{subfigure}[]{0.32\linewidth}
        \includegraphics[width=\linewidth]{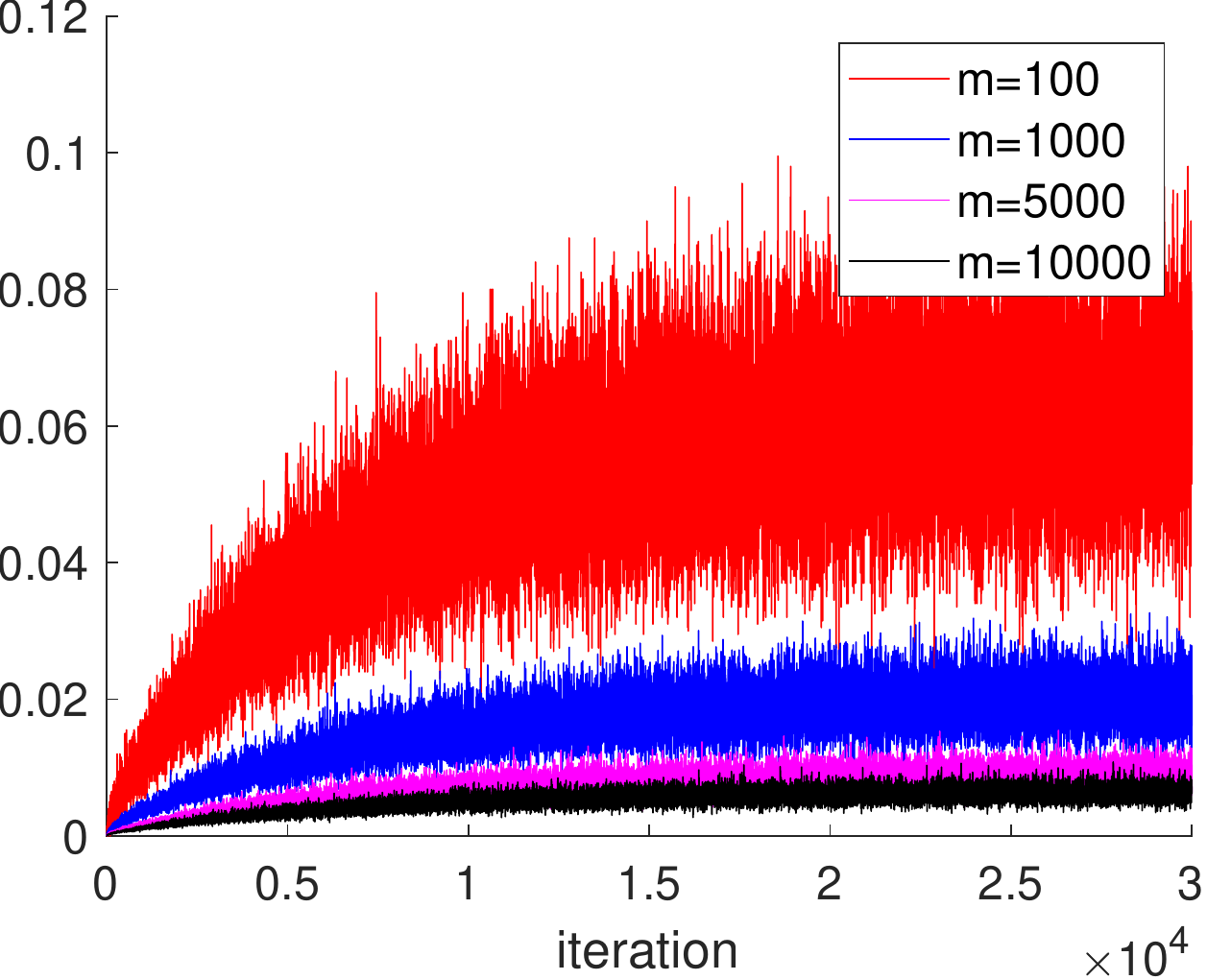}
        \caption{$S_t(X_t)/m$}
            \label{fig:compare_m_sf_large}
    \end{subfigure}
    ~
    \begin{subfigure}[]{0.32\linewidth}
        \includegraphics[width=\linewidth]{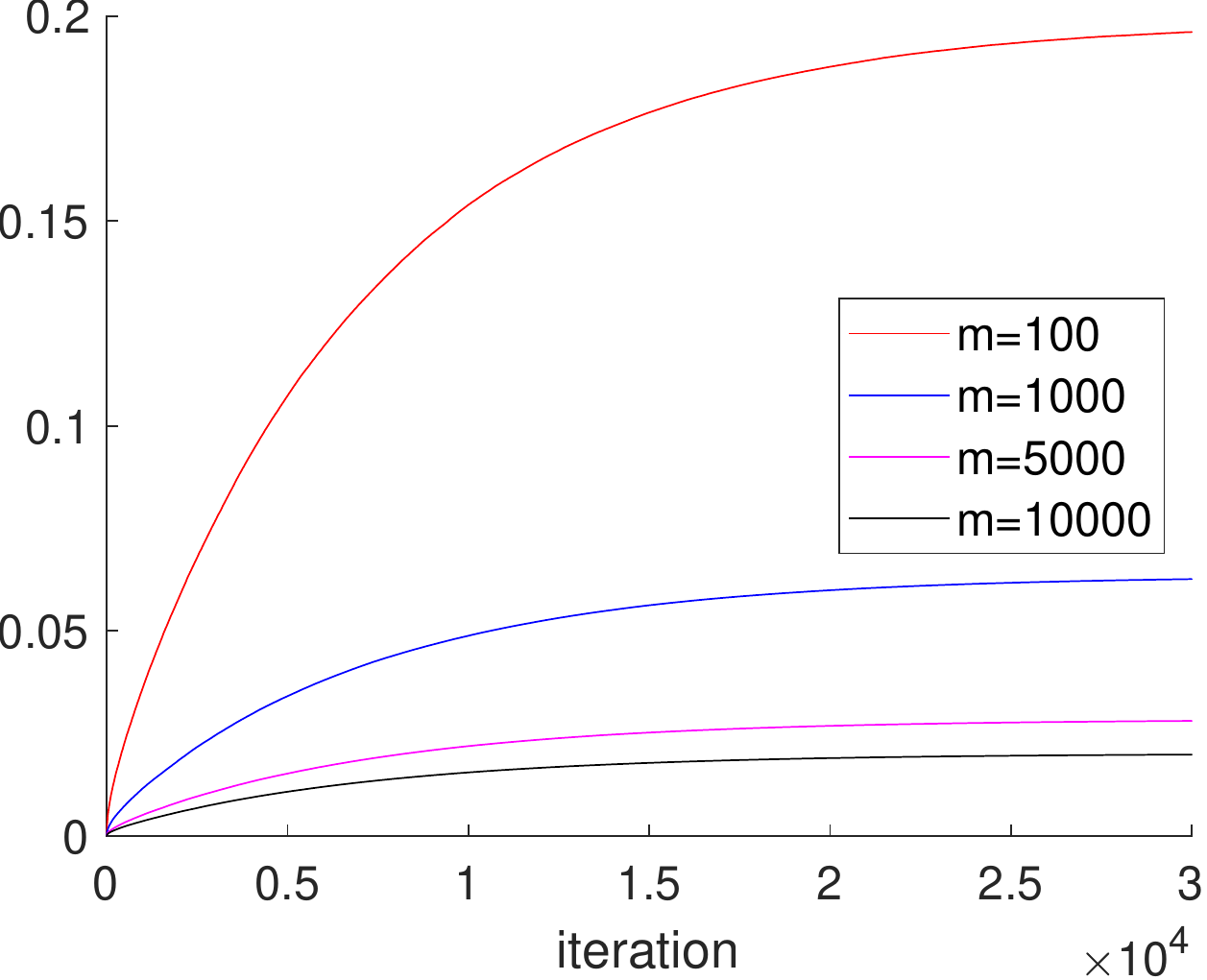}
        \caption{$\frac{\fnorm{W(t)-W(0)}}{\fnorm{W(0)}}$}
            \label{fig:compare_m_Wnorm_large}
    \end{subfigure}%
    \caption{comparison of different number of neurons with linear $f^{*}$ with $d=500$}
    \label{fig:compare_m_large}
\end{figure*}

\subsection{Real data experiment}
We also run a numerical experiment on the MNIST dataset. We only use the classes of images $0$ and $1$ for simplicity. We treat the empirical distribution of $14780$ images with $28 \times 28$ pixels as the underlying true data distribution. We reshape the data to have each $x_i \in \reals^{784}$. For each $x_i \in \reals^{784}$ in the dataset, we assign $y_i=1$ if the corresponding image is $1$ and $y_i=-1$ if the image is $0$. We then normalize $x_i$ to have $\opnorm{x_i}=1$. We run the SGD on streaming data with step size $\eta=0.02$ to learn the model. At each iteration, we randomly draw one $x_i$ from the dataset to obtain $(x_i,y_i)$. The average prediction error is estimated using freshly drawn $200$ data points, and the resulting error is further averaged over $20$ independent runs. Figure \ref{fig:compare_m_mnist} shows the result with $m=10000$. Figure \ref{fig:compare_m_mnist_er} shows that the overparametrized two-layer ReLU neural network under the one-pass SGD can learn $f^*$ in the handwritten digit recognition scenario. Figure \ref{fig:compare_m_mnist_sf} and Figure \ref{fig:compare_m_mnist_Wn} show a small proportion of sign changes and a small relative deviation of the weight matrix from the initialization.

\begin{figure*}[h!]
    \centering
    \begin{subfigure}[]{0.32\linewidth}
        \includegraphics[width=\linewidth]{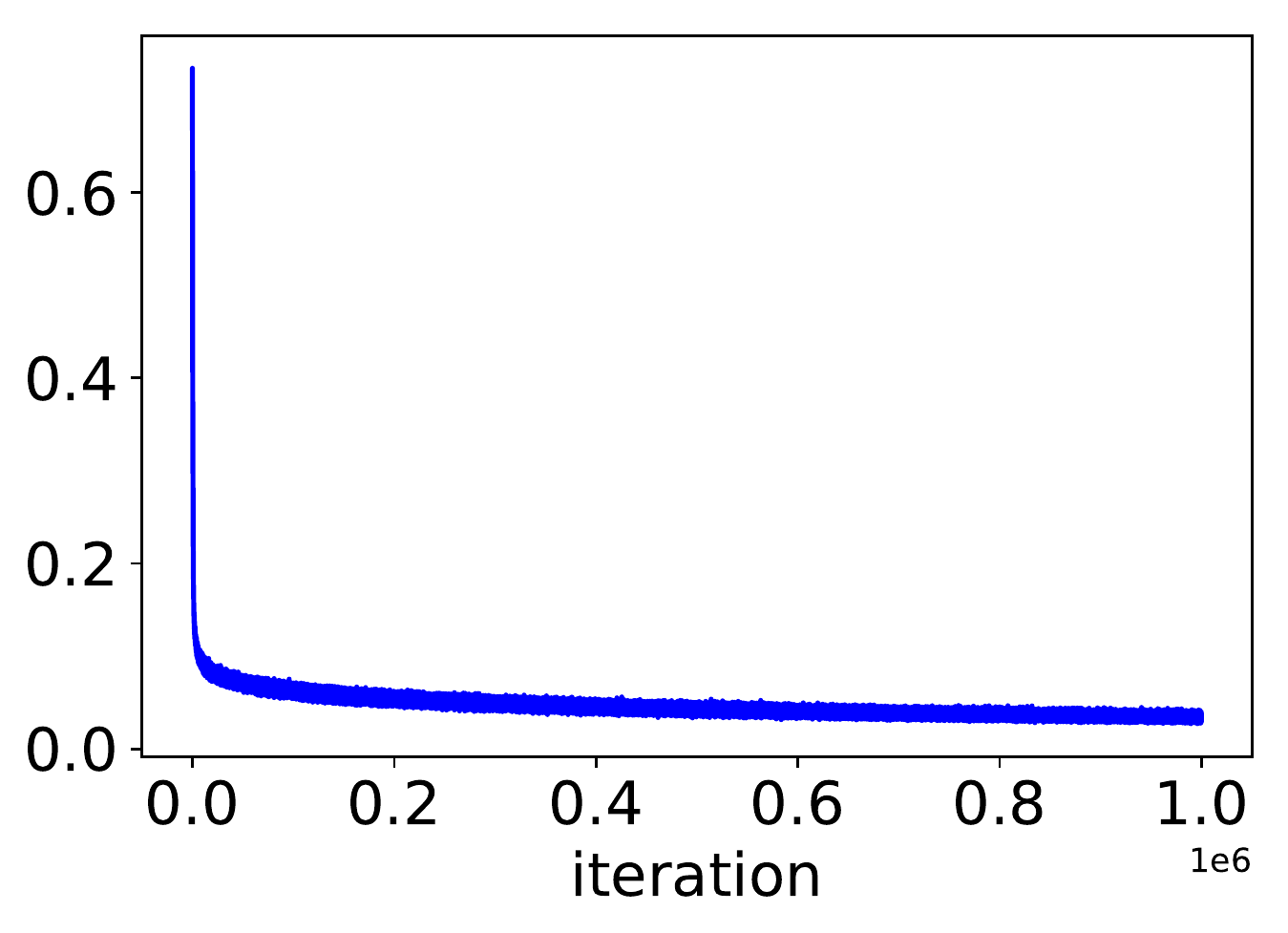}
        \caption{averaged prediction error }
            \label{fig:compare_m_mnist_er}
    \end{subfigure}%
    ~
        \begin{subfigure}[]{0.32\linewidth}
        \includegraphics[width=\linewidth]{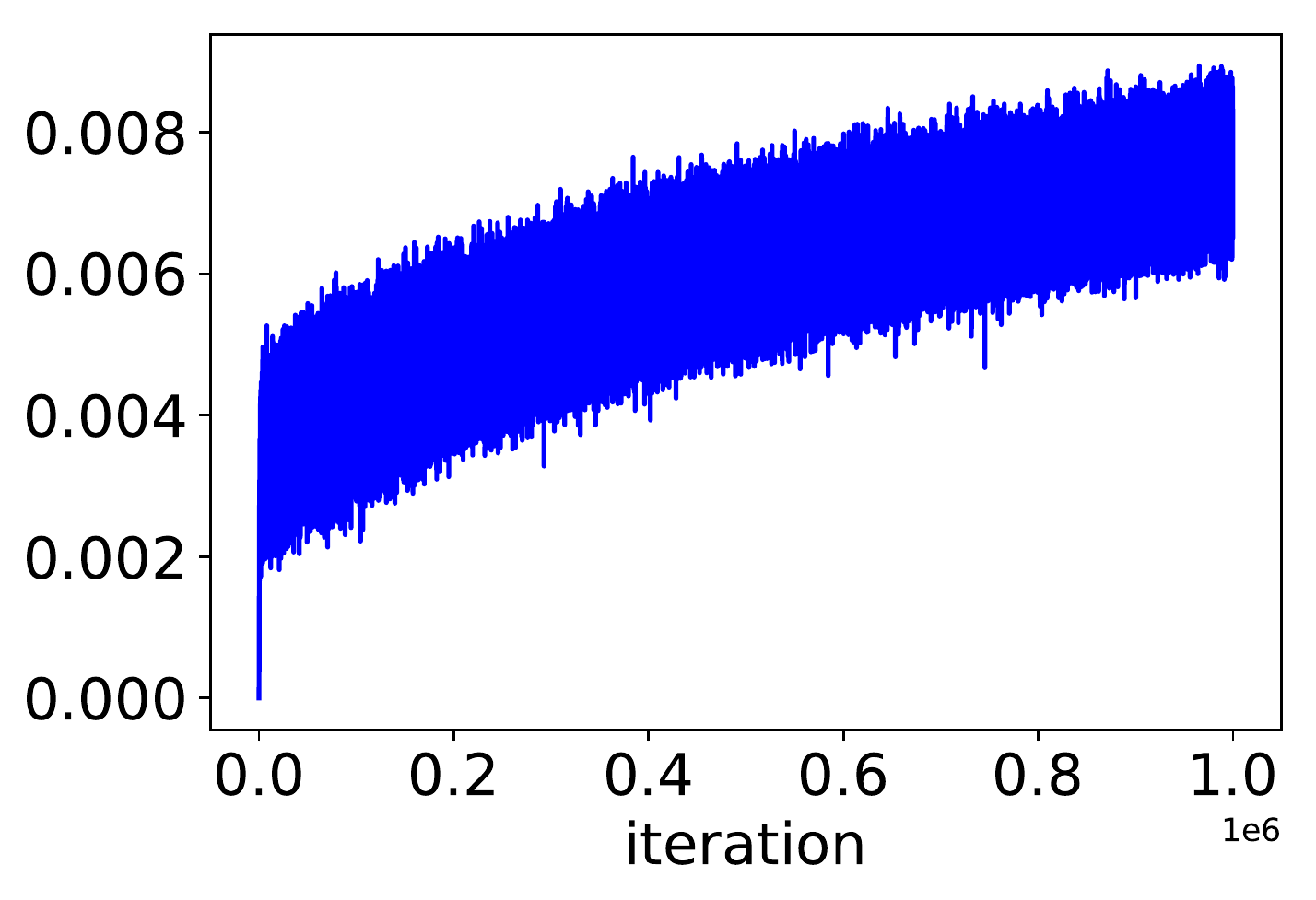}
        \caption{$S_t(X_t)/m$}
            \label{fig:compare_m_mnist_sf}
    \end{subfigure}
    ~
    \begin{subfigure}[]{0.32\linewidth}
        \includegraphics[width=\linewidth]{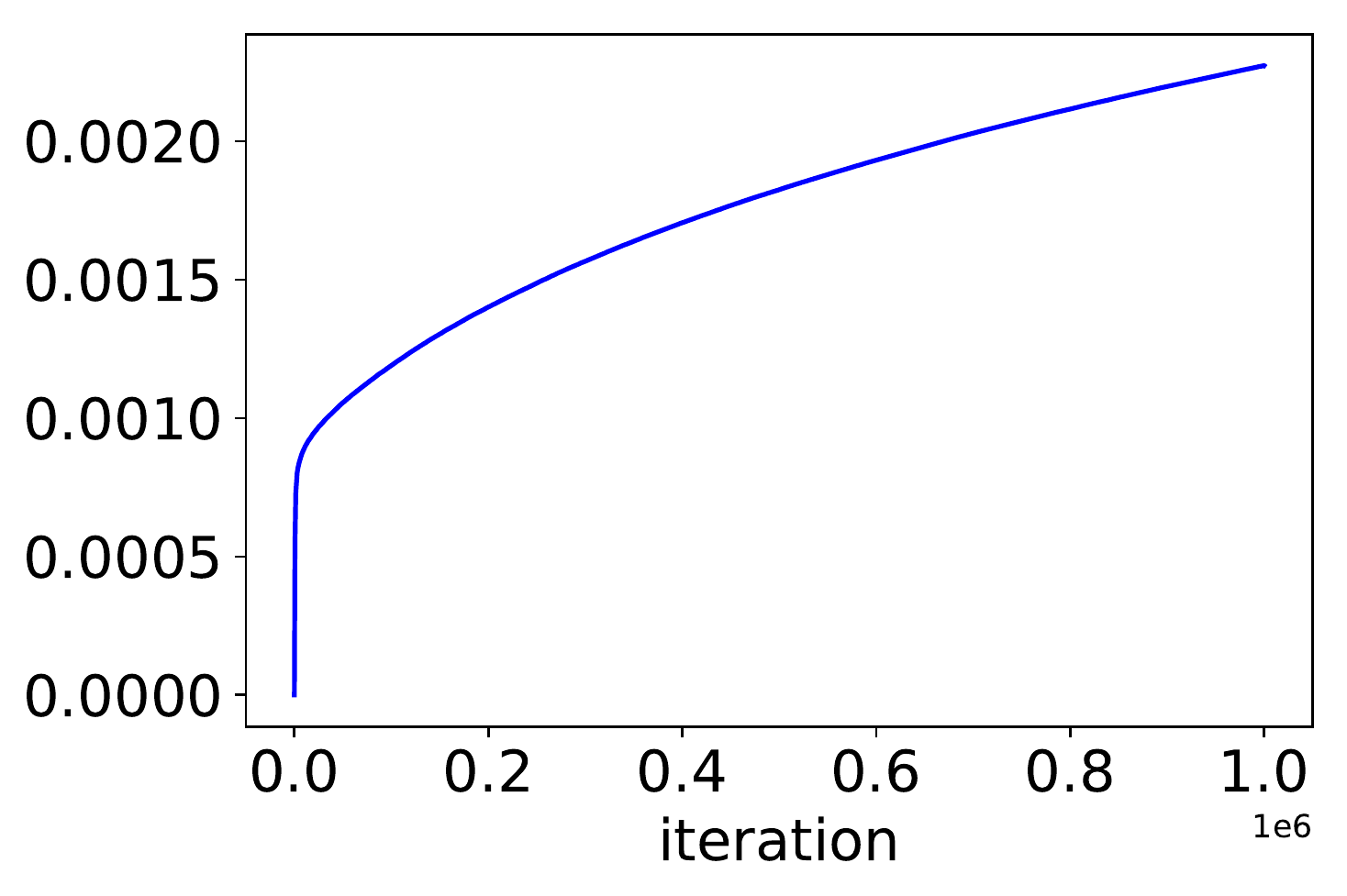}
        \caption{$\frac{\fnorm{W(t)-W(0)}}{\fnorm{W(0)}}$}
            \label{fig:compare_m_mnist_Wn}
    \end{subfigure}%
    \caption{Results on the MNIST dataset with $m=10000$}
    \label{fig:compare_m_mnist}
\end{figure*}
    
\section{Conclusion}

In this paper, we provide an upper bound of the average prediction error of two-layer neural networks under SGD in the streaming data setup, utilizing the eigen-decomposition of the neural tangent kernel $\sf\Phi$. Our analysis relies on
proving the uniform convergence of the kernel functions via the VC dimension and McDiarmid's inequality. We believe that this technique is also useful for analyzing multi-layer feed-forward neural networks and other types of neural networks.

\bibliographystyle{apalike}
\bibliography{OnTwoLayerGeneralization2} 

\clearpage
\appendix
\section{Proofs of Technical Lemmas in Section \ref{sec_proof_sketch}} \label{sec_analysis}

\subsection{Proof of Lemma \ref{bound_first_term}}
\begin{proof}
Fix any $t$. By the eigendecomposition of $\mathsf{\Phi}$, we know
$
\prod^t_{s=0}\sfK_s \circ\Delta_0=\sum^{\infty}_{i=1}\rho_i(t)\langle \Delta_0, \phi_i \rangle \phi_i,
$
where $\rho_{i}(t) \triangleq \prod^{t}_{s=0}(1-\eta_{s}\lambda_i)$.
Thus, for arbitrary $r \in \naturals$, we have
\begin{align*}
\opnorm{\prod^t_{s=0}\sfK_s \circ \Delta_0 }^2&=\sum^{\infty}_{i=1}\rho^2_i(t) \langle \Delta_0, \phi_i \rangle^2 \\
&\overset{(a)}{\leq} \sum^{r}_{i=1}\rho^2_r(t) \langle \Delta_0, \phi_i \rangle^2 + \sum^{\infty}_{i=r+1}\langle \Delta_0,\phi_i\rangle^2 \\
&\leq \rho^2_r(t) \opnorm{\Delta_0}^2 + \calR^2(\Delta_0,r),
\end{align*}
where $(a)$ holds by $\rho_i(t)\leq 1$ and the fact that $\rho_i(t)\leq \rho_r(t)$ for any $t$.
The conclusion then follows.
\end{proof}

\subsection{Proof of Lemma \ref{bound_Ht_Phi}} \label{proof_bound_Ht_Phi}
\begin{proof}
We first show $\linf{H_t-H_0}\leq \frac{2}{m}\linf{S_t}$ and then show $\linf{H_0-\Phi}\leq \frac{1}{m^{1/3}}+C_3\sqrt{\frac{d}{m}}$. The conclusion follows by the triangle inequality.

To see $\linf{H_t-H_0}\leq \frac{2}{m}\linf{S_t}$, note
\begin{align*}\notag
 \left|H_{t}(x,\tilde{x})-H_0(x,\tilde{x})\right|&= \left|\langle x,\tilde{x}\rangle \frac{1}{m}\sum^m_{i=1}\pth{ \indc{\langle W_{i}(t),x\rangle \geq 0}\indc{\langle W_{i}(t),\tilde{x}\rangle  \geq 0} - \indc{\langle W_{i}(0),x\rangle \geq 0}\indc{\langle W_{i}(0),\tilde{x} \rangle \geq 0}   }      \right|    \\ \notag
&\leq \frac{1}{m}\sum^m_{i=1}\left| \indc{\langle W_{i}(t),x\rangle \geq 0}\indc{\langle W_{i}(t),\tilde{x}\rangle  \geq 0} - \indc{\langle W_{i}(0),x\rangle \geq 0}\indc{\langle W_{i}(0),\tilde{x}\rangle \geq 0}        \right|\\ \notag
&\leq \frac{1}{m}\sum^m_{i=1}\left|\indc{\langle W_{i}(t),\tilde{x}\rangle  \geq 0} - \indc{\langle W_{i}(0),\tilde{x}\rangle \geq 0}      \right|+ \frac{1}{m}\sum^m_{i=1}\left| \indc{\langle W_{i}(t),x\rangle \geq 0} - \indc{\langle W_{i}(0),x\rangle \geq 0}   \right|\\
&\leq \frac{1}{m} \pth{S_t(x) + S_t(\tilde{x}) }.
\end{align*}
The conclusion follows by taking supremum over $x$ and $\tilde{x}$ on both hand sides. 

To see $\linf{H_0-\Phi}\leq \frac{1}{m^{1/3}}+C_3\sqrt{\frac{d}{m}}$, note
\begin{align*}\notag
\left|H_0(x,\tilde{x})-\Phi(x,\tilde{x})\right|&=\left|\langle x,\tilde{x}\rangle \pth{\frac{1}{m}\sum^m_{i=1}\indc{\langle W_{i}(0),x\rangle \geq 0}\indc{\langle W_{i}(0),\tilde{x}\rangle \geq 0} - \Expect_{w\sim N(0,I_d)}\qth{\indc{\langle w,x\rangle \geq 0}\indc{\langle w,\tilde{x}\rangle \geq 0}   }   } \right|  \\
&\leq \left|\frac{1}{m}\sum^{m}_{i=1}\indc{\langle W_{i}(0),x\rangle \geq 0}\indc{\langle W_{i}(0),\tilde{x}\rangle \geq 0} - \Expect_{w\sim N(0,I_d)}\qth{\indc{\langle w,x\rangle \geq 0}\indc{\langle w,\tilde{x}\rangle \geq 0}   }    \right|, 
\end{align*}
which completes the proof by taking supremum of $(x,\tilde{x})$ and invoking the definition of $\Omega_2$.
\end{proof}

\subsection{Proof of Lemma \ref{bound_St} }

\begin{proof}
Fix any $R>0$ and input $x$. Denote $B_R(x)=\{i:\left| \langle W_{i}(0),x\rangle \right|\leq R \}$. Then
$
S_t(x)\leq |B_R(x)| + |O_t(x) \cap B^c_R(x)|.
$
If neuron $i \in O_t(x) \cap B^c_R(x)$, then $\left|\langle W_{i}(t),x\rangle -\langle W_{i}(0), x\rangle \right|> R$.
Thus, $\fnorm{W(t)-W(0)}^2\geq R^2 \left| O_t(x) \cap B^c_R(x)  \right|$.
Under $\Omega_1$, we have 
$$
\sup_{x}|B_R(x)| \leq m^{2/3}+C_2\sqrt{md}+m\Expect_{w \sim N(0,I_d)}\qth{\indc{|\langle w,x\rangle | \leq R}} \leq m^{2/3}+C_2\sqrt{md}+\frac{2mR}{\sqrt{2\pi}}.
$$
Thus, we get
$$
\linf{S_t}\leq  m^{2/3}+C_2\sqrt{md}+\frac{2mR}{\sqrt{2\pi}}+ \frac{\fnorm{W(t)-W(0)}^2}{R^2}.
$$
Optimally choosing $R$ to be $\pth{\frac{\sqrt{2\pi}\fnorm{W(t)-W(0)}^2}{2m}}^{1/3}$, we get that
\begin{align*}
\linf{S_t}&\leq m^{2/3}+C_2\sqrt{md}+\frac{4m}{\sqrt{2\pi}}\pth{\frac{\sqrt{2\pi}}{2m}\fnorm{W(t)-W(0)}^2}^{1/3}\\
&=m^{2/3}+C_2\sqrt{md}+ \frac{2^{4/3}m^{2/3}\fnorm{W(t)-W(0)}^{2/3}}{\pi^{1/3}}.
\end{align*}
The conclusion follows by dividing both hand sides by $m$.

\end{proof}

\subsection{Proof of Lemma \ref{bound_v_term}}
\begin{proof}
Denote $F_t$ as the filtration of $\sth{X_1,\cdots, X_t}$. Let $q_t=\sum^{t}_{r=0}\prod^{t}_{i=r+1}\sfQ_i\circ v_{r}$ and $h_t=\sfQ_t \circ q_{t-1}$. Thus, $q_{t}=v_t+h_t$. Then
\begin{align*} 
\Expect\qth{\opnorm{q_{t}}^2}&=\Expect\qth{\opnorm{v_t+h_t}^2}
\overset{(a)}{=}\Expect\qth{\opnorm{v_t}^2}+\Expect\qth{\opnorm{h_t}^2}
\overset{(b)}{\leq} \Expect\qth{\opnorm{v_t}^2}+\Expect\qth{\opnorm{q_{t-1}}^2},
\end{align*}
where (a) uses the fact that $\Expect\qth{\langle v_{t},h_t\rangle}=\Expect\qth{\Expect\qth{ \langle v_{t},h_t\rangle |F_{t-1}}}=\Expect\qth{\langle \Expect\qth{v_{t}|F_{t-1}},h_t\rangle}=0$; (b) follows from $\opnorm{\sfQ_t}\leq 1$ as is shown in (\ref{bound_norm_calQ_t}).
Recursively applying the last displayed equation yields that
$
\Expect\qth{\opnorm{q_{t}}^2} \leq \sum^{t}_{r=0}\Expect\qth{\opnorm{v_{r}}^2}.
$

Furthermore, note that
\begin{align} 
&\Expect\qth{ v^2_{t}(x,X_t;W_t)  }\nonumber \\
        &=\eta^2_t\Expect\qth{  \pth{H_{t}(x,X_t)\pth{\Delta_t(X_t)+e_t}-\Expect_{X_t}\qth{ H_{t}(x,X_t)\Delta_t(X_t)  }}^2   }\nonumber \\ 
        &=\eta^2_t\Expect_{F_{t-1}}\qth{
        \Expect_{X_t, e_t}\qth{
        H^2_{t}\pth{x,X_t}\pth{\Delta_t\pth{X_t}+e_t}^2 | F_{t-1} } - \sth{\Expect_{X_t}\qth{H_{t}\pth{x,X_t}\Delta_t\pth{X_t} | F_{t-1}}}^2}\nonumber \\ 
        &\leq \eta^2_t\Expect_{F_{t-1}}\qth{ \Expect_{X_t, e_t}\qth{ H^2_{t}(x,X_t)\pth{\Delta_t(X_t)+e_t}^2 | F_{t-1} }}\nonumber \\ 
        &\leq \eta^2_t\pth{\Expect\qth{\opnorm{\Delta_t}^2}+\tau^2}\nonumber \\
        &=\eta^2_t\sigma^2_t,
        \label{v_in_sigma_form}
\end{align}
where the last inequality holds from $\linf{H_{t}}\leq 1$ and the independence of $e_t$ and $F_t$.
Therefore, $\Expect\qth{\opnorm{v_t}^2}\leq \eta^2_t\sigma^2_t$ for any $t\geq 0$.
The conclusion follows by applying Cauchy-Schwartz inequality. 
\end{proof}

\subsection{Proof of Lemma \ref{sigma_t_recursive}}
\begin{proof}
Recall from (\ref{dynamic_simplify}), $\Delta_{t+1}=\sfQ_t \circ \Delta_t - v_{t}+\epsilon_{t}$. Therefore,
\begin{align}\notag
\opnorm{\Delta_{t+1}}^2&=\opnorm{\sfQ_t \circ \Delta_t - v_{t}+\epsilon_{t}}^2\\ \notag
&=\opnorm{\sfQ_t \circ \Delta_t}^2 + \opnorm{v_{t}}^2+\opnorm{\epsilon_{t}}^2-2\langle \sfQ_t \circ \Delta_t, v_{t}\rangle-2\langle v_{t},\epsilon_{t}\rangle + 2\langle \sfQ_t \circ \Delta_t, \epsilon_{t}\rangle \\ 
&\leq \opnorm{\Delta_t}^2 + \opnorm{v_{t}}^2+\opnorm{\epsilon_{t}}^2+2\opnorm{\Delta_t}\opnorm{ v_{t}}+2\opnorm{v_{t}}\opnorm{\epsilon_{t}}+2\opnorm{\Delta_t}\opnorm{\epsilon_{t}}.
\label{eq:sigma_t_recursive}
\end{align}
where the last inequality holds by 
$\opnorm{\sfQ_t}\leq 1$ and Cauchy-Schwartz inequality.

Note $\linf{L_{t}}\leq 1$ and $\linf{M_{t}}\leq 1$ for any $t$. Thus, by (\ref{bone}),
$\opnorm{\epsilon_{t}}^2 \le \eta_t^2 \left( \Delta_t(X_t) + e_t \right)^2$ and
hence
\begin{align}
\Expect\qth{\opnorm{\epsilon_{t}}^2}\leq \eta^2_{t}\pth{\Expect\qth{\opnorm{\Delta_{t}}^2}+\tau^2}=\eta^2_t\sigma^2_t. \label{eq:epsilon_upper_bound}
\end{align}

Conditioning on the initialization $W(0)$,
taking expectation over both hand sides of (\ref{eq:sigma_t_recursive}), adding $\tau^2$ on both hand sides, and applying the upper bound of $\Expect\qth{\opnorm{\epsilon_{t}}^2}$ in \prettyref {eq:epsilon_upper_bound} and $\Expect\qth{\opnorm{v_{t}}^2}$ in~\eqref{v_in_sigma_form}, we get 
\begin{align*}
\sigma^2_{t+1}&\leq \sigma^2_t+\eta^2_{t}\sigma^2_t+\eta^2_{t}\sigma^2_t+2\Expect\qth{\opnorm{\Delta_t} \opnorm{v_t}}+2\Expect\qth{\opnorm{v_{t}}\opnorm{\epsilon_{t}}}+2\Expect\qth{\opnorm{\Delta_t}\opnorm{\epsilon_{t}}}\\
&\leq \pth{2\eta^2_t+1}\sigma^2_t+2\sqrt{\Expect\qth{\opnorm{\Delta_t}^2} }\sqrt{\Expect\qth{\opnorm{v_t}^2}}+2\sqrt{\Expect\qth{\opnorm{v_{t}}^2}}\sqrt{\Expect\qth{\opnorm{\epsilon_{t}}^2}}+2\sqrt{\Expect\qth{\opnorm{\Delta_t}^2}}\sqrt{\Expect\qth{\opnorm{\epsilon_{t}}^2   }  }\\
&\leq \pth{2\eta^2_t+1}\sigma^2_t+2\eta_{t}\sigma^2_t+2\eta^2_{t}\sigma^2_t+2\eta_{t}\sigma^2_t\\
&= \pth{1+2\eta_{t}}^2\sigma^2_t,
\end{align*}
where the second inequality holds by Cauchy-Schwartz inequality.
\end{proof}

\subsection{Proof of Lemma \ref{bound_LM}}
\begin{proof}

Fix $x$ and $\tilde{x}$, we have 
\begin{align*}
\left|L_{t}(x,\tilde{x})\right|&=\frac{1}{m}\left|\langle x,\tilde{x}\rangle \sum_{j\in A} \indc{\langle W_{j}(t),\tilde{x}\rangle \geq 0}\pth{\indc{\langle W_{j}(t+1),x\rangle  \geq 0}-\indc{\langle W_{j}(t),x\rangle  \geq 0 }}   \right| \\
&\leq \frac{1}{m}\sum_{j \in A} \left| \indc{\langle W_{j}(t),\tilde{x}\rangle \geq 0}\pth{\indc{\langle W_{j}(t+1),x\rangle \geq 0}-\indc{\langle W_{j}(t),x\rangle  \geq 0 }} \right|\\
&\leq \frac{1}{m}\sum_{j \in A} \left| \indc{\langle W_{j}(t+1),x\rangle \geq 0}-\indc{\langle W_{j}(t),x\rangle \geq 0 }  \right| \\
&\leq \frac{1}{m}\sum_{j \in A} \left| \indc{\langle W_{j}(t+1),x\rangle  \geq 0}-\indc{\langle W_{j}(0),x\rangle  \geq 0 }  \right|+\frac{1}{m}\sum_{j \in A} \left| \indc{\langle W_{j}(t),x\rangle \geq 0}-\indc{\langle W_{j}(0),x\rangle \geq 0 }  \right|\\
&\leq \frac{1}{m}\pth{S_{t+1}(x)+S_t(x)}.
\end{align*}

Thus, by taking supremum on both hand sides, we get the desired bound on $\linf{L_t}$. The conclusion for $\linf{M_t}$ follows analogously.
\end{proof}

\subsection{Proof of Lemma \ref{bound_Delta_0}}

\begin{proof}
Recall that $a_i$'s are $\iid$ Rademacher random variables. Thus, 
\begin{align*}
\Expect_{a,W(0)}\qth{\opnorm{\Delta_0}^2}&=\opnorm{f^*}^2-2\Expect_{a,W(0)}\sth{\iprod{f^*}{f}} + \Expect_{a,W(0)}\qth{\opnorm{f}^2}\\
&\overset{(a)}{=}\opnorm{f^*}^2+\Expect_{a,W(0)}\qth{\opnorm{f}^2}\\
&\overset{(b)}{=} \opnorm{f^*}^2+\Expect_{W(0),X}\qth{\frac{1}{m}\sum^m_{i=1}\sigma^2(\langle W_i(0),X\rangle)}\\
&\overset{(c)}{\leq} \opnorm{f^*}^2+\Expect_{W_1(0),X}\qth{\langle W_1(0),X\rangle^2} = \opnorm{f^*}^2 + 1,
\end{align*}
where (a) holds since $\Expect_a\qth{f}\equiv 0$; (b) holds by
$\expect{a_ia_j}=0$ for $i \neq j$; $(c)$ holds due to $\sigma^2(x)
\le x^2$; and the last equality holds because $\langle W_1(0),x\rangle \sim \calN(0,1).$
The conclusion then follows by Markov's inequality and Cauchy-Schwartz inequality.
\end{proof}

\section{Analysis under the symmetric initialization}\label{just_symmetric}

Under the symmetric initialization introduced in Remark \ref{rmk_symmetric_initial}, the same analysis goes through with the same NTK function $\Phi$. The only minor difference is in bounding $\Prob\qth{\Omega_1}$ and $\Prob\qth{\Omega_2}$.

\vspace{\medskipamount}

Here, we show $\Prob\qth{\Omega_2}\geq 1-\exp(-m^{1/3})$ under the symmetric initialization. Note that $w_i(0)=w_{\frac{m}{2}+i}(0)$ for $1\leq i\leq \frac{m}{2}$. Therefore, 
$$
\sum^{m}_{i=1}\indc{\langle w_i(0),x\rangle\geq 0}\indc{\langle w_i(0),x\rangle\geq 0}=2\sum^{m/2}_{i=1}\indc{\langle w_i(0),x\rangle\geq 0}\indc{\langle w_i(0),x\rangle\geq 0}.
$$
\vspace{\medskipamount}

Thus, we can rewrite $\Omega_2$ as
\begin{align*}
\Omega_2&=\Biggl\{\sup_{x,\tilde{x}} \biggl|\frac{1}{m} \sum^{m}_{i=1}\indc{\langle W_i(0),x\rangle \geq 0}\indc{\langle W_i(0),\tilde{x}\rangle \geq 0}-\Expect_{w\sim N(0,I_d)}\qth{\indc{\langle w,x\rangle \geq 0}\indc{\langle w,\tilde{x}\rangle \geq 0})}\biggr|\leq \frac{1}{m^{1/3}}+C_3\sqrt{\frac{d}{m}} \Biggr\}\\
&=\Biggl\{\sup_{x,\tilde{x}} \biggl|\frac{1}{m/2} \sum^{m/2}_{i=1}\indc{\langle W_i(0),x\rangle \geq 0}\indc{\langle W_i(0),\tilde{x}\rangle \geq 0}-\Expect_{w\sim N(0,I_d)}\qth{\indc{\langle w,x\rangle \geq 0}\indc{\langle w,\tilde{x}\rangle \geq 0})}\biggr|\leq \frac{1}{m^{1/3}}+C_3\sqrt{\frac{d}{m}} \Biggr\}.
\end{align*}

We then follow the same proof of Lemma \ref{bound_prob_Omega} presented in Section \ref{analysis_omega} with $m$ changed to $m/2$ in (\ref{def_phi_Omega}) to conclude
$\Prob\qth{\Omega_2}\geq 1-\exp(-m^{1/3})$.

Similarly, we can show $\Prob\qth{\Omega_1}\geq 1-\exp(-m^{1/3})$ following the same analysis here.

\section{Auxiliary Results}

\subsection{VC dimension} \label{appendix_vc}

Let $\calC$ be a collection of subsets of $\reals^d$. For any set $A$ consisting of finite points in $\reals^d$, we denote $\calC_A=\sth{C\cap A: C\in \calC}$. We say $\calC$ shatters $A$ if $|\calC_A|=2^{|A|}$.
Let $\calM_{\calC}(n)=\max\sth{|\calC_F|: F\subset \reals^d, |F|=n}$ and $\calS(\calC)=\sup\sth{n: \calM_{\calC}(n)=2^n}$ which is the largest cardinality of a set that can be shattered by $\calC$.

Consider a class of Boolean functions $\calF$ on $\reals^d$. For each $f\in \calF$, we denote $D_f=\sth{x: x\in \reals^d, f(x)=1}$. As a result, the collection $\calC_{\calF} \triangleq \sth{D_f, f\in \calF}$ forms a collection of subsets of $\reals^d$. The VC dimension of $\calF$ is defined as $\text{VC}(\calF) \triangleq \calS(\calC_{\calF})$.

We now present the propositions that are used in Lemma \ref{bound_prob_Omega}.

\begin{proposition}\cite[Theorem 1.1]{van2009note}
$$
\calS(\sqcap^N_{i=1}\calC_i)\leq \frac{5}{2}\log(4N)\sum^N_{i=1} \calS(\calC_i),
$$
where $\sqcap^N_{i=1} \calC_i = \sth{\cap^N_{j=1}C_j : C_j \in \calC_j, 1\leq j\leq N}$.
\label{vc_union}
\end{proposition}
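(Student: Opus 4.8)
The plan is to combine the Sauer--Shelah lemma with the multiplicative structure of the intersection class. Suppose a finite set $F\subset\reals^d$ with $|F|=n$ is shattered by $\calC\triangleq\sqcap^N_{i=1}\calC_i$, and write $d_i\triangleq\calS(\calC_i)$ and $D\triangleq\sum^N_{i=1}d_i$. Every member of $\calC_F$ has the form $\pth{\cap^N_{j=1}C_j}\cap F=\cap^N_{j=1}\pth{C_j\cap F}$ with $C_j\in\calC_j$, so the map $\big((\calC_1)_F,\dots,(\calC_N)_F\big)\to\calC_F$ sending $(A_1,\dots,A_N)\mapsto\cap^N_{j=1}A_j$ is onto; hence
$$
2^{n}=|\calC_F|\le\prod^N_{i=1}|(\calC_i)_F|\le\prod^N_{i=1}\calM_{\calC_i}(n).
$$
After discarding indices with $d_i=0$ (each contributes a factor $|(\calC_i)_F|\le1$) and noting the case $n\le D$ yields the conclusion at once once $\frac{5}{2}\log(4N)\ge1$, we may assume $n>D\ge\max_i d_i$, so that the Sauer--Shelah estimate $\calM_{\calC_i}(n)\le(en/d_i)^{d_i}$ applies and
$$
n\log2\le\sum^N_{i=1}d_i\log\pth{\frac{en}{d_i}}.
$$

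Next I would collapse the right-hand side into one term. By concavity of $\log$ and Jensen's inequality with weights $d_i/D$,
$$
\sum^N_{i=1}d_i\log\pth{\frac{en}{d_i}}=D\sum^N_{i=1}\frac{d_i}{D}\log\pth{\frac{en}{d_i}}\le D\log\pth{\sum^N_{i=1}\frac{d_i}{D}\cdot\frac{en}{d_i}}=D\log\pth{\frac{enN}{D}},
$$
so $n\log2\le D\log(enN/D)$. It remains to invert this implicit bound on $n$. The function $g(n)\triangleq n\log2-D\log(enN/D)$ has $g'(n)=\log2-D/n>0$ for $n>D/\log2$, so it suffices to exhibit one value $n_0>D/\log2$ with $g(n_0)>0$; then every $n$ satisfying $g(n)\le0$ obeys $n<n_0$. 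Taking $n_0=\frac{5}{2}D\log(4N)$ and verifying the one-variable inequality $\frac{5}{2}(\log2)\log(4N)>\log\big(\frac{5}{2}eN\log(4N)\big)$ for all integers $N\ge1$ (it is tightest near $N=1$) gives $n\le\frac{5}{2}D\log(4N)=\frac{5}{2}\log(4N)\sum^N_{i=1}\calS(\calC_i)$, which is the assertion.

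The product bound and the Jensen step are routine; the delicate part — and the reason this is stated as an imported lemma rather than derived inline — is extracting the explicit constant $\frac{5}{2}$ in the inversion so that it holds uniformly in $N$, including the small cases $N=1,2$ needed for the application $\text{VC}(\calF_1)\le5\log(8)\,\text{VC}(\calG)$, together with the bookkeeping for the degenerate situations $d_i=0$ and $n\le D$. For this final optimization I would follow the computation in \cite[Theorem~1.1]{van2009note} rather than re-derive the constant from scratch.
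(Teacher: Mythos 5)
The paper does not prove \prettyref{vc_union}: it is imported verbatim from \cite[Theorem~1.1]{van2009note}, so there is no in-paper argument to compare against. Your reconstruction does, however, follow the same route as the cited source: bound $|\calC_F|$ by the product $\prod_i |(\calC_i)_F|$ via the map $(A_1,\dots,A_N)\mapsto\cap_j A_j$, apply the Sauer--Shelah estimate $\calM_{\calC_i}(n)\le (en/d_i)^{d_i}$ (valid once you have reduced to $n>D\ge d_i\ge 1$), collapse the resulting sum with Jensen to get $n\log 2\le D\log(enN/D)$, and invert. The Jensen step and the handling of the degenerate cases ($d_i=0$, $n\le D$, and the corner case where all $d_i=0$ so that $\calS(\calC)=0$) are all correct; note also that discarding zero indices only decreases the effective $N$, which only tightens the target bound $\tfrac{5}{2}\log(4N)D$, so no generality is lost. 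The one place you stop short is the explicit verification that $g(n_0)>0$ at $n_0=\tfrac{5}{2}D\log(4N)$, i.e.\ that $\tfrac{5}{2}(\log 2)\log(4N)>\log\bigl(\tfrac{5e}{2}N\log(4N)\bigr)$ for all integers $N\ge 1$. This does hold: the difference is increasing in $N$ (its derivative has the sign of $\tfrac{5}{2}\log 2-1-1/\log(4N)>0$ since $\log(4N)\ge\log 4>1/(\tfrac{5}{2}\log 2-1)$), and at $N=1$ one computes $\tfrac{5}{2}(\log 2)\log 4\approx 2.40>2.24\approx\log(\tfrac{5e}{2}\log 4)$, so the tightest case indeed clears. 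Together with your observation that $g'(n)>0$ for $n>D/\log 2$ and $n_0=\tfrac{5}{2}D\log(4N)>D/\log 2$, the inversion is complete. So the sketch is sound and matches the cited derivation; deferring the last numeric check to the reference is exactly what the paper itself does.
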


Proposition \ref{vc_union} is used to bound the VC dimension of the function class of the product of two Boolean functions. Another application of VC dimension used in Lemma \ref{bound_prob_Omega} is the following proposition.

\begin{proposition}\cite[Theorem 8.3.23]{vershynin2019high} Let $\calF$ be a class of Boolean functions on a probability space $\pth{\Omega,\Sigma,\mu}$ with finite VC dimension $\text{VC}(\calF)\geq 1$. Let $X_1,X_2,\cdots,X_n$ be independent random points in $\Omega$. Then 
$$
\Expect\qth{\sup_{f\in \calF}\left|\frac{1}{n}\sum^n_{i=1}f(X_i)-\Expect_X\qth{f(X)}\right|  }\leq C\sqrt{\frac{\text{VC}(\calF)}{n}}
$$
for some constant $C$.
\label{vc_mean}
\end{proposition}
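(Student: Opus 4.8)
The plan is the classical symmetrization-plus-metric-entropy route. First I would \emph{symmetrize}: with an independent ghost sample $X_1',\dots,X_n'$ drawn i.i.d.\ from $\mu$ and with i.i.d.\ Rademacher signs $\varepsilon_1,\dots,\varepsilon_n$ independent of everything else,
\[
\Expect\qth{\sup_{f\in\calF}\Bigl|\tfrac1n\sum_{i=1}^n f(X_i)-\Expect_X\qth{f(X)}\Bigr|}
\le 2\,\Expect\qth{\sup_{f\in\calF}\Bigl|\tfrac1n\sum_{i=1}^n \varepsilon_i f(X_i)\Bigr|},
\]
via the two standard steps (insert the ghost sample inside the supremum using Jensen, then randomize signs). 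So it suffices to bound the expected Rademacher complexity of $\calF$.

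Next I would condition on $X_1,\dots,X_n$ and control the conditional Rademacher average by Dudley's entropy integral. Equip $\calF$ with the empirical $L^2$ pseudometric $d_n(f,g)=\bigl(\tfrac1n\sum_{i=1}^n (f(X_i)-g(X_i))^2\bigr)^{1/2}$; since $f\in\{0,1\}$, the $d_n$-diameter of $\calF$ is at most $1$, and $f\mapsto\tfrac1n\sum_i\varepsilon_i f(X_i)$ is a sub-Gaussian process with respect to $n^{-1/2}d_n$. Dudley's bound then gives
\[
\Expect_\varepsilon\qth{\sup_{f\in\calF}\Bigl|\tfrac1n\sum_{i=1}^n\varepsilon_i f(X_i)\Bigr|}
\le \frac{C}{\sqrt n}\int_0^1\sqrt{\log N(\calF,d_n,\varepsilon)}\,d\varepsilon ,
\]
where $N(\calF,d_n,\varepsilon)$ is the $\varepsilon$-covering number. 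The crucial input is the Haussler / Dudley--Pollard covering-number bound for a VC class, $N(\calF,d_n,\varepsilon)\le (C/\varepsilon)^{c\,\text{VC}(\calF)}$ \emph{uniformly in the sample}, which ultimately rests on the Sauer--Shelah lemma. Substituting, the integrand is $\lesssim\sqrt{\text{VC}(\calF)\,\log(C/\varepsilon)}$, and $\int_0^1\sqrt{\log(C/\varepsilon)}\,d\varepsilon<\infty$, so the integral is $O\bigl(\sqrt{\text{VC}(\calF)}\bigr)$. Taking expectation over $X$ and combining with the symmetrization step yields the claimed $C\sqrt{\text{VC}(\calF)/n}$.

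The main obstacle is this uniform covering-number estimate for VC classes: showing that in \emph{every} empirical $L^2$ metric the $\varepsilon$-entropy grows only polynomially in $1/\varepsilon$, with exponent $O(\text{VC}(\calF))$, is the nontrivial step, proved by a probabilistic extraction argument combined with Sauer--Shelah. The cheaper route---bounding the number of distinct restrictions $\{(f(X_1),\dots,f(X_n)):f\in\calF\}$ by the shatter coefficient and applying Massart's finite-class lemma directly---only gives $C\sqrt{\text{VC}(\calF)\log(en/\text{VC}(\calF))/n}$, with a spurious logarithmic factor; removing it is exactly what the chaining refinement accomplishes. For the use in Lemma~\ref{bound_prob_Omega} even this weaker bound would suffice after absorbing the logarithm into the polynomial-in-$m$ factors, but I would carry out the chaining argument to obtain the clean rate stated here.
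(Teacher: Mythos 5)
Your sketch is correct and follows essentially the same route as the cited source (Vershynin, HDP, Thm.\ 8.3.23): symmetrization, Dudley chaining against the empirical $L^2$ metric, and the Haussler/Dudley--Pollard covering bound $N(\calF,d_n,\varepsilon)\le (C/\varepsilon)^{c\,\text{VC}(\calF)}$, which is precisely how the extra $\sqrt{\log n}$ from the naive Sauer--Shelah/Massart bound is removed. The paper itself only cites this result and gives no independent proof, so there is nothing to contrast; your observation that the log-lossy bound would already suffice for Lemma~\ref{bound_prob_Omega} after absorbing it into the polynomial-in-$m$ conditions is accurate.
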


\subsection{Eigen-decomposition of $\sf\Phi$ when the data distribution is uniform on $\mathbb{S}^{d-1}$} \label{appendix_eigen}

Here, we present a way to compute the eigenvalues $\lambda_{\ell}$ and the projection $\calR(f^*,\ell)$ in Corollary \ref{corollary_beta} and Corollary \ref{corollary_projectionRemainder} when $f^*(x)=h(\langle w,x\rangle)$ for $h:\reals \to \reals$ and $w\in \reals^d$. Both can be viewed as the applications of the following Theorem \ref{Thm_expansion_gegenbauer}.

Define the space of homogeneous harmonic polynomials of order $\ell$ on the sphere as
$$
\calH_{\ell}=\sth{P: \mathbb{S}^{d-1} \to \reals: P(x)=\sum_{|\alpha|={\ell}}c_{\alpha}x^{\alpha},\  \Delta P=0},
$$ where $x^{\alpha}=x^{\alpha_1}_1\cdots x^{\alpha_d}_{d}$, $|\alpha|=\sum^d_{i=1} \alpha_i$, $c_{\alpha} \in \reals$ and $\Delta= \sum^d_{i=1}\frac{\partial^2}{\partial x^2_i}$ is the Laplacian operator.

Denote for all $\ell\geq 0$, $\sth{Y_{\ell,i}}^{N_{\ell}}_{i=1}$ as some orthonormal basis of $\calH_{\ell}$ where $N_{\ell}=\frac{\ell+\lambda}{\lambda}C^{\lambda}_{\ell}(1)$ is the dimension of $\calH_{\ell}$ where $\lambda=\frac{d-2}{2}$ and $C^{\lambda}_{\ell}(x)$ is the Gegenbauer polynomial defined in (\ref{def_gegenbauer}), \ie, $\langle Y_{\ell,i},Y_{\ell,j}\rangle=0$ for $i\neq j$. Moreover, from \cite[Theorem 1.1.2]{dai2013approx} for $\ell \neq \ell'$, $\calH_{\ell}$ and $\calH_{\ell'}$ are orthogonal. Hence, $\sth{Y_{\ell,i}}$ are orthogonal across different $\ell$ as well.

We now derive in Theorem \ref{Thm_expansion_gegenbauer} an expansion for functions with the form $\calK(x,y)=h(\langle x,y\rangle), \ x,y \in \mathbb{S}^{d-1}, d\geq 3$ in terms of $\sth{Y_{\ell,i}}, 1\leq i\leq N_{\ell}, \ell\geq 0$. A similar result is obtained in \cite{su2019learning} without a full proof. We provide a proof here for completeness.

\begin{theorem} \label{Thm_expansion_gegenbauer}
Suppose the function $\calK$ has the form $\calK(x,y)=h(\langle x,y\rangle)$ where $h$ is analytic on $[-1,1]$, $x,y\in \mathbb{S}^{d-1}$ and $d\geq 3$. Then 
$$
\calK(x,y)=\sum_{\ell\geq 0}\beta_{\ell}(h)\sum^{N_{\ell}}_{i=1}Y_{\ell,i}(x)Y_{\ell,i}(y),
$$
where 
\begin{equation}
 \beta_{\ell}(h)=\frac{d-2}{2}\sum^{\infty}_{m=0}\frac{h_{\ell+2m}}{2^{\ell+2m}m!(\frac{d-2}{2})_{\ell+m+1}}  
 \label{general_form_beta}
\end{equation}
with $h_{\ell+2m}$ is the $(\ell+2m)$-th derivative of $h$ at $0$ and $(\cdot)_n$ is the Pochhammer symbol recursively defined as $(a)_0=1$, $(a)_k=(a+k-1) (a)_{k-1}$ for $k\geq 1$.
\end{theorem}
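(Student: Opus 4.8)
The plan is to factor the statement into two classical ingredients — the addition theorem for spherical harmonics and the Gegenbauer expansion of the scalar analytic function $h$ — and to pin down $\beta_\ell(h)$ by combining them. Write $\lambda=\tfrac{d-2}{2}$. Since $\{Y_{\ell,i}\}_{i=1}^{N_\ell}$ is an orthonormal basis of $\calH_\ell$ in $L^2(\mu)$ with $\mu$ the uniform probability measure on $\mathbb{S}^{d-1}$, the reproducing kernel $Z_\ell(x,y)=\sum_{i=1}^{N_\ell}Y_{\ell,i}(x)Y_{\ell,i}(y)$ satisfies $Z_\ell(gx,gy)=Z_\ell(x,y)$ for $g\in O(d)$, hence depends only on $\langle x,y\rangle$; fixing one variable it becomes a zonal harmonic of degree $\ell$, and since the space of such is one-dimensional and spanned by $t\mapsto C^{\lambda}_\ell(t)$ evaluated at $t=\langle x,y\rangle$, we get $Z_\ell(x,y)=c_\ell\,C^{\lambda}_\ell(\langle x,y\rangle)$. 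The constant is pinned down at $y=x$: $Z_\ell(x,x)$ is constant by transitivity and integrates to $N_\ell$, so $c_\ell=N_\ell/C^{\lambda}_\ell(1)=\tfrac{\ell+\lambda}{\lambda}$ by the stated dimension formula. Thus it suffices to prove the one-variable expansion $h(t)=\sum_{\ell\ge0}\tfrac{\ell+\lambda}{\lambda}\,\beta_\ell(h)\,C^{\lambda}_\ell(t)$ uniformly on $[-1,1]$ with $\beta_\ell(h)$ as in \eqref{general_form_beta}; substituting back and interchanging the (then) absolutely convergent sums gives the claimed harmonic expansion.

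For the scalar expansion I would go through the Maclaurin series. Analyticity of $h$ on $[-1,1]$ gives radius of convergence $R>1$, so $h(t)=\sum_{n\ge0}\tfrac{h_n}{n!}t^n$ with $\sum_{n}\tfrac{|h_n|}{n!}<\infty$. I would plug in the standard monomial-to-Gegenbauer identity
\[
t^{\,n}=\frac{n!}{2^{\,n}}\sum_{k=0}^{\lfloor n/2\rfloor}\frac{\lambda+n-2k}{k!\,(\lambda)_{\,n-k+1}}\,C^{\lambda}_{\,n-2k}(t),
\]
regroup by the degree $\ell=n-2k$ (i.e.\ $n=\ell+2m$, $k=m$), and interchange the two sums. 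The interchange — and uniform convergence on $[-1,1]$ of the regrouped series — is legitimate because the double series of absolute values is dominated: using $\sup_{t\in[-1,1]}|C^{\lambda}_j(t)|=C^{\lambda}_j(1)$ (valid for all $\lambda>0$, in particular for $d=3$) and evaluating the displayed identity at $t=1$, one has $\tfrac{n!}{2^n}\sum_k\tfrac{(\lambda+n-2k)C^{\lambda}_{n-2k}(1)}{k!(\lambda)_{n-k+1}}=1$, so the total sum of absolute values is exactly $\sum_n\tfrac{|h_n|}{n!}<\infty$. Collecting the coefficient of $C^{\lambda}_\ell(t)$ gives
\[
h(t)=\sum_{\ell\ge0}\Bigl[(\lambda+\ell)\sum_{m\ge0}\frac{h_{\ell+2m}}{2^{\ell+2m}\,m!\,(\lambda)_{\ell+m+1}}\Bigr]C^{\lambda}_\ell(t),
\]
and dividing by $c_\ell=\tfrac{\ell+\lambda}{\lambda}$ identifies $\beta_\ell(h)=\lambda\sum_{m\ge0}\tfrac{h_{\ell+2m}}{2^{\ell+2m}m!(\lambda)_{\ell+m+1}}$, which is exactly \eqref{general_form_beta} since $\lambda=\tfrac{d-2}{2}$.

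The main obstacle — really the only place care is required — is the justification of these two interchanges and of the convergence of the resulting harmonic series; the trick above (evaluating the monomial identity at $t=1$, together with $|C^{\lambda}_j|\le C^{\lambda}_j(1)$ on $[-1,1]$ and the super-geometric decay of $h_n/n!$) turns this into a one-line domination rather than a delicate estimate. A secondary point to state carefully is the normalization convention: everything is calibrated so that $\{Y_{\ell,i}\}$ is orthonormal with respect to the \emph{probability} measure $\mu$, which is precisely what replaces the factor $\ell+\lambda$ coming from the Gegenbauer coefficient by the factor $\lambda$ appearing in \eqref{general_form_beta}. As an alternative route one can obtain $\beta_\ell(h)$ directly from the Funk--Hecke formula, as the $\calH_\ell$-eigenvalue of the integral operator associated with $\calK$, and then evaluate the resulting weighted integral $\int_{-1}^1 h(t)\,C^{\lambda}_\ell(t)(1-t^2)^{\lambda-1/2}\,dt$ term-by-term against the Maclaurin series of $h$ using the Gegenbauer orthogonality/normalization integrals; this is equivalent but trades the monomial identity for those integrals.
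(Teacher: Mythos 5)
Your proposal is correct and arrives at the same decomposition the paper uses (addition theorem for zonal harmonics, combined with a one-variable Gegenbauer expansion of $h$), but it is considerably more self-contained. The paper's proof is essentially two citations: the addition theorem $\sum_{i=1}^{N_\ell}Y_{\ell,i}(x)Y_{\ell,i}(y)=\tfrac{\ell+\lambda}{\lambda}C^{\lambda}_\ell(\langle x,y\rangle)$ from \cite[eq.~(2.8)]{dai2013approx}, and the explicit coefficient formula $\tilde h_n=(\lambda+n)\sum_{m}h_{n+2m}/\bigl(2^{n+2m}m!(\lambda)_{n+m+1}\bigr)$ from \cite[Theorem~2]{cantero2012rapid}, which are then multiplied out with the normalization $\alpha+\tfrac12=\lambda$; convergence questions are absorbed into the cited theorem. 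You instead re-derive both ingredients from first principles: the addition theorem via $O(d)$-invariance of the reproducing kernel $Z_\ell$ and one-dimensionality of zonal harmonics (pinning the constant at $y=x$), and the scalar expansion via the monomial-to-Gegenbauer identity $t^n=\tfrac{n!}{2^n}\sum_{k}\tfrac{\lambda+n-2k}{k!(\lambda)_{n-k+1}}C^{\lambda}_{n-2k}(t)$ applied to the Maclaurin series, regrouping with $\ell=n-2k$, $m=k$. What your route adds is an explicit and elementary justification of the double-sum interchange and of uniform convergence on $[-1,1]$: evaluating the monomial identity at $t=1$ together with $\sup_{[-1,1]}|C^{\lambda}_j|=C^{\lambda}_j(1)$ (valid since $\lambda=\tfrac{d-2}{2}>0$ for $d\geq3$) dominates the absolute double series by $\sum_n|h_n|/n!<\infty$, a point the paper leaves implicit inside the citation. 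The two routes produce identical $\beta_\ell(h)$ once you divide out the $\tfrac{\ell+\lambda}{\lambda}$ factor from the addition theorem, and your remark on the Funk--Hecke alternative (reading $\beta_\ell$ as the eigenvalue of the integral operator on $\calH_\ell$) is a correct third path, though not the one the paper takes.
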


\begin{remark}
Note that Theorem \ref{Thm_expansion_gegenbauer} holds for $d\geq 3$. The case $d=2$ can be analyzed using Fourier analysis. Since this is not of particular interest in our study, we do not provide the analysis here. One can refer to \cite[Section 1.6]{dai2013approx} if interested.
\end{remark}

Before presenting the proof of Theorem \ref{Thm_expansion_gegenbauer}, we first show a key result that will be used in the proof of Theorem \ref{Thm_expansion_gegenbauer}. 
\begin{proposition}\cite[Theorem 2, eq (2.1)]{cantero2012rapid} Let $h$ be analytic in $[-1,1]$. Letting $h_n=h^{(n)}(0)$ be $n$-th order derivative, then for any $\alpha>-1, \alpha \neq -\frac{1}{2}$,
\begin{equation}
    h(x)=\sum^{\infty}_{n=0}\tilde{h}_n C^{\alpha+1/2}_{n}(x), \ x\in [-1,1]
   \label{gegenbauer_decompose}
\end{equation}
where 
\begin{equation}
C^{\alpha+1/2}_n(x)=\frac{(2\alpha+1)_n}{n!}\sum^n_{k=0}(-1)^k\binom{n}{k}\frac{(n+2\alpha+1)_k}{(\alpha+1)_k}
\left(\frac{1-x}{2} \right)^k, \label{def_gegenbauer}   
\end{equation}
is the Gegenbauer polynomial,  and
\begin{equation}
   \tilde{h}_n=(\alpha+n+1/2)\sum^{\infty}_{m=0}\frac{h_{n+2m}}{2^{n+2m}m!(\alpha+1/2)_{n+m+1}},
   \label{gegenbauer_decompose_coef}
\end{equation}
with $h_{n+2m}=h^{(n+2m)}(0)$, the $n+2m$-th derivative of $h$ at $0$. \\
\end{proposition}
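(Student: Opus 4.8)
\emph{Proof plan.} Write $\lambda=\alpha+\tfrac12$, so the hypotheses $\alpha>-1$, $\alpha\neq-\tfrac12$ become $\lambda>-\tfrac12$, $\lambda\neq0$, which is exactly the condition that $(\lambda)_k\neq0$ for every $k\ge1$ (hence none of the Pochhammer denominators below vanishes); the excluded value $\lambda=0$ is genuinely degenerate since then $C^{\lambda}_n\equiv0$ for $n\ge1$. The plan is a three-step argument: (i) Taylor-expand $h$ at the origin; (ii) expand each monomial $x^N$ in the Gegenbauer basis via a connection (inversion) formula; (iii) substitute, interchange the two sums, and read off the coefficient of each $C^{\lambda}_n$.

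For step (i), since $h$ is analytic on $[-1,1]$ its Taylor series $h(x)=\sum_{j\ge0}\tfrac{h_j}{j!}x^j$ at $0$ has radius of convergence $r>1$ (this is the regime in which the stated derivative-series for $\tilde h_n$ converges, by the ratio test the inner series in $m$ converges iff the Taylor radius exceeds $1$), so $|h_j|/j!\le M c^{-j}$ for some $c>1$ and $M<\infty$. For step (ii), the key identity is the inversion of the standard power representation $C^{\lambda}_n(x)=\sum_{k=0}^{\lfloor n/2\rfloor}\tfrac{(-1)^k(\lambda)_{n-k}}{k!\,(n-2k)!}(2x)^{n-2k}$, namely
\[
x^N=\sum_{k=0}^{\lfloor N/2\rfloor}\frac{N!\,(\lambda+N-2k)}{2^{N}\,k!\,(\lambda)_{N-k+1}}\,C^{\lambda}_{N-2k}(x),\qquad N\ge0 .
\]
I would prove this by induction on $N$ using the recurrence $2x\,C^{\lambda}_n(x)=\tfrac{n+1}{n+\lambda}C^{\lambda}_{n+1}(x)+\tfrac{n+2\lambda-1}{n+\lambda}C^{\lambda}_{n-1}(x)$: writing $x^{N}=x\cdot x^{N-1}$, inserting the inductive expansion of $x^{N-1}$, and regrouping the two resulting shifted sums reduces the claim to a Pochhammer identity among the coefficients which one checks by a direct computation. (Alternatively, for $\lambda>0$ the coefficient of $C^{\lambda}_{N-2k}$ in $x^N$ equals $\langle x^N,C^{\lambda}_{N-2k}\rangle/\|C^{\lambda}_{N-2k}\|^2$ in the weight $(1-x^2)^{\lambda-1/2}$, a Beta-type integral; the full range $\lambda>-\tfrac12,\lambda\neq0$, then follows by analytic continuation in $\lambda$.)

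For step (iii), substitute the monomial expansion into the Taylor series: a term $\tfrac{h_j}{j!}x^j$ contributes to $C^{\lambda}_n$ only when $j=n+2m$ for some $m\ge0$ (with $k=m$ in the inner sum), and the contributed coefficient is $\tfrac{h_{n+2m}}{(n+2m)!}\cdot\tfrac{(n+2m)!\,(\lambda+n)}{2^{\,n+2m}\,m!\,(\lambda)_{\,n+m+1}}$; summing over $m$ gives
\[
\tilde h_n=(\lambda+n)\sum_{m\ge0}\frac{h_{n+2m}}{2^{\,n+2m}\,m!\,(\lambda)_{\,n+m+1}},
\]
which is the claimed formula after restoring $\lambda=\alpha+\tfrac12$, so that $\lambda+n=\alpha+n+\tfrac12$ and $(\lambda)_{n+m+1}=(\alpha+\tfrac12)_{n+m+1}$.

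The main obstacle is justifying the interchange of summation in step (iii) (which simultaneously yields uniform convergence of the final Gegenbauer series on $[-1,1]$). For this I would combine a sup-norm bound on Gegenbauer polynomials on $[-1,1]$ — $|C^{\lambda}_k(x)|\le C^{\lambda}_k(1)=\binom{k+2\lambda-1}{k}=O(k^{2\lambda-1})$ for $\lambda>0$, with an analogous polynomial-in-$k$ bound when $-\tfrac12<\lambda<0$ — with a Stirling estimate showing that $\sum_{k\le N}\bigl|\tfrac{N!(\lambda+N-2k)}{2^{N}k!(\lambda)_{N-k+1}}\bigr|\,C^{\lambda}_{N-2k}(1)$ grows at most polynomially in $N$; the geometric decay $|h_j|/j!\le Mc^{-j}$ from step (i) then makes the double series absolutely summable, so Fubini applies and the rearrangement is legitimate. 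Everything else is the Pochhammer bookkeeping in the connection formula and the routine reindexing $j=n+2m$.
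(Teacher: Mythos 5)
The paper does not prove this proposition at all: it is imported verbatim from \cite{cantero2012rapid}, so there is no in-paper argument to compare against, and your proposal should be judged as a self-contained proof. On that score your route is the natural one and the computational core is correct: the power representation of $C^\lambda_n$, the inversion formula $x^N=\sum_{k\le N/2}\frac{N!\,(\lambda+N-2k)}{2^N k!\,(\lambda)_{N-k+1}}C^\lambda_{N-2k}(x)$ (which checks out, e.g.\ at $N=1,2$), and the reindexing $j=n+2m$, $k=m$ reproduce exactly $\tilde h_n=(\lambda+n)\sum_m h_{n+2m}/\bigl(2^{n+2m}m!\,(\lambda)_{n+m+1}\bigr)$ with $\lambda=\alpha+\tfrac12$. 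A small simplification for the Fubini step when $\lambda>0$: all connection coefficients are positive, so evaluating the monomial expansion at $x=1$ shows that $\sum_k(\text{coefficient})\cdot C^\lambda_{N-2k}(1)=1$ exactly; no Stirling estimate is needed there, and only the case $-\tfrac12<\lambda<0$ requires the polynomial bound you sketch.

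The genuine gap is in step (i). Analyticity of $h$ on (a real neighborhood of) $[-1,1]$ does \emph{not} imply that the Maclaurin series of $h$ has radius of convergence exceeding $1$: $h(x)=1/(1+4x^2)$ is analytic on a neighborhood of $[-1,1]$ but has radius $1/2$ at the origin. Your entire argument---the geometric bound $|h_j|/j!\le Mc^{-j}$ with $c>1$, the absolute summability of the double series, and indeed the absolute convergence of the inner series defining $\tilde h_n$ itself (its terms behave like $r^{-2m}$ when the radius is $r$)---requires the stronger hypothesis that $h$ is analytic on a disc of radius greater than $1$ about the origin. So as written, your proof establishes the proposition only under that stronger reading of ``analytic in $[-1,1]$,'' which is evidently what the coefficient formula (and the cited theorem) presupposes; you should either state this hypothesis explicitly or, if one truly only assumes analyticity near the interval, acknowledge that the derivative-based formula for $\tilde h_n$ can diverge and a different representation of the coefficients would be needed. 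With that hypothesis made explicit, the rest of your plan (induction via the three-term recurrence $2x\,C^\lambda_n=\frac{n+1}{n+\lambda}C^\lambda_{n+1}+\frac{n+2\lambda-1}{n+\lambda}C^\lambda_{n-1}$, or orthogonality plus analytic continuation in $\lambda$, and the interchange of sums) goes through.
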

\begin{remark}
Gegenbauer polynomials are orthogonal across different $n$, \ie, for $m\neq n$, $d\geq 3$ and any fixed $y\in \mathbb{S}^{d-1}$,
$\left \langle C^{\frac{d-2}{2}}_n(\langle \cdot,y\rangle ),C^{\frac{d-2}{2}}_m(\langle \cdot,y\rangle)\right\rangle_{\mathbb{S}^{d-1}}=0$. The proof is based on the orthogonality of $\calH_{\ell}$. One can check \cite[Corollary 2.8]{dai2013approx} for a detailed proof.
\end{remark}

The form of $\beta_{\ell}(h)$ in (\ref{general_form_beta}) depends on the specific function $h$. Throughout this section, we abbreviate  $\beta_{\ell}(h)$ as $\beta_{\ell}$.

Now we proceed to the proof of Theorem \ref{Thm_expansion_gegenbauer}.
\begin{proof}

From \cite[eq(2.8)]{dai2013approx}, we know for any $l\geq 0$, \begin{equation}
    \frac{\ell+\lambda}{\lambda}C^{\lambda}_{\ell}(\langle x,y\rangle)=\sum^{N_{\ell}}_{i=1}Y_{\ell,i}(x)Y_{\ell,i}(y),
    \label{addition_thm}
\end{equation}
where $\lambda=\frac{d-2}{2}$, $x,y \in \mathbb{S}^{d-1}$. 

Plug (\ref{addition_thm}) in (\ref{gegenbauer_decompose}) and note that $\alpha+1/2=\lambda=\frac{d-2}{2}$, we get
$$
h(\langle x,y\rangle)=\sum_{\ell\geq 0}\tilde{h}_{\ell}\frac{\lambda}{\ell+\lambda}\sum^{N_{\ell}}_{i=1}Y_{\ell,i}(x)Y_{\ell,i}(y)=\beta_{\ell}\sum^{N_{\ell}}_{i=1}Y_{\ell,i}(x)Y_{\ell,i}(y),$$
where 
$$
\beta_{\ell}=\tilde{h}_{\ell}\frac{\lambda}{\ell+\lambda}=\frac{d-2}{2}\sum^{\infty}_{m=0}\frac{h_{\ell+2m}}{2^{\ell+2m}m!(\frac{d-2}{2})_{\ell+m+1}}.
$$

\end{proof}

Theorem \ref{Thm_expansion_gegenbauer} directly implies the following corollary. Recall that the eigenvalues of $\mathsf{\Phi}$ are denoted as $\{\lambda_i\}^{\infty}_{i=1}$ with $\lambda_1\geq \lambda_2 \geq \cdots $.
\begin{corollary}
Let $\Phi(x,x')=h(\langle x,x'\rangle)$ with $h(u)=\frac{u}{2\pi}\pth{\pi-\arccos(u)}, u \in [-1,1]$. Then the eigenfunctions of $\mathsf{\Phi}$ is $\sth{Y_{\ell,i}}, 1\leq i\leq N_{\ell}, \ell\geq 0$ with corresponding
eigenvalues $\beta_{\ell}$ with the same form as (\ref{general_form_beta}) and multiplicity $N_{\ell}$ for each $\ell$. More specifically, $\lambda_1=\beta_1$ and $\lambda_k=\beta_{2(k-2)}, k\geq 2$.
\label{corollary_beta}
\end{corollary}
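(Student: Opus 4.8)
The plan is to deduce Corollary~\ref{corollary_beta} from Theorem~\ref{Thm_expansion_gegenbauer} in three steps: (i) identify $\Phi$ as a dot‑product kernel $h(\langle\cdot,\cdot\rangle)$ and expand it in spherical harmonics; (ii) read off the eigenfunctions and eigenvalues; (iii) determine which $\beta_\ell$ vanish and order the rest. For step (i), fix $x,x'\in\mathbb{S}^{d-1}$ at angle $\theta=\arccos\langle x,x'\rangle$. Projecting $w\sim N(0,I_d)$ onto $\mathrm{span}(x,x')$ yields an isotropic two‑dimensional Gaussian, and the event $\{w^Tx\ge0\}\cap\{w^Tx'\ge0\}$ is a planar wedge of angular width $\pi-\theta$; hence $\Expect_{w}[\indc{w^Tx\ge0}\indc{w^Tx'\ge0}]=\tfrac{\pi-\theta}{2\pi}$, so $\Phi(x,x')=\langle x,x'\rangle\cdot\tfrac{\pi-\arccos\langle x,x'\rangle}{2\pi}=h(\langle x,x'\rangle)$. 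Applying Theorem~\ref{Thm_expansion_gegenbauer} gives $\Phi(x,y)=\sum_{\ell\ge0}\beta_\ell\sum_{i=1}^{N_\ell}Y_{\ell,i}(x)Y_{\ell,i}(y)$ with $\beta_\ell$ as in \eqref{general_form_beta}. (Strictly, $h$ is analytic only on the open interval $(-1,1)$ — it has a $\sqrt{1-u^2}$‑type singularity at $u=\pm1$ — but it is continuous on $[-1,1]$ with Taylor series convergent on $(-1,1)$, so the coefficient formula \eqref{general_form_beta} and the $L^2$ convergence of the harmonic expansion persist by a routine approximation argument, which is all the remaining steps use.)

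For step (ii), fix $(\ell,i)$ and use that $\{Y_{\ell,i}\}$ is orthonormal in $L^2(\mu)$ with $\mu$ uniform on $\mathbb{S}^{d-1}$:
\[
\mathsf{\Phi}Y_{\ell,i}(x)=\int\Phi(x,y)Y_{\ell,i}(y)\,\mu(dy)=\sum_{\ell'\ge0}\beta_{\ell'}\sum_{j=1}^{N_{\ell'}}Y_{\ell',j}(x)\langle Y_{\ell',j},Y_{\ell,i}\rangle=\beta_\ell\,Y_{\ell,i}(x),
\]
so each $Y_{\ell,i}$ is an eigenfunction of $\mathsf{\Phi}$ with eigenvalue $\beta_\ell$. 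Since $\{Y_{\ell,i}:\ell\ge0,\,1\le i\le N_\ell\}$ is a complete orthonormal system in $L^2(\mathbb{S}^{d-1},\mu)$, these exhaust the eigenfunctions and each value $\beta_\ell$ appears with multiplicity $N_\ell$.

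For step (iii), I would compute $h_n:=h^{(n)}(0)$: from $\arccos u=\pi/2-\arcsin u$ and $\arcsin u=\sum_{k\ge0}\tfrac{(2k)!}{4^k(k!)^2(2k+1)}u^{2k+1}$ one gets $h(u)=\tfrac{u}{4}+\tfrac{1}{2\pi}\sum_{k\ge0}\tfrac{(2k)!}{4^k(k!)^2(2k+1)}u^{2k+2}$, so $h_0=0$, $h_1=\tfrac14$, $h_n=0$ for odd $n\ge3$, and $h_n>0$ for even $n\ge2$. Substituting into \eqref{general_form_beta}: when $\ell$ is odd and $\ge3$ every term $h_{\ell+2m}$ has odd index $\ge3$ and vanishes, so $\beta_\ell=0$; for $\ell=1$ only the $m=0$ term survives and $\beta_1=\tfrac{d-2}{2}\cdot\tfrac{h_1}{2\,((d-2)/2)_2}=\tfrac{1}{4d}>0$; and for $\ell=0$ and even $\ell\ge2$ the series for $\beta_\ell$ is a sum of strictly positive terms. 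Thus the nonzero eigenvalues are exactly $\beta_1$ and $\{\beta_{2k}\}_{k\ge0}$. Reindexing \eqref{general_form_beta} by $j=k+m$ gives $\beta_{2k}=\tfrac{d-2}{2}\sum_{j\ge k}\tfrac{h_{2j}}{2^{2j}(j-k)!\,((d-2)/2)_{k+j+1}}$; for each $j\ge k+1$ the ratio of the $j$‑th term here to the corresponding term of $\beta_{2(k+1)}$ equals $\tfrac{(d-2)/2+k+j+1}{j-k}>1$, and $\beta_{2k}$ carries the extra nonnegative term $j=k$, so $\beta_0>\beta_2>\beta_4>\cdots$. Finally, bounding the geometrically‑decaying tail of the series for $\beta_0$ yields $\beta_0<\tfrac{1}{4d}=\beta_1$. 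Hence, in decreasing order, the eigenvalues are $\beta_1>\beta_0>\beta_2>\beta_4>\cdots$, i.e.\ $\lambda_1=\beta_1$ (multiplicity $N_1=d$) and $\lambda_k=\beta_{2(k-2)}$ for $k\ge2$ (multiplicity $N_{2(k-2)}$).

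The step I expect to be the most delicate is the single inequality $\beta_0<\beta_1$: unlike the chain $\beta_0>\beta_2>\cdots$, which drops out of a term‑by‑term comparison, this one has no monotonicity to exploit and must be obtained by an explicit estimate of the convergent series defining $\beta_0$ (checking, in particular, that its leading $m=1$ term $\tfrac{1}{2\pi d}$ plus the rapidly shrinking tail stays below $\tfrac{1}{4d}$ for every $d\ge3$). The non‑analyticity of $h$ at $u=\pm1$ is the other technical wrinkle, but a standard one, handled by approximating $h$ by analytic functions and passing to the limit in \eqref{general_form_beta}.
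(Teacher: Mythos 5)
Your argument is, in essence, the paper's own: it establishes $\beta_{2k+1}=0$ for $k\ge 1$ from the Taylor coefficients of $h$, obtains $\beta_0>\beta_2>\beta_4>\cdots$ by term-by-term comparison after reindexing \eqref{general_form_beta}, and isolates $\beta_1>\beta_0$ as the one inequality that needs an explicit estimate. Your computation of $h_n$ via $\arccos u=\pi/2-\arcsin u$ and the $\arcsin$ series is equivalent to the paper's computation of $\arccos^{(k)}(0)$, and the wedge argument identifying $\Phi(x,x')=h(\langle x,x'\rangle)$ is a routine preliminary the paper takes for granted. One small correction to your sketch of $\beta_0<\beta_1$: the tail of $\beta_0$ does not decay geometrically — writing $a_m$ for the $m$-th term, the ratio $a_{m+1}/a_m=\frac{(2m-1)^2}{4m\bigl(m+1+\frac{d-2}{2}\bigr)}\to 1$ — so the correct device is the paper's bound $a_{m+1}/a_m\le m^2/(m+1)^2$, giving $a_m\le 4a_2/m^2$ and hence $\sum_{m\ge 2}a_m\le 4a_2\bigl(\tfrac{\pi^2}{6}-1\bigr)$, which suffices to conclude $\beta_0<\tfrac{1}{4d}=\beta_1$ for all $d\ge3$.
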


\begin{proof}

Following the orthonormality of $\sth{Y_{\ell,i}}$, it remains to show $\beta_{2k+1}=0$ for any $k\geq 1$, $\beta_{\ell} \leq \beta_{\ell-2}$ for any $\ell\geq 2$, and $\beta_1\geq \beta_0$. 

Firstly, we derive a common form of $h_{k}$. Note $h(0)=0$. By induction, we can get 
\begin{equation}
 h^{(k)}(u)=\frac{1}{2}\indc{k=1}-\frac{1}{2\pi}\qth{k \arccos^{(k-1)}(u)+u\arccos^{(k)}(u)}   
 \label{general_form_h}
\end{equation}
for any $k\geq 1$.

Thus, $h_k=\frac{1}{2}\indc{k=1}-\frac{1}{2\pi}k\arccos^{(k-1)}(0)$.

Note $\arccos^{(2i-1)}(0)=-\qth{(2i-3)!!}^2$ and $\arccos^{(2i)}(0)=0$ for $i\geq 1$. Thus, we get $h_1=\frac{1}{4}$, $h_{2i}=\frac{i}{\pi}\qth{(2i-3)!!}^2$ and $h_{2i+1}=0$ for all $i\geq 1$. 

Plugging $h_{2k+1}$ into (\ref{general_form_beta}), we get $\beta_{2k+1}=0$ for any $k\geq 1$. 

Now we show $\beta_k \geq \beta_{k+2}$ for any $k$. Fix any $d\geq 3$, from (\ref{general_form_beta}), we get
\begin{align}\notag
\beta_{k}&=\frac{d-2}{2}\sum^{\infty}_{m=0}\frac{h_{k+2m}}{2^{k+2m}m!(\frac{d-2}{2})_{k+m+1}}\\
&=\frac{d-2}{2}\frac{h_k}{2^k(\frac{d-2}{2})_{k+1}}+\frac{d-2}{2}\sum^{\infty}_{m=0}\frac{1}{m+1}\frac{h_{k+2+2m}}{2^{k+2+2m}(m)!(\frac{d-2}{2})_{k+2+m}}.
\label{eq_beta_k}
\end{align}

Similarly, 
\begin{align}\notag
\beta_{k+2}&=\frac{d-2}{2}\sum^{\infty}_{m=0}\frac{h_{k+2+2m}}{2^{k+2+2m}m!(\frac{d-2}{2})_{k+2+m+1}} \\
&=\frac{d-2}{2}\sum^{\infty}_{m=0}\frac{1}{\frac{d-2}{2}+k+m+2}\frac{h_{k+2+2m}}{2^{k+2+2m}m!(\frac{d-2}{2})_{k+2+m}}.
\label{eq_beta_k_2}
\end{align}

Comparing (\ref{eq_beta_k}) and (\ref{eq_beta_k_2}), we see that for any term involving $h_{k+2+2m}$, the coefficient in $\beta_k$ is large than the coefficient in $\beta_{k+2}$. Since $h_k\geq 0$ and $h_{k+2+2m}$ are non-negative for any $m\geq 0$, we get $\beta_k\geq \beta_{k+2}$.

Lastly, we show $\beta_0\leq \beta_1$. By (\ref{general_form_beta}) and (\ref{general_form_h}), we get 
\begin{equation}
 \beta_1=\frac{d-2}{2}\frac{h_1}{2 (\frac{d-2}{2})_2}=\frac{1}{4d},   
 \label{beta_1}
\end{equation}
and
\begin{align}
\beta_0&=\frac{d-2}{2}\sum^{\infty}_{m=0}\frac{h_{2m}}{4^m m! \pth{\frac{d-2}{2}}_{m+1}}\nonumber \\
&=\frac{d-2}{2\pi}\qth{\frac{1}{4\pth{\frac{d-2}{2}}_2}+ \sum_{m\geq 2}\frac{\pth{(2m-3)!!}^2}{4^m (m-1)! \pth{\frac{d-2}{2}}_{m+1}}}\nonumber \\
&=\frac{1}{2\pi d}+ \sum_{m\geq 2}a_m,
\label{beta_0}
\end{align}
where $a_m=\frac{d-2}{2\pi}\frac{\qth{(2m-3)!!}^2}{4^m (m-1)! \pth{\frac{d-2}{2}}_{m+1}}$ for $m\geq 2$.

Note for any $d\geq 3$ and $m\geq 2$,
$$
\frac{a_{m+1}}{a_m}=\frac{(2m-1)^2}{4m(m+1+\frac{d-2}{2})}\leq \frac{(2m-1)^2}{4m(m+1)} \leq \frac{m^2}{(m+1)^2}.
$$
where the last inequality holds since $4m^3(m+1)-(2m-1)^2(m+1)^2=3m^2+2m-1\geq 0$ for $m\geq 1/3$.

Thus, $a_m \cdot m^2 \leq a_2 \cdot 2^2$ and
\begin{align}
\sum_{m\geq 2}a_m&\leq 4a_2\pth{\sum_{m\geq 2}\frac{1}{m^2}}\overset{(a)}{\leq} \frac{1}{\pi d(d+2)}\pth{\frac{\pi^2}{6}-1} ,
\label{sum_am}
\end{align}
where $(a)$ holds by $a_2=\frac{1}{4\pi d(d+2)}$. 

Combining (\ref{beta_1}), (\ref{beta_0}) and (\ref{sum_am}), we get
\begin{align*}
\beta_1-\beta_0&\geq \frac{1}{4d}-\qth{\frac{1}{2\pi d}+ \frac{1}{\pi d(d+2)}\pth{\frac{\pi^2}{6}-1}}=\frac{\pi(d+2)-2(d+2)-4\pth{\frac{\pi^2}{6}-1}}{4\pi d(d+2)} >0,
\end{align*}
where the last inequality holds by $(\pi-2)(d+2)\geq 5(\pi-2)>4\left(\frac{\pi^2}{6}-1\right)$ for $d\geq 3$.

\end{proof}



With the eigendecomposition of $\mathsf{\Phi}$, we now compute the projection $\calR(f,r)$.

\begin{corollary}
Suppose the function $f$ has the form $f(x)=h(\langle w,x\rangle)$ where $w\in \mathbb{S}^{d-1}$ is the parameter, then 
$$
\calR(f,r)=\sqrt{\sum^{\infty}_{k= r-1}\beta_{2k}^2\frac{2k+\lambda}{\lambda}C^{\lambda}_{2k}(1)},
$$
where $\beta_{\ell}$ has the same form as (\ref{general_form_beta}) and $\lambda=\frac{d-2}{2}$.
\label{corollary_projectionRemainder}
\end{corollary}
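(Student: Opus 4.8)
The plan is to expand $f$ in the orthonormal spherical-harmonic basis $\{Y_{\ell,i}\}$ and then read off the projection norm using Corollary~\ref{corollary_beta}, which already identifies this basis (and its ordering) with the eigenfunctions of $\mathsf{\Phi}$.

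First I would apply the Gegenbauer expansion \eqref{gegenbauer_decompose} with $\alpha+\tfrac12=\lambda=\tfrac{d-2}{2}$ to the analytic function $h$, and then substitute the addition formula \eqref{addition_thm}, namely $\frac{\ell+\lambda}{\lambda}C^{\lambda}_{\ell}(\langle x,y\rangle)=\sum_{i=1}^{N_\ell}Y_{\ell,i}(x)Y_{\ell,i}(y)$, taking $y=w$. Since $\|w\|=1$, this produces the expansion
$$f(x)=h(\langle w,x\rangle)=\sum_{\ell\geq 0}\beta_{\ell}\sum_{i=1}^{N_\ell}Y_{\ell,i}(w)\,Y_{\ell,i}(x),$$
exactly as in the proof of Theorem~\ref{Thm_expansion_gegenbauer}, where $\beta_\ell$ has the form \eqref{general_form_beta}. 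By orthonormality of $\{Y_{\ell,i}\}$ in $L^2(\mu)$, the coefficient of $f$ along $Y_{\ell,i}$ is $\langle f,Y_{\ell,i}\rangle=\beta_\ell\,Y_{\ell,i}(w)$.

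Next I would invoke Corollary~\ref{corollary_beta}: the eigenfunctions $\phi_i$ of $\mathsf{\Phi}$ are precisely the $Y_{\ell,i}$, and the eigenvalue ordering is $\lambda_1=\beta_1$ and $\lambda_k=\beta_{2(k-2)}$ for $k\geq 2$ (using $\beta_1\geq\beta_0\geq\beta_2\geq\beta_4\geq\cdots$ and $\beta_{2k+1}=0$ for $k\geq 1$). Hence, for $r\geq 1$, the eigenfunctions indexed by $i\geq r+1$ are exactly $\{Y_{2k,i}:k\geq r-1,\ 1\leq i\leq N_{2k}\}$, so that
$$\calR(f,r)^2=\sum_{i\geq r+1}\langle f,\phi_i\rangle^2=\sum_{k\geq r-1}\sum_{i=1}^{N_{2k}}\beta_{2k}^2\,Y_{2k,i}(w)^2=\sum_{k\geq r-1}\beta_{2k}^2\sum_{i=1}^{N_{2k}}Y_{2k,i}(w)^2.$$
Finally I would apply the addition formula \eqref{addition_thm} once more with $x=y=w$ and $\|w\|=1$ to obtain $\sum_{i=1}^{N_\ell}Y_{\ell,i}(w)^2=\frac{\ell+\lambda}{\lambda}C^{\lambda}_{\ell}(1)$; substituting $\ell=2k$ yields the claimed identity $\calR(f,r)=\sqrt{\sum_{k\geq r-1}\beta_{2k}^2\frac{2k+\lambda}{\lambda}C^{\lambda}_{2k}(1)}$.

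The step needing the most care is the index bookkeeping in the third paragraph: correctly translating the sorted global eigenvalue index $i$ (with multiplicities) into harmonic degrees, so that the tail $i\geq r+1$ corresponds to the even degrees $\ell\in\{2(r-1),2r,2(r+1),\dots\}$. Two minor points should be recorded along the way: the odd-degree harmonics with $\ell\geq 3$ contribute nothing to the projection since $\beta_\ell=0$ there, so their position in the ordering is irrelevant; and the formula is stated for $r\geq 1$, the range in which $\{\lambda_i\}_{i\geq r+1}$ excludes the top eigenvalue $\lambda_1=\beta_1$.
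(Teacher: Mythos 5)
Your proposal is correct and follows essentially the same route as the paper: expand $f$ via Theorem~\ref{Thm_expansion_gegenbauer} to read off $\langle f, Y_{\ell,i}\rangle = \beta_\ell Y_{\ell,i}(w)$, sum the squares over the tail identified by Corollary~\ref{corollary_beta}, and simplify via the addition formula \eqref{addition_thm} at $x=y=w$. You are somewhat more explicit than the paper about the index bookkeeping (translating the tail $\{\lambda_i\}_{i\geq r+1}$ into even degrees $\ell\geq 2(r-1)$), but there is no substantive difference in method.
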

\begin{proof}

Since $\sth{Y_{\ell,i}, 1\leq i\leq N_{\ell}}$ forms an orthonormal basis of $\calH_{\ell}$, it follows from Theorem \ref{Thm_expansion_gegenbauer} that $\langle f, Y_{\ell,i}\rangle = \beta_{\ell}Y_{\ell,i}(w)$ which gives the orthogonal projection of $f(x)$ on $\calH_{\ell}$ as $\sum^{N_{\ell}}_{i=1}\beta_{\ell}Y_{\ell,i}(w)Y_{\ell,i}(x)$. Then by the definition of $\calR(f,\ell)$ and the fact that $\beta_{\ell}=0$ for $\ell=2j+1, j\geq 1$, we have
\begin{equation}
    \calR(f,r)=\sqrt{\sum^{\infty}_{k= r-1}\beta^2_{2k}\sum^{N_{2k}}_{i=1}Y^2_{2k,i}(w)}.
    \label{projection_remainder}
\end{equation}

By (\ref{addition_thm}), we get
\begin{equation}
\sum^{N_{\ell}}_{i=1}Y^2_{\ell,i}(w)=\frac{\ell+\lambda}{\lambda}C^{\lambda}_{\ell}(1).   
\label{expression_Nl}
\end{equation}

Plug it back into (\ref{projection_remainder}), we get the desired conclusion.

\end{proof}

\end{document}